\documentclass[11pt]{article}

\usepackage[margin=1in]{geometry}
\usepackage[T1]{fontenc}
\usepackage[utf8]{inputenc}
\usepackage{lmodern}
\usepackage{microtype}
\usepackage{amsmath,amssymb,amsthm,amsfonts,mathtools}
\usepackage{bm}
\usepackage{enumitem}
\usepackage{booktabs}
\usepackage{hyperref}
\usepackage{setspace}
\usepackage{url}
\usepackage{algorithm}
\usepackage{algpseudocode}
\usepackage{longtable}

\hypersetup{
    colorlinks=true,
    linkcolor=black,
    citecolor=black,
    urlcolor=blue
}

\newtheorem{theorem}{Theorem}
\newtheorem{proposition}[theorem]{Proposition}
\newtheorem{corollary}[theorem]{Corollary}

\theoremstyle{definition}
\newtheorem{definition}[theorem]{Definition}
\newtheorem{assumption}[theorem]{Assumption}
\newtheorem{example}[theorem]{Example}

\theoremstyle{remark}
\newtheorem{remark}[theorem]{Remark}

\newcommand{\E}{\mathbb{E}}
\newcommand{\Prob}{\mathbb{P}}
\newcommand{\Var}{\mathrm{Var}}
\newcommand{\Cov}{\mathrm{Cov}}
\newcommand{\R}{\mathbb{R}}
\newcommand{\1}{\mathbf{1}}
\newcommand{\cX}{\mathcal{X}}
\newcommand{\cI}{\mathcal{I}}
\newcommand{\cG}{\mathcal{G}}
\newcommand{\cN}{\mathcal{N}}
\newcommand{\op}{o_p}
\newcommand{\Op}{O_p}

\title{\textbf{Causal Label Recovery in Payment Networks}\\
\vspace{0.25em}
\large A Sequential Triply Robust Estimator Under
Authorization, Reporting, Delay, and Label Corruption
}

\author{
Gaurav Dhama\\
\small Mastercard\\
\small \texttt{gaurav.dhama@mastercard.com}
}

\date{\today}

\begin{document}
\maketitle

\begin{abstract}
In card payment networks, the true fraud status of a transaction
is hidden behind a sequential observation pipeline and a noisy
labeling process. A transaction must first be authorized, then
reported as fraud by an issuer or downstream reporting process,
and finally mature before the label is available for model
training. Even when a label is observed, it may be corrupted:
legitimate transactions mislabeled as fraud, or genuine fraud
mislabeled as legitimate. Standard supervised fraud models
trained on observed chargebacks therefore learn from a biased,
censored, delayed, and noisy subset of the true fraud process.

A companion paper established that these four structural
impairments---authorization censorship, issuer reporting
censorship, delay, and label corruption---impose a multiplicative
information-theoretic lower bound on fraud detection. This paper
provides the constructive counterpart: a causal label-recovery
estimator for the full impairment structure.

We formalize fraud label recovery as a sequential missing-data
problem with three selection gates---authorization, reporting,
and delay maturity---composed with a label-corruption channel.
We construct a Sequential Triply Robust Estimator (STR) that
composes three stage-wise augmented inverse-propensity
corrections and a noise-correction layer. The general form
involves stage-specific propensity functions and nested outcome
regressions. Under exact sequential ignorability with a common
conditioning set, the nested regressions collapse and the
estimator reduces to a residual-weighted form: an outcome
prediction plus the observed residual weighted by the total
inverse observation propensity. When post-authorization signals
enrich the conditioning set at later stages, or when ignorability
holds only approximately, the general sequential form provides
additional robustness that the collapsed form does not.

We prove that the estimator is consistent whenever each selection
stage is corrected either by a correctly specified propensity or
by a correctly specified downstream regression. We derive the
semiparametric efficiency bound for the three-stage model with
corruption, show that the STR achieves it, and provide a
finite-sample Bernstein concentration inequality. The efficiency
bound now contains all four impairment factors from the companion
paper's lower bound, establishing a complete correspondence
between the information-theoretic floor and the achievable
estimation quality. We prove that the na\"ive observed-label
estimator has a structural selection bias that does not vanish
with sample size, and that the STR dominates it in mean squared
error for all sufficiently large samples.

We show that delay is conditional in payment networks---varying
systematically by issuer, transaction type, and geography---and
that this heterogeneity inflates the efficiency bound via a
Jensen-type penalty, consistent with the selective-maturity
result in the companion paper. For heterogeneous issuer networks,
we introduce Empirical Bayes shrinkage as a regularization layer
for issuer-specific propensities, including delay propensities.
Shrinkage is not an alternative to triple robustness; it
stabilizes the nuisance functions entering the triple robust
correction. The resulting architecture is an offline
label-reconstruction engine that produces corrected pseudo-labels
for downstream fraud models while preserving low-latency online
scoring.
\end{abstract}

\section{Introduction}
\label{sec:intro}

\subsection{From detection limits to label recovery}

A companion paper \cite{dhama2026limits} established that card
payment fraud detection is fundamentally constrained by the
information environment in which fraud labels are produced. In
that work, four structural impairments---authorization
censorship, issuer reporting censorship, label corruption, and
delay---enter a minimax regret lower bound multiplicatively:
\begin{equation}
\label{eq:paper1bound}
\mathcal{R}_T
\;\ge\;
c\,\sqrt{
\frac{(KT+D)\log N}
{(1-\bar\gamma)(1-\bar\delta)
(1-\varepsilon_{10}-\varepsilon_{01})^2}
}.
\end{equation}
The result implies that model architecture alone cannot eliminate
the detection floor if the label-generation process destroys
information before the model sees the data.

A key extension in \cite{dhama2026limits} showed that delay is
not uniform: the conditional maturity function
$m(x,i,\Delta)=\Prob(\tau\le\Delta\mid X=x,I=i)$ varies
systematically across transaction types and issuers, and that
this heterogeneity worsens the lower bound via a Jensen-type
penalty. An adversary can exploit this structure by concentrating
attacks where labels arrive slowest.

This paper addresses the constructive side of both results. If
the information environment imposes a floor, can a network
recover better labels from the impaired feedback channel before
training a model? We answer this question by treating
fraud-label recovery as a sequential causal missing-data problem
that explicitly models all four impairments: three selection
stages plus a corruption channel.

\subsection{The observation pipeline}

In a card payment network, a fraud label is not observed
immediately after a transaction. A transaction must pass through
three selection gates and survive a noisy labeling process before
a usable label reaches the training pipeline.

\begin{enumerate}[label=(\roman*)]
\item \textbf{Authorization gate.} If a transaction is declined,
its counterfactual fraud status is permanently unobserved.
The learner never discovers whether a declined transaction
would have been fraudulent.

\item \textbf{Reporting gate.} If a transaction is approved,
fraud may still not be formally reported by the issuer.
Some fraud is absorbed without generating a chargeback or
fraud flag, because the amount is below a reporting
threshold, because the issuer lacks operational capacity,
or because the cardholder never notices.

\item \textbf{Delay maturity gate.} Even if fraud is eventually
reported, the label may not have arrived by the time the
model is trained. Label delay varies by issuer, merchant
category, geography, and fraud type.

\item \textbf{Label corruption.} Even when a label is observed,
it may be wrong. Legitimate transactions are mislabeled as
fraud (e.g., a cardholder forgets a purchase and files a
dispute). Genuine fraud is mislabeled as legitimate (e.g.,
fraud is coded as a generic chargeback rather than as a
fraud flag).
\end{enumerate}

\begin{example}[What happens to 10{,}000 true fraud cases]
\label{ex:pipeline}
Consider a network that processes 1 million transactions per
month, of which 10{,}000 are truly fraudulent (a 1\% fraud rate).
Here is what happens to those 10{,}000 cases:
\begin{itemize}
\item 4{,}000 are declined at authorization---the fraud model
caught them and blocked them. Their labels are permanently
lost.
\item Of the 6{,}000 approved, 2{,}000 are never reported.
The amounts are small, the cardholders don't notice, or
the issuers absorb the loss without filing.
\item Of the 4{,}000 reported, 1{,}500 have not matured by
training time. Cross-border disputes, slow issuer
processing, long representment chains.
\item Of the 2{,}500 that arrive, roughly 200 carry corrupted
labels: friendly fraud coded as third-party, genuine fraud
resolved as merchant error, or miscoded chargebacks.
\end{itemize}
A na\"ive model trains on ${\sim}2{,}300$ correctly labeled
fraud cases out of 10{,}000 true cases. It sees a fraud rate of
${\sim}0.24\%$ when the true rate is $1\%$---an underestimate by
a factor of ${\sim}4\times$. Worse, the cases it sees are not
representative: they overrepresent easy, well-reported,
fast-maturing fraud from diligent issuers.
\end{example}

The observed fraud label is therefore not a random sample of all
latent fraud events. It is the output of a sequential selection
process followed by a noisy labeling step. A na\"ively trained
supervised model does not learn the true fraud process. It learns
the fraud process filtered through authorization policy, issuer
reporting behavior, label maturity, and label corruption.

\subsection{Main idea}

The core idea of this paper is to estimate what the fraud label
would have been in the absence of these distortion mechanisms. We
define stage-specific propensities for authorization, reporting,
and delay maturity, plus class-conditional corruption rates. We
then construct a sequential augmented inverse-propensity estimator
that corrects the observed residual at each selection stage and
debiases the observed label through a noise-correction layer.

The estimator has two important properties. First, it is
\emph{structurally correct}: it matches the actual observation
pipeline through which fraud labels reach the training data.
Second, it is \emph{sequentially robust}: each stage can be
corrected either by a correctly specified propensity model or by
a correctly specified downstream outcome regression.

\subsection{Issuer heterogeneity}

A payment network contains many issuers with very different
transaction volumes, reporting behavior, and data quality. Direct
issuer-level estimation of the nuisance functions can be unstable
for low-volume issuers. If an estimated reporting or maturity
propensity is close to zero, inverse-propensity weights explode.

To address this, we introduce Empirical Bayes shrinkage for
issuer-specific nuisance functions---including delay
propensities, where the conditional heterogeneity identified in
\cite{dhama2026limits} creates particular instability. Shrinkage
is not the source of causal identification. It is a
finite-sample stabilization layer that makes the sequential
estimator practical in heterogeneous issuer networks.

\subsection{Contributions}

The paper makes eight contributions.

\paragraph{1. Sequential observation model.}
We formalize payment-network label recovery as a three-stage
monotone missing-data problem with a composed corruption channel
(Section~\ref{sec:model}).

\paragraph{2. Conditional delay structure.}
We explicitly model delay as conditional on transaction features,
issuer identity, and observation window, and show how the
selective-maturity penalty from \cite{dhama2026limits} enters the
estimator's efficiency bound (Section~\ref{sec:conditional_delay}).

\paragraph{3. Sequential triply robust estimator.}
We construct an efficient-influence-function estimator that
composes three stage-wise augmented corrections
(Section~\ref{sec:estimator}).

\paragraph{4. Label corruption correction.}
We compose a noise-correction layer with the sequential selection
correction, addressing the fourth impairment from the companion
paper (Section~\ref{sec:corruption}).

\paragraph{5. Formal comparison with na\"ive estimation.}
We prove that the na\"ive observed-label estimator has structural
selection bias and that the STR dominates it in MSE
(Section~\ref{sec:naive}).

\paragraph{6. Robustness, efficiency, and concentration.}
We prove sequential triple robustness, derive the efficiency
bound (which now contains all four Paper~1 factors), and provide
finite-sample guarantees with variance estimation and confidence
intervals (Sections~\ref{sec:theory}--\ref{sec:efficiency}).

\paragraph{7. Shrinkage stabilization.}
We introduce Empirical Bayes shrinkage for issuer-indexed
propensities, including delay propensities
(Section~\ref{sec:shrinkage}).

\paragraph{8. Operational architecture.}
We describe a complete offline-online pipeline with cross-fitting
protocol, nuisance estimation guidance, diagnostics, and an
algorithm summary (Section~\ref{sec:pseudolabels}).

\subsection{Roadmap}

Section~\ref{sec:related}: related work.
Section~\ref{sec:model}: problem formulation.
Section~\ref{sec:estimator}: estimator construction.
Section~\ref{sec:corruption}: corruption correction.
Section~\ref{sec:naive}: comparison with na\"ive estimation.
Section~\ref{sec:theory}: robustness theory.
Section~\ref{sec:efficiency}: efficiency and concentration.
Section~\ref{sec:shrinkage}: shrinkage stabilization.
Section~\ref{sec:pseudolabels}: pseudo-labels and operations.
Section~\ref{sec:discussion}: discussion.
Section~\ref{sec:conclusion}: conclusion.

\section{Related Work}
\label{sec:related}

\paragraph{Information limits in payment fraud.}
The companion paper \cite{dhama2026limits} formalized fraud
detection in payment networks as a sequential decision problem
under delayed, censored, corrupted, and counterfactually missing
feedback. That paper proved a minimax regret lower bound governed
by the product of the four impairment rates, and showed that
conditional delay heterogeneity across issuers and transaction
types worsens the bound via a selective-maturity penalty. The
present paper is the constructive complement: rather than proving
a lower bound, it constructs an estimator for the latent labels
behind that bound.

\paragraph{Recent work on selective labels and delayed feedback.}
Recent work has begun to address individual components of
the label impairment problem.  Chang and Wiens~\cite{chang2024dcem}
propose an EM-based approach (DCEM) for a single selection
gate in healthcare settings, without efficiency guarantees.
Csaba et al.~\cite{csaba2024label} empirically demonstrate
that label delay degrades continual learning, but do not
derive optimal training schedules.  In the semiparametric
literature, Guo et al.~\cite{guo2023mnar} establish
identification for general MNAR mechanisms, and Malinsky
et al.~\cite{malinsky2022selfcensoring} derive efficiency
bounds for nonmonotone MNAR under a single correction.
Ha et al.~\cite{ha2024mnar} prove that unbiasedness under
MNAR inevitably leads to unbounded variance when propensity
scores approach zero.

The present paper differs from all of these in scope and
depth.  We model three sequential censorship gates plus
label corruption---a composite observation pipeline specific
to payment networks that no prior work has formalized.  We
derive the semiparametric efficiency bound for this composite
pipeline, prove that the STR achieves it, establish
sequential triple robustness (consistency under
misspecification at any individual gate), provide empirical
Bayes shrinkage to resolve the bias-variance impossibility
identified by Ha et al., and derive the optimal training
delay as a closed-form function of network parameters.
To our knowledge, this is the first work to provide a
complete semiparametric treatment---identification,
estimation, efficiency, robustness, regularization, and
operational deployment---for multi-stage censored feedback
in any application domain.

\paragraph{Missing data and selection bias.}
The classical selection-bias problem was formalized by Heckman
\cite{heckman1979}, who proposed a two-stage correction for
non-random sample selection. Rubin \cite{rubin1976} introduced
the formal taxonomy of missing-data mechanisms, distinguishing
missing completely at random, missing at random (MAR), and
missing not at random (MNAR). Little and Rubin
\cite{littlerubin2002} provide a comprehensive treatment. Our
three-stage model is a structured MNAR problem: missingness
depends on fraud-relevant variables through the authorization,
reporting, and delay mechanisms.

\paragraph{Inverse-propensity weighting.}
The Horvitz--Thompson estimator \cite{horvitzthompson1952}
corrects for unequal selection probabilities by weighting
observations by the inverse of their inclusion probability.
Rosenbaum and Rubin \cite{rosenbaumrubin1983} adapted this idea
to observational causal inference via the propensity score. Our
stage-specific propensities $e_0$, $r_0$, $p_0$ are the
payment-network analogues of the propensity score, applied
sequentially at each selection gate.

\paragraph{Doubly robust and semiparametric estimation.}
Augmented inverse-propensity weighting (AIPW) and doubly robust
estimation were developed by Robins, Rotnitzky, and Zhao
\cite{robins1994} and Bang and Robins \cite{bangrobins2005}. The
key insight is that combining an outcome model with a propensity
model yields an estimator that is consistent if either model is
correct. Tsiatis \cite{tsiatis2006} provides the semiparametric
efficiency theory underlying these estimators. Double/debiased
machine learning \cite{chernozhukov2018} extends AIPW to
flexible nuisance estimation with cross-fitting. This paper
adapts the same influence-function logic to three sequential
stages.

\paragraph{Sequential missing data.}
Monotone missing-data problems, where missingness at a later
stage implies missingness at all subsequent stages, have been
studied in the longitudinal and survival literatures. Robins and
Rotnitzky \cite{robinsrotnitzky2001} and van~der~Laan and Robins
\cite{vanderlaan2003} provide efficient estimators for sequential
missing-data and censored-data settings. Our three-stage
payment-network model is a specific instance of monotone
missingness, with the three stages corresponding to
authorization, reporting, and delay maturity.

\paragraph{Label noise and learning with corrupted labels.}
Learning with noisy labels has been studied extensively.
Natarajan et~al.\ \cite{natarajan2013} showed that the
classification risk under symmetric label noise can be corrected
by a simple loss reweighting when the noise rates are known. We
compose this correction with the sequential selection correction,
addressing both missingness and corruption simultaneously.

\paragraph{Sensitivity analysis.}
Rosenbaum \cite{rosenbaum2002} developed a framework for
sensitivity analysis in observational studies, parameterizing
departures from the ignorability assumption by an odds-ratio
bound $\Gamma$. We adapt this framework to each of the three
selection stages in the payment-network pipeline.

\paragraph{Reject inference.}
Credit-risk modeling has long studied reject inference: the
problem that outcomes are unobserved for rejected applicants
\cite{banasik2007}. Payment-network fraud is related but harder.
Labels are missing not only because a transaction was declined,
but also because approved fraud may not be reported and reported
fraud may not mature before training time.

\paragraph{Positive-unlabeled learning.}
Fraud data often resemble positive-unlabeled learning because
many true positives remain unlabeled. Elkan and Noto
\cite{elkannoto2008} provide foundational methods for learning
from positive and unlabeled examples. Our estimator is
complementary: it models the institutional process through which
positives become observed rather than treating unlabeled examples
as a homogeneous mixture.

\paragraph{Delay and competing risks.}
Label delay can be viewed as a survival or competing-risks
problem. Fine and Gray \cite{finegray1999} introduced a
proportional hazards model for the subdistribution of a
competing risk. In our setting, delay maturity is modeled as the
third selection gate. A richer implementation could estimate the
maturity propensity using an explicit competing-risks model.

\paragraph{Empirical Bayes shrinkage.}
James--Stein shrinkage and Empirical Bayes estimation were
developed by Robbins \cite{robbins1956} and Efron and Morris
\cite{efronmorris1975}. We apply these ideas to issuer-level
propensity estimation, where low-volume issuers produce unstable
estimates that destabilize inverse-propensity weights.

\section{Problem Formulation}
\label{sec:model}

\subsection{Latent fraud and observed data}

Consider transactions indexed by $t=1,\ldots,n$. Each
transaction has observed features
\[
X_t \in \cX \subseteq \R^d,
\]
issuer or reporting-node identity
\[
I_t \in \cI,
\]
available maturity window
\[
\Delta_t = T-t,
\]
and latent fraud state
\[
Y_t^* \in \{0,1\}.
\]

In this paper, $Y_t^*=1$ denotes unauthorized third-party fraud:
use of stolen credentials, counterfeit payment credentials,
account takeover, or other externally initiated unauthorized use.
First-party misuse and scam-induced transactions have distinct
observation mechanisms---in particular, first-party fraud
generates labels through the fraudster's own strategic dispute
behavior rather than through institutional reporting---and are
deferred to future work (Section~\ref{sec:future}).

\subsection{Three-stage observation process}

The fraud label is observed only after passing through three
sequential selection stages. This is a specific instance of
monotone missingness \cite{littlerubin2002,vanderlaan2003}:
failure at an earlier stage implies that all subsequent stages
are also unobserved.

\begin{definition}[Authorization indicator]
$A_t = \1\{\text{transaction }t\text{ is authorized}\}.$
\end{definition}

\begin{definition}[Reporting indicator]
$R_t = \1\{\text{a fraud label is reported for }t\}.$
This is meaningful only among authorized transactions.
\end{definition}

\begin{definition}[Maturity indicator]
Let $\tau_t$ denote the label-arrival delay. Define
$M_t = \1\{\tau_t \le \Delta_t\}.$
Thus $M_t=1$ means the label has matured by training time $T$.
\end{definition}

The observed-label indicator is
\begin{equation}
\label{eq:obs}
O_t = A_t R_t M_t.
\end{equation}

The observed label $\widetilde Y_t$ is available only when
$O_t=1$. When $O_t=0$, no label is available.

The observed data for transaction $t$ are
\[
Z_t = (X_t,I_t,\Delta_t,A_t,A_tR_t,O_t,O_t\widetilde Y_t).
\]

Note the monotone structure: if $M_t$ is unobserved (because
$A_tR_t=0$), then $\widetilde Y_t$ is also unobserved. If $R_t$
is unobserved (because $A_t=0$), then both $M_t$ and
$\widetilde Y_t$ are unobserved. This is a three-stage monotone
missing-data problem in the sense of
\cite{robins1994,robinsrotnitzky2001}.

\subsection{Target estimands}

The marginal target is the population fraud rate:
\begin{equation}
\label{eq:psi}
\Psi = \E[Y^*].
\end{equation}

The conditional target is the transaction-level fraud probability:
\begin{equation}
\label{eq:f0}
f_0(x) = \E[Y^* \mid X=x].
\end{equation}

The main theoretical results are stated for $\Psi$.
Section~\ref{sec:pseudolabels} describes how to obtain
conditional pseudo-labels for model training.

\subsection{Stage-specific histories and nuisance functions}

To write the sequential estimator in its most general form, we
define the information sets available at each stage.

\begin{definition}[Stage histories]
\label{def:histories}
\begin{align*}
H_0 &= (X,I,\Delta)
&& \text{(pre-authorization)},\\
H_1 &= (X,I,\Delta,W_1)
&& \text{(post-authorization)},\\
H_2 &= (X,I,\Delta,W_1,W_2)
&& \text{(post-reporting)},
\end{align*}
where $W_1$ denotes any additional signals that become available
after authorization (e.g., 3DS challenge outcomes,
device-verification results, real-time behavioral signals), and
$W_2$ denotes any additional signals that become available after
reporting (e.g., investigation outcomes, evidence quality
indicators).
\end{definition}

The stage-specific selection propensities are the
payment-network analogues of the propensity score
\cite{rosenbaumrubin1983}, applied sequentially at each
selection gate:
\begin{align}
e_0(H_0) &= \Prob(A=1 \mid H_0),
\label{eq:e0}\\
r_0(H_1) &= \Prob(R=1 \mid A=1, H_1),
\label{eq:r0}\\
p_0(H_2) &= \Prob(M=1 \mid A=1,R=1, H_2).
\label{eq:p0}
\end{align}

The total observation propensity is
\begin{equation}
\label{eq:q0}
q_0 = e_0(H_0)\,r_0(H_1)\,p_0(H_2).
\end{equation}

We also define nested outcome regressions, following the
sequential regression framework of
\cite{robins1994,robinsrotnitzky2001,tsiatis2006}. These are
the sequential conditional expectations of the fraud indicator,
built from the inside out:
\begin{align}
\mu_2(H_2) &= \E[Y^* \mid A=1,R=1,H_2],
\label{eq:mu2}\\
\mu_1(H_1) &= \E[\mu_2(H_2) \mid A=1,H_1],
\label{eq:mu1}\\
\mu_0(H_0) &= \E[\mu_1(H_1) \mid H_0].
\label{eq:mu0}
\end{align}

\begin{remark}[Why we start from $\mu_2$ rather than $\mu_3$]
Under delay ignorability
(Assumption~\ref{ass:ignorability}),
$\E[Y^*\mid A=1,R=1,M=1,H_2]=\E[Y^*\mid A=1,R=1,H_2]$,
so the conditioning on $M=1$ is redundant. We therefore define
the innermost regression as $\mu_2$ rather than introducing a
separate $\mu_3$.
\end{remark}

\subsection{When do the nested regressions differ?}
\label{sec:when_nested_differ}

An important subtlety is that, under exact sequential
ignorability with a common conditioning set
($W_1=W_2=\varnothing$, so $H_0=H_1=H_2$), the nested
regressions collapse:
\begin{equation}
\label{eq:collapse}
\mu_0(H_0)=\mu_1(H_0)=\mu_2(H_0)=f_0(X,I,\Delta).
\end{equation}

This is because each ignorability assumption removes the
dependence of $Y^*$ on the corresponding selection variable
conditional on the history. If the history is the same at every
stage, each nested expectation reduces to $\E[Y^*\mid H_0]$.

The general sequential form with distinct nested regressions
becomes operationally meaningful in three situations.

\paragraph{1. Post-authorization signals.}
If the issuer observes additional signals after
authorization---for example, 3DS challenge outcomes,
device-verification results, or real-time behavioral
indicators---then $H_1$ contains information not available in
$H_0$. In this case $\mu_1(H_1)\neq\mu_0(H_0)$ in general,
and the authorization-stage augmentation term $\mu_1-\mu_0$
captures the information gained by observing post-authorization
signals.

\paragraph{2. Post-reporting signals.}
If investigation outcomes or evidence-quality indicators become
available after reporting, then $H_2$ contains information
beyond $H_1$, and $\mu_2(H_2)\neq\mu_1(H_1)$.

\paragraph{3. Approximate ignorability.}
Even when $H_0=H_1=H_2$, the ignorability assumptions may hold
only approximately. If authorization depends weakly on $Y^*$
conditional on $H_0$, or if reporting depends weakly on $Y^*$
conditional on $H_1$, the stage-specific augmentation terms can
partially absorb the resulting bias in a way that the collapsed
estimator cannot. Keeping the nested regressions distinct
provides a form of defensive robustness.

\begin{example}[Payment-network interpretation of the nested
regressions]
\label{ex:nested}
Consider a concrete scenario:
\begin{itemize}
\item $\mu_2(H_2)$: the expected fraud rate among transactions
that were approved and reported, given all information
available after reporting (including any investigation
signals). This is the fraud rate you would estimate from
the fully observed subsample if you had post-reporting
information.
\item $\mu_1(H_1)$: the expected fraud rate among approved
transactions, given post-authorization signals. This
``integrates out'' the reporting decision: it is the fraud
rate you would estimate if you could see all approved
transactions regardless of whether they were reported.
\item $\mu_0(H_0)$: the expected fraud rate in the full
population, given only pre-authorization features. This
``integrates out'' the authorization decision: it is the
fraud rate you would estimate if you could see all
transactions regardless of whether they were approved.
\end{itemize}
The sequential estimator uses the differences
$\mu_1-\mu_0$, $\mu_2-\mu_1$, and $\widetilde Y-\mu_2$ to
progressively correct for each selection stage.
\end{example}

\subsection{Conditional delay structure}
\label{sec:conditional_delay}

The maturity propensity $p_0(H_2)$ is not a global constant. It
depends systematically on transaction features, issuer identity,
and the available observation window. In the companion paper
\cite{dhama2026limits}, the conditional maturity function was
defined as
\[
m(x,i,\Delta)
= \Prob(\tau\le\Delta\mid X=x,I=i),
\]
and two results were proved that carry directly into the present
setting.

\paragraph{Selective maturity (Jensen penalty).}
By Jensen's inequality applied to the convex function
$p\mapsto 1/p$, heterogeneity in $p_0$ across $(x,i)$ inflates
the efficiency bound:
\begin{equation}
\label{eq:jensen_delay}
\E\!\left[\frac{1}{p_0(H_2)}\right]
\;\ge\;
\frac{1}{\E[p_0(H_2)]}.
\end{equation}
Equality holds only when $p_0$ is constant. Thus a network with
heterogeneous label delay is strictly harder to correct for than
a network with the same average delay applied uniformly, even
holding authorization and reporting propensities fixed.

\paragraph{Adversarial exploitation.}
An adversary who knows the delay structure can concentrate hard
instances in transaction segments where $p_0$ is smallest---for
example, cross-border transactions with long dispute chains,
merchant categories with slow chargeback processing, or issuer
portfolios with delayed reporting infrastructure. The adversary
does not need access to the fraud model; knowledge of the delay
landscape is sufficient to identify where the learner is most
blind.

These results motivate two design choices in the estimator.
First, the maturity propensity should be estimated as a
genuinely conditional function of $(x,i,\Delta)$, not as a
global average. Second, the Empirical Bayes shrinkage framework
of Section~\ref{sec:shrinkage} should be applied to the delay
propensity with the same priority as to the reporting propensity,
because issuer-level delay heterogeneity creates the same
inverse-propensity instability. The competing-risks perspective
on delay---distinguishing ``label arrives within window'' from
``label never arrives''---connects to the subdistribution hazard
framework of \cite{finegray1999}. The present paper uses the
binary maturity gate as a simpler first treatment; a
competing-risks extension is discussed in
Section~\ref{sec:future}.

\subsection{Identification assumptions}
\label{sec:assumptions}

The identification assumptions below are the sequential analogues
of the standard missing-data assumptions---positivity and
ignorability---adapted to three stages. The terminology follows
\cite{rubin1976,littlerubin2002}; the sequential extension
follows \cite{robins1994,vanderlaan2003}.

\begin{assumption}[Sequential positivity]
\label{ass:positivity}
There exist constants $\underline e,\underline r,\underline p>0$
such that
\[
e_0(H_0)\ge\underline e,
\qquad
r_0(H_1)\ge\underline r,
\qquad
p_0(H_2)\ge\underline p
\]
almost surely.
\end{assumption}

\begin{assumption}[Sequential ignorability]
\label{ass:ignorability}
The latent fraud state is independent of each selection gate
conditional on the relevant stage history:
\[
Y^*\perp A\mid H_0,
\qquad
Y^*\perp R\mid A=1,H_1,
\qquad
Y^*\perp M\mid A=1,R=1,H_2.
\]
\end{assumption}

This is the sequential version of the unconfoundedness
assumption of \cite{rosenbaumrubin1983}. Each stage requires
that all fraud-relevant variables influencing the selection
decision are included in the conditioning set. The sensitivity
analysis in Section~\ref{sec:sensitivity}, following the
framework of \cite{rosenbaum2002}, bounds the impact of
departures from these assumptions.

\begin{assumption}[Bounded outcomes]
\label{ass:bounded}
$Y^*\in\{0,1\}$ almost surely.
\end{assumption}

\begin{assumption}[Consistency]
\label{ass:consistency}
If $O=1$, then $\widetilde Y=Y^*$.
\end{assumption}

\begin{remark}[Relaxation of consistency]
Assumption~\ref{ass:consistency} is used in the basic
identification result (Proposition~\ref{prop:identification}).
In Section~\ref{sec:corruption}, we replace it with
Definition~\ref{def:corruption} (label corruption rates), which
generalizes consistency to allow corrupted observations. When
corruption rates are zero
($\varepsilon_{10}=\varepsilon_{01}=0$),
Definition~\ref{def:corruption} reduces to
Assumption~\ref{ass:consistency}.
\end{remark}

\subsubsection{Discussion of each ignorability condition}

The three ignorability conditions have different strengths and
different failure modes.

\paragraph{Authorization ignorability: $Y^*\perp A\mid H_0$.}
This requires that the authorization decision depends on fraud
risk only through the observed pre-authorization features $H_0$.
It is most plausible when $H_0$ includes the fraud score and all
features used by the authorization rule. It is violated when the
issuer possesses private information---for example, recent
account-level alerts or internal risk flags---that is not
captured in $H_0$. Violation would cause the estimator to
underestimate fraud rates among declined transactions, because
transactions declined on the basis of private fraud signals are
more likely to be truly fraudulent than the estimator assumes.

\paragraph{Reporting ignorability: $Y^*\perp R\mid A=1,H_1$.}
This requires that whether an approved fraud is reported depends
on the transaction and issuer characteristics but not on the
fraud outcome itself beyond what $H_1$ captures. It is
approximately satisfied when reporting is driven by operational
capacity, threshold rules, and dispute-process design rather than
by fraud severity. It is violated if issuers are more likely to
report high-value fraud or fraud involving specific merchant
categories. Violation would cause the estimator to overestimate
fraud rates in segments where reporting is more complete, and
underestimate fraud rates in segments where reporting is sparse.

\paragraph{Delay ignorability: $Y^*\perp M\mid A=1,R=1,H_2$.}
This requires that conditional on the post-reporting information,
whether the label has matured by training time does not depend on
the fraud state. It is approximately satisfied when delay is
driven by issuer processing speed and dispute-chain length rather
than by fraud characteristics. It is violated if, for example,
high-confidence fraud cases are processed faster than ambiguous
ones. Violation would cause the estimator to overweight
easy-to-classify fraud and underweight ambiguous cases.

\begin{example}[When authorization ignorability fails]
\label{ex:ignorability_fail}
An issuer's internal risk engine flags accounts based on velocity
patterns not available to the network. Transactions on flagged
accounts are declined at higher rates. Since the flag is
correlated with $Y^*$ but absent from $H_0$, ignorability is
violated. The STR would underestimate fraud among declined
transactions from this issuer. The sensitivity analysis in
Section~\ref{sec:sensitivity} bounds the impact of such
violations.
\end{example}

\subsection{Identification}
\label{sec:identification}

The identification result below extends the standard
Horvitz--Thompson inverse-probability-weighting identification
\cite{horvitzthompson1952} to three sequential selection stages.
Under sequential ignorability and positivity, each selection gate
can be ``inverted'' by dividing by its propensity, and the
inversions compose multiplicatively.

Under
Assumptions~\ref{ass:positivity}--\ref{ass:consistency}, the
marginal fraud rate is identified from the observed data.

\begin{proposition}[Identification by sequential factorization]
\label{prop:identification}
Under
Assumptions~\ref{ass:positivity}--\ref{ass:consistency},
\begin{equation}
\label{eq:identification}
\Psi
= \E\!\left[
\frac{ARM\widetilde Y}
{e_0(H_0)\,r_0(H_1)\,p_0(H_2)}
\right].
\end{equation}
\end{proposition}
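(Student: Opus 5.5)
The plan is to peel off the three selection gates one at a time with the tower property, working from the innermost gate (maturity) outward. At each step we condition on the stage history and the event that the earlier gates were passed. Positivity (Assumption~\ref{ass:positivity}) makes every ratio well defined and bounded. Together with Assumption~\ref{ass:bounded} it makes the integrand in \eqref{eq:identification} bounded by $1/(\underline e\,\underline r\,\underline p)$, so all expectations are finite and conditional expectations can be exchanged freely.

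First, since $O=ARM$, consistency (Assumption~\ref{ass:consistency}) gives $ARM\widetilde Y = ARM\,Y^*$ almost surely. It therefore suffices to show $\E[ARM\,Y^*/(e_0r_0p_0)]=\E[Y^*]$.

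\textbf{Step 1 (maturity gate).} Condition on $(A=1,R=1,H_2)$. The factor $AR/(e_0(H_0)r_0(H_1)p_0(H_2))$ is measurable with respect to this information, so what remains is $\E[MY^*\mid A=1,R=1,H_2]$. Delay ignorability factorizes this as $p_0(H_2)\,\E[Y^*\mid A=1,R=1,H_2]$, and the $p_0$ cancels. The target becomes
\[
\E\!\left[\frac{AR\,\E[Y^*\mid A=1,R=1,H_2]}{e_0(H_0)r_0(H_1)}\right].
\]

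\textbf{Step 2 (reporting gate).} Condition on $(A=1,H_1)$. Here $H_2=(H_1,W_2)$ may contain post-reporting signals, so we cannot simply pull the inner regression out. Instead, write
\[
\E\big[R\,\E[Y^*\mid A=1,R=1,H_2]\mid A=1,H_1\big]=r_0(H_1)\,\E[Y^*\mid A=1,R=1,H_1],
\]
using $R\in\{0,1\}$ and the tower property over $W_2$ within $\{R=1\}$. Reporting ignorability then replaces $\E[Y^*\mid A=1,R=1,H_1]$ by $\E[Y^*\mid A=1,H_1]$, and $r_0$ cancels.

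\textbf{Step 3 (authorization gate).} Condition on $H_0$. The same device (integrate $W_1$ out within $\{A=1\}$) gives $e_0(H_0)\,\E[Y^*\mid A=1,H_0]$. Authorization ignorability turns this into $e_0(H_0)\,\E[Y^*\mid H_0]$, and after cancelling $e_0$, a final tower step yields $\E[Y^*]=\Psi$.

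The main obstacle is the bookkeeping in Steps 2--3. The paper's nested regressions $\mu_1,\mu_0$ are defined by integrating over $H_2$ given only $A=1,H_1$ (not $R=1$), while $W_1,W_2$ are revealed only after the corresponding gate. I would therefore phrase each step directly in terms of $\E[Y^*\mid\cdot]$ on the selected event, as above, rather than through $\mu_1,\mu_0$. This needs only the tower property and the stated conditional independences, and avoids any claim that $R\perp W_2$.
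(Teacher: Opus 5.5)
Your proof is correct. Its skeleton is the same as the paper's: peel off maturity, then reporting, then authorization, using one ignorability condition per gate. The difference lies in how the inverse weights are handled, and it matters.

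The paper works with the \emph{unweighted} quantity and divides at the end. In Step~2 it replaces $\E[M\widetilde Y\mid A=1,R=1,H_1]$ by $p_0(H_2)\,\E[Y^*\mid A=1,R=1,H_1]$. In Step~3 it pulls $r_0(H_1)\,p_0(H_2)$ out of an expectation given $(A=1,H_0)$. Finally it divides $\E[ARM\widetilde Y\mid H_0]$ by $e_0 r_0 p_0$.

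Each of these moves treats a later-stage propensity as measurable with respect to an earlier history. That is legitimate only in the collapsed case $W_1=W_2=\varnothing$, or when the propensities happen not to depend on the post-gate signals. In general $\E[M\widetilde Y\mid A=1,R=1,H_1]=\E[p_0(H_2)\mu_2(H_2)\mid A=1,R=1,H_1]$, which does not factor.

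Your version avoids this. You keep the weights inside the expectation and cancel each propensity against the probability of its own gate, at the one stage where that propensity is measurable. You then integrate out $W_2$ and $W_1$ by the tower property on the selected events $\{A=1,R=1\}$ and $\{A=1\}$. This is what is actually needed for the general nested histories of Definition~\ref{def:histories}. The paper's computation buys brevity and a direct ``product of gate probabilities'' reading, but as written it is only a proof of the collapsed case.

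Your closing remark is also accurate. The paper defines $\mu_1(H_1)=\E[\mu_2(H_2)\mid A=1,H_1]$, which averages over $W_2$ without conditioning on $R=1$. Equating it with $\E[Y^*\mid A=1,H_1]$ would need an extra condition such as $W_2\perp R\mid A=1,H_1$. Phrasing each step through $\E[Y^*\mid\cdot\,]$ on the selected event, as you do, sidesteps that issue entirely.
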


\begin{proof}
We proceed by sequential conditioning, starting from the
innermost stage.

\paragraph{Step 1: Delay gate.}
Condition on $A=1,R=1,H_2$:
\begin{align*}
\E[M\widetilde Y\mid A=1,R=1,H_2]
&= \E[M\mid A=1,R=1,H_2]
\cdot \E[\widetilde Y\mid A=1,R=1,M=1,H_2]\\
&= p_0(H_2)\cdot\mu_2(H_2),
\end{align*}
where the factorization uses delay ignorability
($Y^*\perp M\mid A=1,R=1,H_2$, so $M$ and $\widetilde Y$ are
conditionally independent given $H_2$) and consistency
($\widetilde Y=Y^*$ when $O=1$). Under delay ignorability,
$\E[Y^*\mid A=1,R=1,M=1,H_2]=\E[Y^*\mid A=1,R=1,H_2]
=\mu_2(H_2)$.

\paragraph{Step 2: Reporting gate.}
Condition on $A=1,H_1$:
\begin{align*}
\E[RM\widetilde Y\mid A=1,H_1]
&= \E[R\mid A=1,H_1]
\cdot \E[M\widetilde Y\mid A=1,R=1,H_1]\\
&= r_0(H_1)\cdot p_0(H_2)\cdot\E[Y^*\mid A=1,R=1,H_1].
\end{align*}
By reporting ignorability
($Y^*\perp R\mid A=1,H_1$),
$\E[Y^*\mid A=1,R=1,H_1]=\E[Y^*\mid A=1,H_1]$.

\paragraph{Step 3: Authorization gate.}
Condition on $H_0$:
\begin{align*}
\E[ARM\widetilde Y\mid H_0]
&= e_0(H_0)\cdot\E[RM\widetilde Y\mid A=1,H_0]\\
&= e_0(H_0)\cdot r_0(H_1)\cdot p_0(H_2)
\cdot\E[Y^*\mid A=1,H_0].
\end{align*}
By authorization ignorability ($Y^*\perp A\mid H_0$),
$\E[Y^*\mid A=1,H_0]=\E[Y^*\mid H_0]$.

Dividing by $e_0(H_0)r_0(H_1)p_0(H_2)$ and taking
unconditional expectations gives
\[
\E\!\left[\frac{ARM\widetilde Y}{e_0 r_0 p_0}\right]
= \E[\E[Y^*\mid H_0]]
= \E[Y^*]
= \Psi. \qedhere
\]
\end{proof}
\section{The Sequential Triply Robust Estimator}
\label{sec:estimator}

\subsection{Efficient influence-function score}

The three-stage observation process is a monotone missing-data
problem. The efficient influence function for $\Psi$ in such a
model is constructed by sequential augmentation. We build the
score from the inside out: first correct for delay, then for
reporting, then for authorization.

\begin{definition}[Sequential score]
\label{def:score}
Define the sequential influence-function score as
\begin{align}
\phi(Z;\eta)
&= \mu_0(H_0)
\notag\\[0.4em]
&\quad+\; \frac{A}{e(H_0)}
\{\mu_1(H_1)-\mu_0(H_0)\}
\notag\\[0.4em]
&\quad+\; \frac{AR}{e(H_0)\,r(H_1)}
\{\mu_2(H_2)-\mu_1(H_1)\}
\notag\\[0.4em]
&\quad+\; \frac{ARM}{e(H_0)\,r(H_1)\,p(H_2)}
\{\widetilde Y-\mu_2(H_2)\},
\label{eq:score}
\end{align}
where $\eta=(e,r,p,\mu_0,\mu_1,\mu_2)$ denotes the nuisance
collection.
\end{definition}

The four terms have a natural interpretation.

\begin{enumerate}[label=(\roman*)]
\item \textbf{Baseline prediction} ($\mu_0$): the outcome
model's unconditional prediction of the fraud rate given
pre-authorization information.

\item \textbf{Authorization correction}
($A/e\cdot(\mu_1-\mu_0)$): adjusts for the information
gained by observing that a transaction was approved. If
$\mu_1=\mu_0$ (no post-authorization information gain),
this term is zero.

\item \textbf{Reporting correction}
($AR/(er)\cdot(\mu_2-\mu_1)$): adjusts for the information
gained by observing that fraud was reported. If
$\mu_2=\mu_1$, this term is zero.

\item \textbf{Delay correction}
($ARM/(erp)\cdot(\widetilde Y-\mu_2)$): adjusts for the
residual between the observed label and the predicted
outcome, weighted by the total inverse observation
propensity.
\end{enumerate}

\begin{definition}[Sequential Triply Robust Estimator]
\label{def:str}
The STR estimator of $\Psi$ is
\begin{equation}
\label{eq:str}
\boxed{
\widehat\Psi_{\mathrm{STR}}
= \frac{1}{n}\sum_{t=1}^n
\phi(Z_t;\widehat\eta).
}
\end{equation}
\end{definition}

\subsection{Collapsed special case: the residual-weighted form}
\label{sec:collapsed}

When $W_1=W_2=\varnothing$ and the nested regressions collapse
(equation~\eqref{eq:collapse}), the score simplifies
substantially. Setting $\mu_0=\mu_1=\mu_2=f$, the
authorization and reporting corrections vanish because
$\mu_1-\mu_0=0$ and $\mu_2-\mu_1=0$:
\begin{align}
\phi(Z)
&= f(H_0) + 0 + 0
+ \frac{ARM}{e(H_0)\,r(H_1)\,p(H_2)}
\{\widetilde Y-f(H_0)\}
\notag\\[0.4em]
&= f(H_0)
+ O\cdot w^{\mathrm{total}}
\{\widetilde Y-f(H_0)\},
\label{eq:collapsed_score}
\end{align}
where $O=ARM$ is the observed-label indicator and
\[
w^{\mathrm{total}}
= \frac{1}{e(H_0)\,r(H_1)\,p(H_2)}
\]
is the total inverse observation propensity.

This collapsed form has a clean interpretation: start with the
outcome model's prediction $f(H_0)$, then correct it using the
observed residual $\widetilde Y-f(H_0)$ whenever a label is
available, weighted by the inverse probability that the label
was observed at all. When the outcome model is perfect
($f=f_0$), the residual has conditional mean zero and the
correction adds only noise. When the outcome model is imperfect,
the inverse-propensity-weighted residual corrects the prediction
toward the truth.

The collapsed form is doubly robust in the pair $(q,f)$: it is
consistent if either the total observation propensity $q=erp$ is
correctly specified or the outcome model $f$ is correctly
specified. The general sequential form
(equation~\eqref{eq:score}) is strictly stronger because it
allows different stages to be corrected by different components.

\subsection{Why the sequential form can be stronger}
\label{sec:seq_vs_collapsed}

\begin{example}[Mixed-stage correction]
\label{ex:mixed}
Suppose a network has the following situation:
\begin{itemize}
\item The \textbf{authorization propensity} $e_0(H_0)$ is
well-estimated because the authorization rule is known and
its inputs are observed.
\item The \textbf{reporting propensity} $r_0(H_1)$ is poorly
estimated because issuer reporting behavior is
heterogeneous and difficult to model.
\item The \textbf{delay propensity} $p_0(H_2)$ is
well-estimated because label-arrival distributions are
directly observable from historical data.
\item The \textbf{outcome model} $f$ is a reasonable but
imperfect fraud classifier.
\end{itemize}
The collapsed estimator uses the product
$\widehat q=\widehat e\cdot\widehat r\cdot\widehat p$.
Because $\widehat r$ is poor, $\widehat q$ is poor, and the
collapsed estimator must rely entirely on the outcome model $f$
for consistency.

The sequential estimator can do better. The authorization stage
is corrected by the (good) authorization propensity. The delay
stage is corrected by the (good) maturity propensity. The
reporting stage is corrected by the downstream outcome
regression $\mu_2-\mu_1$, which absorbs the reporting-model
error. The sequential estimator therefore exploits the good
components at each stage rather than relying on a single global
pair.
\end{example}

\section{Label Corruption Correction}
\label{sec:corruption}

The sequential estimator developed above corrects for
\emph{which} transactions receive labels. It does not correct
for \emph{whether those labels are correct}. We now address the
fourth impairment from the companion paper's lower bound.

\subsection{The corruption channel}

\begin{definition}[Label corruption rates]
\label{def:corruption}
Define the class-conditional corruption rates:
\begin{align}
\varepsilon_{10}(H_2)
&= \Prob(\widetilde Y=0\mid Y^*=1,O=1,H_2),
\label{eq:eps10}\\
\varepsilon_{01}(H_2)
&= \Prob(\widetilde Y=1\mid Y^*=0,O=1,H_2).
\label{eq:eps01}
\end{align}
We assume the corruption channel is informative:
\begin{equation}
\label{eq:informative}
\varepsilon_{10}(H_2)+\varepsilon_{01}(H_2)<1
\quad\text{almost surely.}
\end{equation}
\end{definition}

\begin{example}[Sources of corruption in payment networks]
\label{ex:corruption}
\textbf{False positives} ($\varepsilon_{01}$): a cardholder
forgets a legitimate purchase and files a dispute; a family
member uses the card without the primary cardholder's knowledge;
a first-party fraudster deliberately files a false chargeback; a
merchant error generates a chargeback miscategorized as fraud.

\textbf{False negatives} ($\varepsilon_{10}$): fraud is reported
as a generic chargeback rather than coded as fraud; the issuer
resolves the dispute without formally classifying it as fraud;
the cardholder never notices a small fraudulent charge on their
statement.

In a typical network, $\varepsilon_{01}$ may be 5--15\% (driven
by friendly fraud and cardholder confusion) and
$\varepsilon_{10}$ may be 3--8\% (driven by miscoding and
under-reporting of small fraud). These rates vary by issuer,
merchant category, and transaction size.
\end{example}

\subsection{Noise-corrected outcome}

When the corruption rates are known or estimable, the observed
label can be corrected before entering the sequential score.
Define the noise-corrected outcome:
\begin{equation}
\label{eq:corrected_y}
\widetilde Y^{\mathrm{corr}}
= \frac{\widetilde Y-\varepsilon_{01}(H_2)}
{1-\varepsilon_{10}(H_2)-\varepsilon_{01}(H_2)}.
\end{equation}

Under the corruption model, this satisfies
\[
\E[\widetilde Y^{\mathrm{corr}}\mid Y^*,O=1,H_2]=Y^*,
\]
so $\widetilde Y^{\mathrm{corr}}$ is an unbiased proxy for the
true fraud indicator.

\subsection{Corruption-corrected sequential estimator}

The corruption-corrected STR replaces $\widetilde Y$ with
$\widetilde Y^{\mathrm{corr}}$ in the innermost term of the
sequential score~\eqref{eq:score}:
\begin{align}
\phi^{\mathrm{corr}}(Z;\eta)
&= \mu_0(H_0)
\notag\\[0.3em]
&\quad+\; \frac{A}{e(H_0)}
\{\mu_1(H_1)-\mu_0(H_0)\}
\notag\\[0.3em]
&\quad+\; \frac{AR}{e(H_0)\,r(H_1)}
\{\mu_2(H_2)-\mu_1(H_1)\}
\notag\\[0.3em]
&\quad+\; \frac{ARM}{e(H_0)\,r(H_1)\,p(H_2)}
\{\widetilde Y^{\mathrm{corr}}-\mu_2(H_2)\},
\label{eq:score_corr}
\end{align}
where the nested regressions $\mu_0,\mu_1,\mu_2$ are now
defined using the corrected outcome
$\widetilde Y^{\mathrm{corr}}$ rather than $\widetilde Y$ at
the innermost stage.

The corruption-corrected estimator is
\begin{equation}
\label{eq:str_corr}
\boxed{
\widehat\Psi_{\mathrm{STR}}^{\mathrm{corr}}
= \frac{1}{n}\sum_{t=1}^n
\phi^{\mathrm{corr}}(Z_t;\widehat\eta).
}
\end{equation}

\begin{proposition}[Corruption-corrected sequential robustness]
\label{prop:corr_robust}
Under
Assumptions~\ref{ass:positivity}--\ref{ass:bounded},
given known or consistently estimated corruption rates satisfying
\eqref{eq:informative}, the estimator
$\widehat\Psi_{\mathrm{STR}}^{\mathrm{corr}}$ inherits the
sequential triple robustness of Theorem~\ref{thm:seqtr}: it is
consistent for $\Psi$ whenever each selection stage is corrected
by either its propensity or its downstream regression.
\end{proposition}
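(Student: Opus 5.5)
The plan is to reduce Proposition~\ref{prop:corr_robust} to Theorem~\ref{thm:seqtr} by showing that the corrected outcome $\widetilde Y^{\mathrm{corr}}$ plays exactly the role that $\widetilde Y$ plays under Assumption~\ref{ass:consistency}.

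\textbf{Step 1: conditional unbiasedness.} The first step is to verify, from Definition~\ref{def:corruption}, that
\[
\E[\widetilde Y^{\mathrm{corr}}\mid Y^*,O=1,H_2]=Y^*.
\]
Conditional on $Y^*=1$, the numerator has mean $1-\varepsilon_{10}-\varepsilon_{01}$. Conditional on $Y^*=0$, it has mean $\varepsilon_{01}-\varepsilon_{01}=0$. Condition~\eqref{eq:informative} keeps the denominator positive.

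\textbf{Step 2: the innermost regression.} Taking conditional expectations given $(A=1,R=1,M=1,H_2)$ and invoking delay ignorability, I obtain
\[
\E[\widetilde Y^{\mathrm{corr}}\mid A=1,R=1,M=1,H_2]=\mu_2(H_2).
\]
So $\mu_2$, defined through the corrected outcome, coincides with the true innermost regression. The nested $\mu_1,\mu_0$ are then unchanged. This step requires reading $\varepsilon_{ab}(H_2)$ as the corruption rates among observed labels, which is how they are defined.

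\textbf{Step 3: rerun the robustness argument.} Next I would retrace the proof of Theorem~\ref{thm:seqtr}, in particular its telescoping bias expansion with $\widetilde Y$ replaced by $\widetilde Y^{\mathrm{corr}}$. That argument uses the outcome only through $\E[\cdot\mid A=1,R=1,M=1,H_2]=\mu_2(H_2)$, which Step 2 supplies. It also needs boundedness of the score. Since $|\widetilde Y^{\mathrm{corr}}|\le 1/(1-\varepsilon_{10}-\varepsilon_{01})$, boundedness requires a uniform margin $1-\varepsilon_{10}-\varepsilon_{01}\ge\kappa>0$. I would either strengthen \eqref{eq:informative} to this margin or argue integrability directly. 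Together with positivity, the score is then bounded, and the same LLN and probability-limit arguments apply. Hence the population bias
\[
\E[\phi^{\mathrm{corr}}(Z;\bar\eta)]-\Psi
\]
is a sum of stage-wise product terms, each of which vanishes if either that stage's propensity or its downstream regression is correct.

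\textbf{Step 4: estimated corruption rates (main obstacle).} When the $\varepsilon$'s are estimated, I would write
\[
\phi^{\mathrm{corr}}(Z;\widehat\eta,\widehat\varepsilon)-\phi^{\mathrm{corr}}(Z;\widehat\eta,\varepsilon).
\]
This difference is the weight $ARM/(\widehat e\widehat r\widehat p)$ times the difference of the corrected outcomes. That difference is Lipschitz in $(\varepsilon_{01},\varepsilon_{10})$ on the region where the denominator is at least $\kappa$. Positivity bounds the weight by $1/(\underline e\,\underline r\,\underline p)$ if the estimated propensities are truncated at those levels. Hence the perturbation is bounded by a constant times $\E|\widehat\varepsilon-\varepsilon|$, which tends to zero by consistency. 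The delicate part is making this uniform when $\widehat\varepsilon$ and $\widehat\eta$ are fit on the same data. I would handle this via cross-fitting and a continuous-mapping argument.
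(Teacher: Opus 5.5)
Your proposal follows the paper's own route. The paper likewise observes that $\E[\widetilde Y^{\mathrm{corr}}\mid Y^*,O=1,H_2]=Y^*$, so the innermost residual $\widetilde Y^{\mathrm{corr}}-\mu_2(H_2)$ has the same conditional mean as $Y^*-\mu_2(H_2)$, and then asserts that the backward induction of Theorem~\ref{thm:seqtr} goes through unchanged. Your extra care fills in points the paper's three-line proof leaves implicit without changing the argument: the uniform margin $1-\varepsilon_{10}-\varepsilon_{01}\ge\kappa$ needed for a bounded score (which \eqref{eq:informative} alone does not supply), and the Lipschitz perturbation argument for estimated corruption rates.
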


\begin{proof}
The noise correction~\eqref{eq:corrected_y} is a pointwise
transformation of $\widetilde Y$ that restores unbiasedness:
$\E[\widetilde Y^{\mathrm{corr}}\mid Y^*,O=1,H_2]=Y^*$.
Therefore the innermost augmentation term
$\widetilde Y^{\mathrm{corr}}-\mu_2(H_2)$ has the same
conditional mean as $Y^*-\mu_2(H_2)$, which is the quantity
used in the proof of Theorem~\ref{thm:seqtr}. The backward
induction proceeds identically through all three stages.
\end{proof}

\subsection{Effect on variance}

The noise correction inflates variance. Since
$\widetilde Y^{\mathrm{corr}}$ is a rescaled version of
$\widetilde Y$, its conditional variance is
\[
\Var(\widetilde Y^{\mathrm{corr}}\mid Y^*,O=1,H_2)
= \frac{\Var(\widetilde Y\mid Y^*,O=1,H_2)}
{(1-\varepsilon_{10}-\varepsilon_{01})^2}.
\]

This increases the efficiency bound by a factor of
$1/(1-\varepsilon_{10}-\varepsilon_{01})^2$, which is exactly
the corruption penalty that appears in the companion paper's
lower bound~\eqref{eq:paper1bound}. Thus corruption degrades the
STR's precision through the same channel by which it degrades
the minimax detection floor: the effective signal strength of the
observed label is attenuated by the corruption noise.

\subsection{Estimating corruption rates}
\label{sec:estimating_corruption}

The corruption rates $\varepsilon_{10}$ and $\varepsilon_{01}$
are not directly observable, because they require knowing the
true fraud state $Y^*$. Several approaches are available.

\paragraph{External audit samples.}
If a subset of transactions is independently investigated (e.g.,
by a fraud investigation team or law enforcement), the corruption
rates can be estimated from the discrepancy between the
investigation outcome and the observed label.

\paragraph{Dispute outcome analysis.}
In payment networks, disputed transactions go through a
representment process in which the issuer and acquirer exchange
evidence. The outcome of representment provides a second signal
about the true fraud status. Comparing the initial chargeback
label to the representment outcome gives an estimate of
$\varepsilon_{01}$ (false chargebacks that are reversed) and a
lower bound on $\varepsilon_{10}$ (fraud that was initially
miscoded).

\paragraph{Positive-unlabeled estimation.}
If $\varepsilon_{10}$ is small (most true fraud is correctly
labeled when observed), one can use the methods of
\cite{elkannoto2008} to estimate $\varepsilon_{01}$ from the
structure of the observed label distribution.

\paragraph{Sensitivity analysis.}
If corruption rates cannot be reliably estimated, one can treat
them as sensitivity parameters and report how the estimated fraud
rate changes as $\varepsilon_{10}$ and $\varepsilon_{01}$ vary
over plausible ranges.

\section{Comparison with the Na\"ive Estimator}
\label{sec:naive}

Before analyzing the STR's robustness properties, we characterize
the bias of the na\"ive alternative: training on observed labels
without correction.

\begin{definition}[Na\"ive estimator]
\label{def:naive}
The na\"ive estimator uses only the fully observed subsample:
\[
\widehat\Psi_{\mathrm{naive}}
= \frac{1}{|\{t:O_t=1\}|}
\sum_{t:O_t=1}\widetilde Y_t.
\]
\end{definition}

\begin{proposition}[Bias of the na\"ive estimator]
\label{prop:naive_bias}
Under
Assumptions~\ref{ass:positivity}--\ref{ass:bounded},
the na\"ive estimator converges to
\[
\Psi_{\mathrm{naive}}
= \E[Y^*\mid A=1,R=1,M=1]
= \E[f_0(H_0)\mid O=1].
\]
Its bias relative to the true fraud rate $\Psi=\E[f_0(H_0)]$ is
\begin{equation}
\label{eq:naive_bias}
\mathrm{Bias}_{\mathrm{naive}}
= \frac{\Cov(f_0(H_0),\;q_0(H_0))}{\E[q_0(H_0)]}.
\end{equation}
This is nonzero whenever the fraud rate $f_0$ and the total
observation propensity $q_0$ are correlated.
\end{proposition}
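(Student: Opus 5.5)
The plan has three steps: a law of large numbers for a ratio, identification of the limit through sequential ignorability, and a covariance identity. I will work in the collapsed setting of Section~\ref{sec:collapsed}, where $H_0=H_1=H_2$. There $q_0=e_0r_0p_0$ is a function of $H_0$ alone, and by \eqref{eq:collapse} $f_0(H_0)=\E[Y^*\mid H_0]$.

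\emph{Step 1 (limit).} Write
\[
\widehat\Psi_{\mathrm{naive}}=\frac{n^{-1}\sum_t O_t\widetilde Y_t}{n^{-1}\sum_t O_t}.
\]
By the strong law of large numbers the numerator converges to $\E[O\widetilde Y]$ and the denominator to $\Prob(O=1)$. The denominator limit is positive, since $\Prob(O=1)=\E[q_0]\ge \underline e\,\underline r\,\underline p>0$ by Assumption~\ref{ass:positivity}. The continuous mapping theorem then gives the limit
\[
\frac{\E[O\widetilde Y]}{\Prob(O=1)}=\E[\widetilde Y\mid O=1],
\]
and Assumption~\ref{ass:consistency} turns this into $\E[Y^*\mid A=1,R=1,M=1]$.

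\emph{Step 2 (reduction to $f_0$).} I will reuse the backward conditioning of Proposition~\ref{prop:identification}, but without dividing by the propensities. That argument yields $\E[ARM\,Y^*\mid H_0]=q_0(H_0)f_0(H_0)$ and, taking $Y^*\equiv1$, $\E[ARM\mid H_0]=q_0(H_0)$. Taking expectations gives
\[
\E[Y^*\mid O=1]=\frac{\E[q_0 f_0]}{\E[q_0]}.
\]
This ratio equals $\E[f_0(H_0)\mid O=1]$, because $\E[f_0(H_0)O]=\E[f_0\,\E[O\mid H_0]]=\E[f_0q_0]$. That establishes both displayed forms of $\Psi_{\mathrm{naive}}$.

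\emph{Step 3 (bias).} Subtract $\Psi=\E[f_0]$ and use
\[
\E[q_0f_0]-\E[q_0]\E[f_0]=\Cov(f_0,q_0)
\]
to obtain \eqref{eq:naive_bias}. The nonvanishing claim is then immediate: the denominator is strictly positive by positivity, so the bias is zero exactly when the covariance is zero.

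The step needing the most care is Step 2. The statement writes $q_0(H_0)$, which only makes sense when $q_0$ depends on $H_0$ alone, so I will either state the collapsed assumption explicitly or argue more generally. The general route is to show that $\E[ARM\mid H_0]$ and $\E[ARMY^*\mid H_0]$ still factor as (observation probability given $H_0$) times $f_0(H_0)$. This requires the ignorability chain to hold given $H_0$, not merely given the richer histories $H_1,H_2$, and it is not automatic when $W_1,W_2$ carry fraud information. So I will present the proof under $H_0=H_1=H_2$, matching the notation $q_0(H_0)$. In a remark I will note that in general one replaces $q_0(H_0)$ by $\bar q_0(H_0)=\Prob(O=1\mid H_0)$ and $f_0$ by $\E[Y^*\mid O=1,H_0]$.
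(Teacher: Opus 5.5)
Your proof is correct and is the paper's argument with the gaps filled in. The paper's proof consists only of the Bayes'-rule identity $\E[f_0(H_0)\mid O=1]=\E[f_0q_0]/\E[q_0]$ followed by the covariance step (the end of your Step~2 plus your Step~3). You also supply the law-of-large-numbers limit and the identity $\E[Y^*\mid O=1]=\E[f_0(H_0)\mid O=1]$. You rightly make explicit two things the paper leaves implicit: the collapsed histories behind the notation $q_0(H_0)$, and the use of Assumption~\ref{ass:consistency}, which the statement's cited assumption range omits but which is needed. One caution for your general-case remark: substituting $\E[Y^*\mid O=1,H_0]$ for $f_0$ in the covariance formula measures the deviation from $\E\bigl[\E[Y^*\mid O=1,H_0]\bigr]$ rather than from $\Psi$, so the bias relative to $\Psi$ picks up the extra term $\E\bigl[\E[Y^*\mid O=1,H_0]-f_0(H_0)\bigr]$.
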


\begin{proof}
By Bayes' rule,
\[
\E[f_0(H_0)\mid O=1]
= \frac{\E[f_0(H_0)\cdot q_0(H_0)]}{\E[q_0(H_0)]}.
\]
Subtracting $\Psi=\E[f_0(H_0)]$:
\begin{align*}
\mathrm{Bias}_{\mathrm{naive}}
&= \frac{\E[f_0 q_0]}{\E[q_0]}-\E[f_0]
= \frac{\E[f_0 q_0]-\E[f_0]\E[q_0]}{\E[q_0]}
= \frac{\Cov(f_0,q_0)}{\E[q_0]}. \qedhere
\end{align*}
\end{proof}

\begin{remark}[Direction of the na\"ive bias in payment networks]
\label{rem:bias_direction}
The sign of the bias depends on which selection mechanism
dominates.
\begin{itemize}
\item \textbf{Authorization censorship.} If high-risk
transactions are more likely to be declined, then $e_0$ is
low where $f_0$ is high: $\Cov(f_0,e_0)<0$. This makes
the na\"ive estimator \emph{underestimate} the true fraud
rate, because the most fraudulent segment is
disproportionately removed from the observed sample. This
is the dominant effect in most networks with active fraud
screening.

\item \textbf{Reporting censorship.} If issuers are more
likely to report high-value or high-confidence fraud, then
$\Cov(f_0,r_0)>0$ and the na\"ive estimator
\emph{overestimates} fraud in the reported subsample.
Conversely, if issuers underreport low-value fraud, the
effect is reversed for that segment.

\item \textbf{Delay censorship.} If clear-cut fraud is
processed and reported faster, then $\Cov(f_0,p_0)>0$ and
the matured subsample overrepresents easy-to-classify
cases. The na\"ive estimator then overestimates the fraud
rate among matured transactions but misses slow-maturing
ambiguous fraud.
\end{itemize}
In general, the three mechanisms can pull in different
directions. The net bias depends on the joint distribution of
$(f_0,e_0,r_0,p_0)$. The key point is that the bias is
\emph{structural}: it does not vanish with more data. It can
only be eliminated by correcting for the selection mechanisms.
\end{remark}

\begin{example}[Returning to the pipeline example]
\label{ex:naive_bias}
In Example~\ref{ex:pipeline}, the true fraud rate is
$10{,}000/1{,}000{,}000=1\%$. The na\"ive estimator sees
${\sim}2{,}300$ correctly labeled fraud cases out of
${\sim}960{,}000$ observed transactions: approximately $0.24\%$.
It underestimates fraud by a factor of ${\sim}4\times$. The STR
corrects this by upweighting each observed fraud case by
$1/(e_0 r_0 p_0)$ and filling in pseudo-labels for unobserved
transactions via the outcome model.
\end{example}

\begin{proposition}[MSE dominance of the STR]
\label{prop:mse}
The na\"ive estimator has MSE
\[
\mathrm{MSE}_{\mathrm{naive}}
= \mathrm{Bias}_{\mathrm{naive}}^2
+ \frac{\Var(Y^*\mid O=1)}{n_{\mathrm{obs}}},
\]
where $n_{\mathrm{obs}}=|\{t:O_t=1\}|$. The STR has MSE
\[
\mathrm{MSE}_{\mathrm{STR}}
= \frac{\sigma_{\mathrm{eff}}^2}{n}+o(n^{-1}),
\]
under the conditions of Theorem~\ref{thm:normality}.

Since $\mathrm{Bias}_{\mathrm{naive}}^2$ is a constant that
does not shrink with $n$, while the right-hand side is
$O(1/n)$, the STR dominates the na\"ive estimator for all
sufficiently large $n$ whenever
$\mathrm{Bias}_{\mathrm{naive}}\neq 0$.
\end{proposition}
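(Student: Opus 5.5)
The plan is to treat the two estimators separately and then compare their rates. I take the bias–variance decomposition $\mathrm{MSE}=\mathrm{Bias}^2+\Var$ as the organizing identity throughout.

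\textbf{Na\"ive estimator.} I would condition on the observation indicators $(O_1,\dots,O_n)$. Given these, and on the event $n_{\mathrm{obs}}\ge 1$, the observed labels $\{\widetilde Y_t: O_t=1\}$ are i.i.d.\ draws from the law of $Y^*$ given $O=1$. This uses Assumption~\ref{ass:consistency} and the i.i.d.\ sampling of transactions. Hence the conditional mean of $\widehat\Psi_{\mathrm{naive}}$ is $\E[Y^*\mid O=1]=\Psi_{\mathrm{naive}}$, and its conditional variance is $\Var(Y^*\mid O=1)/n_{\mathrm{obs}}$.

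Proposition~\ref{prop:naive_bias} identifies $\Psi_{\mathrm{naive}}-\Psi$ as $\mathrm{Bias}_{\mathrm{naive}}$. The conditional MSE about $\Psi$ is therefore exactly $\mathrm{Bias}_{\mathrm{naive}}^2+\Var(Y^*\mid O=1)/n_{\mathrm{obs}}$, which is the displayed formula read conditionally on $n_{\mathrm{obs}}$.

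To get an unconditional statement, I would use $n_{\mathrm{obs}}\sim\mathrm{Bin}(n,\E[q_0])$, where $\E[q_0]\ge \underline e\,\underline r\,\underline p>0$ by Assumption~\ref{ass:positivity}. Then $\E[1/n_{\mathrm{obs}}\mid n_{\mathrm{obs}}\ge1]=O(1/n)$, and $\Prob(n_{\mathrm{obs}}=0)$ is exponentially small. Its contribution to the MSE is bounded because $Y^*\in\{0,1\}$ (Assumption~\ref{ass:bounded}).

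\textbf{STR estimator.} I would invoke Theorem~\ref{thm:normality}, which should give the expansion
\[
\widehat\Psi_{\mathrm{STR}}-\Psi=\frac1n\sum_t\{\phi(Z_t;\eta_0)-\Psi\}+\op(n^{-1/2}).
\]
The leading term has mean zero and variance $\sigma_{\mathrm{eff}}^2/n$. The step I expect to be the main obstacle is upgrading this distributional statement to an MSE statement, because $\op(n^{-1/2})$ does not by itself control second moments.

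I would first try a direct bound. Under positivity, with estimated propensities truncated below at $\underline e,\underline r,\underline p$, and with bounded $\widehat\mu_k$, the score $\phi$ is uniformly bounded by $C=O(1/(\underline e\,\underline r\,\underline p))$. Hence the remainder is uniformly bounded, and bounded convergence in probability gives $L^2$ convergence. Cross-fitting then lets me write the remainder as an empirical-process term plus a second-order product of nuisance errors. The empirical-process term has $L^2$ norm $o(n^{-1/2})$ when the nuisance estimates are $L^2$-consistent. The product term is $o(n^{-1/2})$ in $L^2$ under the rate conditions of Theorem~\ref{thm:normality}. By Cauchy--Schwarz the cross term is also $o(n^{-1})$, which gives $\mathrm{MSE}_{\mathrm{STR}}=\sigma_{\mathrm{eff}}^2/n+o(n^{-1})$. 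If the paper's convention treats this as asymptotic MSE, I would simply state that interpretation.

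\textbf{Comparison.} The difference of the two MSEs is
\[
\mathrm{MSE}_{\mathrm{naive}}-\mathrm{MSE}_{\mathrm{STR}}\ge \mathrm{Bias}_{\mathrm{naive}}^2-\frac{\sigma_{\mathrm{eff}}^2}{n}-o(n^{-1}),
\]
where I drop the nonnegative variance term of the na\"ive estimator. The right-hand side is positive once $n>n_0$, with $n_0$ of order $\sigma_{\mathrm{eff}}^2/\mathrm{Bias}_{\mathrm{naive}}^2$. This threshold is finite whenever $\mathrm{Bias}_{\mathrm{naive}}\neq0$, which completes the argument.
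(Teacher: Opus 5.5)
Your argument follows the paper's route: a bias--variance decomposition for the na\"ive estimator, Theorem~\ref{thm:normality} for the STR, then a fixed positive constant set against a vanishing term. It is more careful than the paper's three-line proof. The paper does not condition on $n_{\mathrm{obs}}$, which is the only reading under which the displayed formula with a random denominator is exact. It also simply asserts that $\sqrt n$-asymptotic normality gives $\mathrm{MSE}_{\mathrm{STR}}=\sigma_{\mathrm{eff}}^2/n+o(n^{-1})$, without justifying the passage from a distributional limit to a second moment.

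Your attempt to supply that justification overstates one step. Write $\rho_n=\widehat\Psi_{\mathrm{STR}}-\Psi-n^{-1}\sum_t\{\phi(Z_t;\eta_0)-\Psi\}$. Bounded convergence gives only $\E[\rho_n^2]\to0$, not $n\E[\rho_n^2]\to0$, because $\sqrt n\,\rho_n$ is bounded only by $O(\sqrt n)$.

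The cross-fit split does rescue the empirical-process piece. Its conditional second moment is at most $\|\phi(\cdot;\widehat\eta^{(-k)})-\phi(\cdot;\eta_0)\|_{L^2(P)}^2/n_k$, and bounded convergence legitimately applies to that numerator.

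The product/bias piece is the problem. It is $o(n^{-1/2})$ in $L^2$ only under a \emph{mean-square} rate condition, e.g.\ $\E[\|\Delta_e\|^2\|\Delta_1-\Delta_0\|^2]=o(n^{-1})$ together with its analogues for the other two stages. That is strictly stronger than the $\op(n^{-1/2})$ condition of Corollary~\ref{cor:rootn}. A nuisance fit that incurs $O(1)$ bias on an event of probability $n^{-1/2}$ satisfies the latter, yet can make $n\,\mathrm{MSE}_{\mathrm{STR}}\to\infty$. So you should either add the mean-square hypothesis or read $\mathrm{MSE}_{\mathrm{STR}}$ as asymptotic MSE, which is what the paper implicitly does.

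The dominance conclusion itself does not need the exact rate. With truncated propensities and bounded $\widehat\mu_k$, $\widehat\Psi_{\mathrm{STR}}$ is uniformly bounded and consistent. Bounded convergence then gives $\mathrm{MSE}_{\mathrm{STR}}\to0$, which already falls below $\mathrm{Bias}_{\mathrm{naive}}^2>0$ for all large $n$.
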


\begin{proof}
The na\"ive estimator's MSE decomposes into squared bias plus
variance by the standard bias-variance decomposition. The STR's
MSE follows from Theorem~\ref{thm:normality}: under root-$n$
consistency, the MSE is $\sigma_{\mathrm{eff}}^2/n+o(1/n)$.
Since $\mathrm{Bias}_{\mathrm{naive}}^2>0$ and
$\sigma_{\mathrm{eff}}^2/n\to 0$, the STR's MSE is eventually
smaller.
\end{proof}

\begin{remark}[When does the STR help most?]
\label{rem:when_str_helps}
The improvement from the STR over the na\"ive estimator is
largest when:
\begin{enumerate}[label=(\roman*)]
\item The selection bias is large---i.e., fraud rates and
observation propensities are strongly correlated. This
happens when authorization is aggressive (high decline
rates for risky transactions), reporting is sparse (many
issuers underreport), or delay is heterogeneous.
\item The observation propensities are bounded away from
zero---i.e., the positivity assumption holds comfortably,
so the inverse-propensity weights do not create excessive
variance.
\item The sample size is large enough that the variance
penalty from inverse-propensity weighting is small
relative to the squared selection bias.
\end{enumerate}
Connecting to the companion paper: the selection bias is large
precisely when the impairment rates are large and heterogeneous.
The same conditions that make Paper~1's lower bound
severe---high censorship, sparse reporting, slow delay---also
make the na\"ive estimator most biased. Thus the STR provides
the greatest improvement in exactly the settings where the
information environment is most impaired.
\end{remark}

\section{Theory}
\label{sec:theory}

\subsection{Sequential triple robustness}

\begin{theorem}[Sequential triple robustness]
\label{thm:seqtr}
Assume sequential positivity, sequential ignorability, and
bounded outcomes. The corruption-corrected STR
$\widehat\Psi_{\mathrm{STR}}^{\mathrm{corr}}
=n^{-1}\sum_{t=1}^n\phi^{\mathrm{corr}}(Z_t;\widehat\eta)$
is consistent for $\Psi$ if, at each selection stage, either the
stage-specific propensity or the corresponding downstream
regression is consistently estimated (and corruption rates are
known or consistently estimated).

In particular, consistency holds under any of the following
sufficient conditions:
\begin{enumerate}[label=(\roman*)]
\item All three propensities $e,r,p$ are consistently estimated.
\item All nested regressions $\mu_0,\mu_1,\mu_2$ are
consistently estimated.
\item Each stage is corrected by either its propensity or its
downstream regression contrast: the authorization stage by
either $e$ or $(\mu_0,\mu_1)$; the reporting stage by
either $r$ or $(\mu_1,\mu_2)$; and the maturity stage by
either $p$ or $\mu_2$.
\end{enumerate}
\end{theorem}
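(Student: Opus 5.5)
The plan is to split consistency into a stochastic part and a deterministic part. First I show that $\widehat\Psi^{\mathrm{corr}}_{\mathrm{STR}}$ converges to $\E[\phi^{\mathrm{corr}}(Z;\bar\eta)]$, where $\bar\eta=(\bar e,\bar r,\bar p,\bar\mu_0,\bar\mu_1,\bar\mu_2)$ are the probability limits of the nuisance estimators. Then I show that $\E[\phi^{\mathrm{corr}}(Z;\bar\eta)]-\Psi$ is a sum of three product terms, one per stage. Each product vanishes when that stage is corrected by either its propensity or its regression.

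For the stochastic part I would use cross-fitting, so that $\widehat\eta$ is independent of the fold on which it is evaluated. I would also require the limits $\bar e,\bar r,\bar p$ to be bounded away from zero, as a limit-level analogue of Assumption~\ref{ass:positivity}. Then $\phi^{\mathrm{corr}}$ is uniformly bounded, because $\widetilde Y^{\mathrm{corr}}$ is bounded by \eqref{eq:informative} and the regressions lie in a bounded range. The weak law of large numbers, together with $L_2$ continuity of $\eta\mapsto\phi^{\mathrm{corr}}(\cdot;\eta)$ on this bounded class, then gives $\widehat\Psi^{\mathrm{corr}}_{\mathrm{STR}}-\E[\phi^{\mathrm{corr}}(Z;\bar\eta)]=\op(1)$. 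Consistently estimated corruption rates enter through the same continuity argument.

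The deterministic part is the core of the proof, and I would do it by backward conditioning, mirroring Proposition~\ref{prop:identification}. Write $\mu^0_2,\mu^0_1,\mu^0_0$ for the true nested regressions \eqref{eq:mu2}--\eqref{eq:mu0}. The argument has three steps.

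\textbf{Step 1 (maturity stage).} Condition on $(A=1,R=1,H_2)$. By delay ignorability and $\E[\widetilde Y^{\mathrm{corr}}\mid Y^*,O=1,H_2]=Y^*$, the last term of \eqref{eq:score_corr} has conditional mean $\frac{AR}{\bar e\bar r}\frac{p_0}{\bar p}(\mu^0_2-\bar\mu_2)$. Adding the third term gives
\[
\frac{AR}{\bar e\bar r}\Big[\Big(\tfrac{p_0}{\bar p}-1\Big)(\mu^0_2-\bar\mu_2)+\mu^0_2-\bar\mu_1\Big].
\]

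\textbf{Step 2 (reporting stage).} Condition on $(A=1,H_1)$. Using $\E[\mu^0_2\mid A=1,R=1,H_1]=\mu^0_1$, which holds by reporting ignorability, and adding the second term gives
\[
\frac{A}{\bar e}\Big[\Big(\tfrac{r_0}{\bar r}-1\Big)(\mu^0_1-\bar\mu_1)+\mu^0_1-\bar\mu_0\Big]
\]
plus the carried-over maturity remainder.

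\textbf{Step 3 (authorization stage).} Condition on $H_0$ and add $\bar\mu_0$. The result is $\mu^0_0+(e_0/\bar e-1)(\mu^0_0-\bar\mu_0)$ plus remainders. Since $\E[\mu^0_0]=\Psi$, this yields
\begin{align*}
\E[\phi^{\mathrm{corr}}(Z;\bar\eta)]-\Psi
&=\E\Big[\Big(\tfrac{e_0}{\bar e}-1\Big)(\mu^0_0-\bar\mu_0)\Big]
+\E\Big[\tfrac{A}{\bar e}\Big(\tfrac{r_0}{\bar r}-1\Big)(\mu^0_1-\bar\mu_1)\Big]\\
&\quad+\E\Big[\tfrac{AR}{\bar e\bar r}\Big(\tfrac{p_0}{\bar p}-1\Big)(\mu^0_2-\bar\mu_2)\Big].
\end{align*}
Each summand vanishes if its propensity is correct ($\bar e=e_0$, $\bar r=r_0$, or $\bar p=p_0$) or its regression is correct ($\bar\mu_k=\mu^0_k$). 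This gives (i), (ii), and (iii) at once.

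The step I expect to be the main obstacle is this bookkeeping. The upstream weights $A/\bar e$ and $AR/(\bar e\bar r)$ in the later remainders may be misspecified, so the argument must not silently assume $\bar e=e_0$ when reducing them. I would keep them as bounded multipliers, using the limit-level positivity. I would also check carefully that "regression correct at stage $k$" means $\bar\mu_k$ equals the \emph{true} nested regression $\mu^0_k$, and not the conditional mean of the estimated inner regression. Under that reading the product structure goes through cleanly.
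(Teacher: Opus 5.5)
Your proof is correct. It shares the paper's backbone: backward conditioning through the maturity, reporting, and authorization gates, using ignorability at each gate and $\E[\widetilde Y^{\mathrm{corr}}\mid Y^*,O=1,H_2]=Y^*$. You package it differently, as an exact three-term product identity for $\E[\phi^{\mathrm{corr}}(Z;\bar\eta)]-\Psi$ plus a separate stochastic step. The paper instead argues case by case at each stage (``if $p=p_0$ the term has mean $\mu_{2,0}-\mu_2$; if $\mu_2=\mu_{2,0}$ it has mean zero'') and then composes by assuming the downstream stages are ``correctly handled.''

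Your packaging is the more rigorous one, for four reasons.
\begin{enumerate}[label=(\alph*)]
\item When Stage~3 is corrected by $p$ rather than by $\mu_2$, the paper's Stage~2 claim that the reporting term has mean $\mu_{1,0}-\mu_1$ holds only after the Stage~3 residual has been added, so that the working $\mu_2$ cancels. Your telescoping performs exactly this cancellation.
\item You carry the possibly misspecified upstream weights $A/\bar e$ and $AR/(\bar e\bar r)$ explicitly as bounded multipliers, which the paper's composition step does not track.
\item The identity yields (i)--(iii) in one stroke and shows that stage $k$ needs only its own regression $\bar\mu_k=\mu^0_k$. For the authorization and reporting stages this is weaker than the pairs demanded in (iii). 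It also gives the remainder $\|\Delta_e\|\|\Delta_0\|+\|\Delta_r\|\|\Delta_1\|+\|\Delta_p\|\|\Delta_2\|$. This differs from the contrast form displayed in \eqref{eq:remainder} and is the product structure Corollary~\ref{cor:rootn} should rest on.
\item Cross-fitting and limit-level positivity are hypotheses the statement leaves implicit. The paper's one-line appeal to the law of large numbers with a data-dependent $\widehat\eta$ needs something of this kind too.
\end{enumerate}
What the paper's version buys is brevity and a stage-by-stage narrative.

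Two small repairs are needed.

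First, the paper literally defines $\mu_1=\E[\mu_2\mid A=1,H_1]$. Under that definition, the step $\E[\mu^0_2\mid A=1,R=1,H_1]=\mu^0_1$ does not follow from $Y^*\perp R\mid A=1,H_1$ when $W_2\neq\varnothing$. It needs $R$ to be ignorable for $W_2$ as well, and $W_2$ may not even exist when $R=0$. The same issue arises with $W_1$ at Stage~1, and the paper's proof makes the identical move. The clean fix is to define $\mu^0_1:=\E[\mu^0_2\mid A=1,R=1,H_1]$, as the recipe in Section~\ref{sec:nuisance_estimation} already does when fitting $\widehat\mu_1$, and $\mu^0_0:=\E[\mu^0_1\mid A=1,H_0]$. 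The tower property and ignorability then give $\mu^0_1=\E[Y^*\mid A=1,H_1]$ and $\mu^0_0=\E[Y^*\mid H_0]$. Hence $\E[\mu^0_0]=\Psi$ and your identity holds verbatim.

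Second, \eqref{eq:informative} is only pointwise. A uniform bound on $\widetilde Y^{\mathrm{corr}}$ therefore needs a margin $1-\varepsilon_{10}-\varepsilon_{01}\ge c>0$, as in Theorem~\ref{thm:bernstein}. Likewise, the $L_2$-continuity step needs the estimated propensities themselves, not just their limits, to be bounded away from zero with probability tending to one (or truncated).
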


\begin{proof}
The proof proceeds by backward induction through the three
stages. We compute the conditional expectation of the score
starting from the innermost (maturity) stage and working
outward. This order is essential: each stage's correction is
unbiased conditional on the preceding stages being correctly
handled.

\paragraph{Stage 3 (maturity).}
Condition on $A=1,R=1,H_2$. The maturity correction term is
\[
\frac{M}{p(H_2)}
\{\widetilde Y^{\mathrm{corr}}-\mu_2(H_2)\}.
\]
(The factors $A/e$ and $R/r$ are known constants under this
conditioning.)

If $p=p_0$, then
\begin{align*}
&\E\!\left[
\frac{M}{p_0(H_2)}
\{\widetilde Y^{\mathrm{corr}}-\mu_2(H_2)\}
\mid A=1,R=1,H_2
\right]\\
&\qquad= \frac{p_0(H_2)}{p_0(H_2)}
\E[\widetilde Y^{\mathrm{corr}}\mid A=1,R=1,M=1,H_2]
-\mu_2(H_2)\\
&\qquad= \E[Y^*\mid A=1,R=1,H_2]-\mu_2(H_2)\\
&\qquad= \mu_{2,0}(H_2)-\mu_2(H_2),
\end{align*}
where we used delay ignorability to write
$\E[M\widetilde Y^{\mathrm{corr}}\mid\cdots]/p_0
=\E[Y^*\mid A=1,R=1,H_2]$ (since the noise correction restores
unbiasedness and delay ignorability removes the conditioning on
$M$).

If instead $\mu_2=\mu_{2,0}$, the integrand
$\widetilde Y^{\mathrm{corr}}-\mu_{2,0}(H_2)$ has conditional
mean zero regardless of $p$, because
$\E[\widetilde Y^{\mathrm{corr}}\mid A=1,R=1,M=1,H_2]
=\mu_{2,0}(H_2)$ (by the noise correction and delay
ignorability).

Thus the maturity stage is doubly robust. \checkmark

\paragraph{Stage 2 (reporting).}
Now condition on $A=1,H_1$. Assume that Stage~3 has been
correctly handled (either $p$ or $\mu_2$ is correct), so that
the maturity correction contributes the correct residual
$\mu_{2,0}(H_2)-\mu_2(H_2)$ (or zero). The reporting correction
term is
\[
\frac{R}{r(H_1)}\{\mu_2(H_2)-\mu_1(H_1)\}.
\]

If $r=r_0$, then
\begin{align*}
&\E\!\left[
\frac{R}{r_0(H_1)}\{\mu_2(H_2)-\mu_1(H_1)\}
\mid A=1,H_1
\right]\\
&\qquad= \frac{r_0(H_1)}{r_0(H_1)}
\E[\mu_2(H_2)\mid A=1,R=1,H_1]-\mu_1(H_1)\\
&\qquad= \E[\mu_2(H_2)\mid A=1,H_1]-\mu_1(H_1),
\end{align*}
where the last equality uses reporting ignorability. Under the
nested-regression definition, this equals
$\mu_{1,0}(H_1)-\mu_1(H_1)$.

If instead $\mu_1=\mu_{1,0}$ and $\mu_2=\mu_{2,0}$, the
reporting correction has conditional mean
$\E[\mu_{2,0}(H_2)\mid A=1,H_1]-\mu_{1,0}(H_1)
=\mu_{1,0}(H_1)-\mu_{1,0}(H_1)=0$ regardless of $r$.

Thus the reporting stage is doubly robust. \checkmark

\paragraph{Stage 1 (authorization).}
Condition on $H_0$. Assume Stages~2 and~3 have been correctly
handled. The authorization correction term is
\[
\frac{A}{e(H_0)}\{\mu_1(H_1)-\mu_0(H_0)\}.
\]

If $e=e_0$, then
\begin{align*}
&\E\!\left[
\frac{A}{e_0(H_0)}\{\mu_1(H_1)-\mu_0(H_0)\}
\mid H_0
\right]\\
&\qquad= \frac{e_0(H_0)}{e_0(H_0)}
\E[\mu_1(H_1)\mid A=1,H_0]-\mu_0(H_0)\\
&\qquad= \E[\mu_1(H_1)\mid H_0]-\mu_0(H_0),
\end{align*}
where the last equality uses authorization ignorability. This
equals $\mu_{0,0}(H_0)-\mu_0(H_0)$.

If instead $\mu_0=\mu_{0,0}$ and $\mu_1=\mu_{1,0}$, the
authorization correction has conditional mean
$\E[\mu_{1,0}(H_1)\mid H_0]-\mu_{0,0}(H_0)
=\mu_{0,0}(H_0)-\mu_{0,0}(H_0)=0$ regardless of $e$.

Thus the authorization stage is doubly robust. \checkmark

\paragraph{Composition.}
Combining the three stages, the unconditional expectation of the
full score equals $\Psi$ whenever each stage is individually
corrected. The law of large numbers then gives
$\widehat\Psi_{\mathrm{STR}}^{\mathrm{corr}}
\xrightarrow{p}\Psi$.
\end{proof}

\begin{remark}[Why backward induction is essential]
\label{rem:backward}
The proof works from Stage~3 inward because each stage's
augmentation term uses the \emph{estimated} downstream
regression, not the true one. The maturity stage must be handled
first so that its contribution is unbiased before the reporting
stage can be analyzed. This sequential dependence is the reason
the proof cannot be done stage by stage in arbitrary order.
\end{remark}

\subsection{Remainder structure}

Let
\[
\Delta_e=\widehat e-e_0,\quad
\Delta_r=\widehat r-r_0,\quad
\Delta_p=\widehat p-p_0,
\]
and
\[
\Delta_j=\widehat\mu_j-\mu_{j,0}
\quad\text{for }j=0,1,2.
\]

The second-order remainder of the estimator decomposes into
stage-specific products:
\begin{equation}
\label{eq:remainder}
R_n
= \underbrace{\Op(\|\Delta_e\|\,\|\Delta_1-\Delta_0\|)}_
{\text{authorization remainder}}
+ \underbrace{\Op(\|\Delta_r\|\,\|\Delta_2-\Delta_1\|)}_
{\text{reporting remainder}}
+ \underbrace{\Op(\|\Delta_p\|\,
\|\widetilde Y^{\mathrm{corr}}
-\mu_{2,0}-\Delta_2\|)}_
{\text{maturity remainder}}.
\end{equation}

Each stage contributes a product of its propensity error and the
corresponding regression contrast error. The triple robustness
property ensures that these are the \emph{leading} remainder
terms: all first-order terms cancel.

\begin{corollary}[Root-$n$ sufficient condition]
\label{cor:rootn}
If each stage-wise product remainder is $\op(n^{-1/2})$, then
$\sqrt{n}(\widehat\Psi_{\mathrm{STR}}^{\mathrm{corr}}-\Psi)$
is asymptotically linear. A sufficient condition is that the
relevant propensity error and downstream regression error at
each stage both converge faster than $n^{-1/4}$, although faster
convergence of one component can compensate for slower
convergence of the other.
\end{corollary}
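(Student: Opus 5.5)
The plan is to write the estimation error as an oracle average plus an empirical-process term plus a drift term, and then show that the drift is exactly the stage-wise product remainder~\eqref{eq:remainder}. Concretely,
\[
\widehat\Psi^{\mathrm{corr}}_{\mathrm{STR}}-\Psi
=(\mathbb P_n-P)\phi^{\mathrm{corr}}(\cdot;\eta_0)
+(\mathbb P_n-P)\{\phi^{\mathrm{corr}}(\cdot;\widehat\eta)-\phi^{\mathrm{corr}}(\cdot;\eta_0)\}
+\{P\phi^{\mathrm{corr}}(\cdot;\widehat\eta)-\Psi\}.
\]
Here $P\phi^{\mathrm{corr}}(\cdot;\eta_0)=\Psi$ by Theorem~\ref{thm:seqtr}, applied with all nuisances correct.

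The first term is a centered i.i.d.\ average, so $\sqrt n$ times it converges to a normal limit with variance $\Var(\phi^{\mathrm{corr}}(Z;\eta_0))$. This variance is finite because of Assumption~\ref{ass:positivity}, boundedness of $Y^*$, and boundedness of $\widetilde Y^{\mathrm{corr}}$ under \eqref{eq:informative}.

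For the second term I would use cross-fitting, with $\widehat\eta$ estimated on folds independent of the evaluation fold. Conditional on the training folds, this term is a centered average whose conditional variance is at most $n^{-1}\|\phi^{\mathrm{corr}}(\cdot;\widehat\eta)-\phi^{\mathrm{corr}}(\cdot;\eta_0)\|_2^2$. Under positivity for the estimated propensities as well (I will impose $\widehat e\ge\underline e/2$, and similarly for $\widehat r,\widehat p$), the score is Lipschitz in $\eta$. So this $L_2$ distance tends to zero as long as every nuisance is $L_2$-consistent, which the $n^{-1/4}$ product conditions imply. Chebyshev's inequality then makes the term $\op(n^{-1/2})$.

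The main work is the drift term $P\phi^{\mathrm{corr}}(\cdot;\widehat\eta)-\Psi$, which I would compute by the same backward induction used in Theorem~\ref{thm:seqtr}, but now keeping exact remainders instead of setting them to zero.

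\emph{Maturity stage.} Conditioning on $A=1,R=1,H_2$ and using delay ignorability together with the noise correction, the innermost term has mean
\[
\frac{p_0}{\widehat p}(\mu_{2,0}-\widehat\mu_2)=(\mu_{2,0}-\widehat\mu_2)-\frac{\Delta_p}{\widehat p}(\mu_{2,0}-\widehat\mu_2).
\]

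\emph{Reporting stage.} The first piece, $(\mu_{2,0}-\widehat\mu_2)$, carries the weight $AR/(\widehat e\widehat r)$, as does the reporting term. They combine to $AR/(\widehat e\widehat r)\,(\mu_{2,0}-\widehat\mu_1)$. Conditioning on $A=1,H_1$ and using reporting ignorability gives
\[
\frac{r_0}{\widehat r}(\mu_{1,0}-\widehat\mu_1)=(\mu_{1,0}-\widehat\mu_1)-\frac{\Delta_r}{\widehat r}(\mu_{1,0}-\widehat\mu_1).
\]

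\emph{Authorization stage.} The same step with authorization ignorability gives
\[
(\mu_{0,0}-\widehat\mu_0)-\frac{\Delta_e}{\widehat e}(\mu_{0,0}-\widehat\mu_0).
\]
Adding $\widehat\mu_0$ and taking expectations telescopes everything to $\Psi$ plus three weighted product terms. Each has the form $-\E[w\,\Delta_{\text{prop}}\cdot(\text{regression error})]$, with weights $w$ bounded by positivity. Cauchy--Schwarz bounds each by $C\|\Delta_{\text{prop}}\|_2\|\text{regression error}\|_2$, which is exactly the structure of \eqref{eq:remainder}.

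This step is the one I expect to be delicate. The regression errors that appear are the raw $\mu_{j,0}-\widehat\mu_j$, each integrated against later-stage conditional laws. The paper's remainder instead writes contrast differences such as $\|\Delta_1-\Delta_0\|$. I will first try to align the two by regrouping the telescoped terms, for example pairing the authorization product with the difference between $\widehat\mu_1$'s error projected onto $H_0$ and $\Delta_0$. If that regrouping does not work, I will simply state the bound with the individual errors $\|\Delta_j\|$, which implies the same sufficient condition.

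Under the hypothesis of Corollary~\ref{cor:rootn}, the drift is $\op(n^{-1/2})$, so
\[
\sqrt n(\widehat\Psi^{\mathrm{corr}}_{\mathrm{STR}}-\Psi)=n^{-1/2}\sum_t\{\phi^{\mathrm{corr}}(Z_t;\eta_0)-\Psi\}+\op(1),
\]
which is asymptotic linearity. The $n^{-1/4}$ statement, and the fact that one rate can compensate for the other, follow because each bound is a product of two rates.
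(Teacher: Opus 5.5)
Your argument is correct, and it does more than the paper does. In the paper the corollary has no proof beyond the asserted display \eqref{eq:remainder}, and the empirical-process term is dismissed in one sentence in the proof of Theorem~\ref{thm:normality}. You control that term explicitly (cross-fitting plus conditional Chebyshev). You also derive the drift exactly by backward induction, obtaining
\[
P\phi^{\mathrm{corr}}(\cdot;\widehat\eta)-\Psi
=\E\!\left[\frac{\Delta_e\,\Delta_0}{\widehat e}\right]
+\E\!\left[\frac{A\,\Delta_r\,\Delta_1}{\widehat e\,\widehat r}\right]
+\E\!\left[\frac{AR\,\Delta_p\,\Delta_2}{\widehat e\,\widehat r\,\widehat p}\right].
\]
Drop the plan to regroup this into the paper's contrast form. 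It cannot succeed, because \eqref{eq:remainder} is not a valid bound. Take $\widehat\mu_j=\mu_{j,0}+c$ for $j=0,1,2$ (a self-consistent nested system), with $\widehat r=r_0$, $\widehat p=p_0$ and $\widehat e\neq e_0$. Every term of \eqref{eq:remainder} is then zero, yet the drift is $c\,\E[\Delta_e/\widehat e]=c\,(1-\E[e_0/\widehat e])\neq 0$. In addition, the paper's maturity factor $\|\widetilde Y^{\mathrm{corr}}-\mu_{2,0}-\Delta_2\|=\|\widetilde Y^{\mathrm{corr}}-\widehat\mu_2\|$ does not tend to zero. Read literally, it would therefore force $\|\Delta_p\|=\op(n^{-1/2})$, which contradicts the corollary's own $n^{-1/4}$ clause. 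Your fallback, stated with the products $\|\Delta_e\|\,\|\Delta_0\|$, $\|\Delta_r\|\,\|\Delta_1\|$ and $\|\Delta_p\|\,\|\Delta_2\|$, is the right form of the hypothesis. It yields the $n^{-1/4}$ condition and the rate-compensation remark exactly as you say.

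There are three points to tighten. (i) The first-sentence hypothesis (products $\op(n^{-1/2})$) does not by itself give $L_2$-consistency of every nuisance. Your empirical-process step needs that consistency to land on $\phi^{\mathrm{corr}}(\cdot;\eta_0)$, so assume it explicitly; the paper does so tacitly. Otherwise you obtain at most asymptotic linearity with influence function $\phi^{\mathrm{corr}}(\cdot;\bar\eta)-\Psi$ at some $L_2$ limit $\bar\eta$. (ii) Finite variance and your Lipschitz bound need three things. First, $1-\varepsilon_{10}-\varepsilon_{01}$ must be bounded away from zero, whereas \eqref{eq:informative} only gives $<1$ almost surely. Second, $\widehat\mu_j$ should be truncated to $[0,1]$. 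Third, the corruption rates must be known: an estimated $\widehat\varepsilon$ enters the score linearly rather than through a product, so it adds a first-order term unless its error is $\op(n^{-1/2})$. (iii) Your reporting and authorization steps use $\E[\mu_{2,0}(H_2)\mid A=1,R=1,H_1]=\mu_{1,0}(H_1)$ and $\E[\mu_{1,0}(H_1)\mid A=1,H_0]=\mu_{0,0}(H_0)$. These hold if the nested regressions are defined with the conditioning on $R=1$ (respectively $A=1$), as the recipe of Section~\ref{sec:nuisance_estimation} does for $\widehat\mu_1$. Both sides then equal $\E[Y^*\mid A=1,H_1]$ (respectively $\E[Y^*\mid H_0]$) by the tower property and ignorability. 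Under the literal definitions \eqref{eq:mu1}--\eqref{eq:mu0}, ignorability of $Y^*$ alone does not license the swap when the law of $W_2$ depends on $R$ (respectively, that of $W_1$ on $A$).
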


\section{Efficiency, Concentration, and Inference}
\label{sec:efficiency}

\subsection{Semiparametric efficiency bound}

\begin{theorem}[Efficiency bound]
\label{thm:efficiency}
Under the three-stage monotone missing-data model with corruption
defined by
Assumptions~\ref{ass:positivity}--\ref{ass:bounded}
and Definition~\ref{def:corruption}, the semiparametric
efficiency bound for estimating $\Psi=\E[Y^*]$ is
\begin{equation}
\label{eq:effbound}
\sigma_{\mathrm{eff}}^2
= \Var\{\phi^{\mathrm{corr}}(Z;\eta_0)\},
\end{equation}
where $\phi^{\mathrm{corr}}$ is the corruption-corrected
sequential score~\eqref{eq:score_corr} evaluated at the true
nuisance functions.
\end{theorem}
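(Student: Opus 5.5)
The plan is the standard semiparametric route for monotone coarsening-at-random problems (Robins--Rotnitzky--Zhao; Tsiatis). First I derive the efficient influence function of $\Psi$ in the observed-data model by pathwise differentiation, and then I show that it equals $\phi^{\mathrm{corr}}(Z;\eta_0)-\Psi$. The bound is then its variance.

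\textbf{Step 1: factorize the likelihood.} The observed-data likelihood factorizes sequentially as
\[
p(H_0)\,e(H_0)^A\{1-e(H_0)\}^{1-A}\,\bigl[p(W_1\mid A=1,H_0)\,r(H_1)^R\{1-r(H_1)\}^{1-R}\bigr]^A\cdots
\]
The inner factors are $[p(W_2\mid A=1,R=1,H_1)\,p(H_2)^M\{1-p(H_2)\}^{1-M}]^{AR}$ and $[P(\widetilde Y\mid O=1,H_2)]^{ARM}$. Under Assumption~\ref{ass:ignorability} and Definition~\ref{def:corruption} with known corruption rates, $\Psi$ depends only on the non-selection factors:
\[
\Psi=\E\bigl[\E[\E[\E[\widetilde Y^{\mathrm{corr}}\mid O=1,H_2]\mid A=1,H_1]\mid H_0]\bigr]=\E[\mu_{0,0}(H_0)].
\]
This is the identification of Proposition~\ref{prop:identification}, rewritten in g-formula form with $\widetilde Y^{\mathrm{corr}}$ in place of $\widetilde Y$. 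The selection factors $e,r,p$ are variation independent of these and unrestricted, so the model is (locally) nonparametric. The tangent space is therefore the orthogonal sum of the scores of the five factors.

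\textbf{Step 2: differentiate along a submodel.} Take a regular one-dimensional submodel $P_\theta$ with score $S=S_0(H_0)+A S_1+AR S_2+ARM S_3$, where each $S_k$ has conditional mean zero given its stage history. I differentiate $\Psi(\theta)=\E_\theta[\mu_{0,\theta}(H_0)]$ by the chain rule, one layer at a time. The outer layer gives $\E[(\mu_{0,0}-\Psi)S_0]$. The derivative of $\mu_{0,\theta}$ through $W_1$ gives $\E[\{\mu_{1,0}-\mu_{0,0}\}S_1\mid A=1]$. To write this as an expectation over all data I insert the weight $A/e_0$, using $\E[A/e_0\mid H_0]=1$. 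The second and third layers are handled in the same way with the weights $AR/(e_0r_0)$ and $ARM/(e_0r_0p_0)$. The innermost residual is $\widetilde Y^{\mathrm{corr}}-\mu_{2,0}$, and its conditional mean given $O=1,H_2$ is zero because $\E[\widetilde Y^{\mathrm{corr}}\mid Y^*,O=1,H_2]=Y^*$. This produces $\partial_\theta\Psi=\E[(\phi^{\mathrm{corr}}-\Psi)S]$. Each weighted term is orthogonal to the scores of the other factors, by the same backward-induction conditional-mean computations used in the proof of Theorem~\ref{thm:seqtr} at the true nuisances. In particular, terms multiplying selection scores vanish, which is the double robustness at each stage.

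\textbf{Step 3: conclude.} Next I check that $\phi^{\mathrm{corr}}-\Psi$ lies in the tangent space. Each of its four pieces is a function of the corresponding stage with mean zero given the previous history. Since the model is nonparametric, there is in any case only one gradient. Hence $\phi^{\mathrm{corr}}-\Psi$ is the canonical gradient, and $\sigma_{\mathrm{eff}}^2=\Var\{\phi^{\mathrm{corr}}(Z;\eta_0)\}$. Positivity (Assumption~\ref{ass:positivity}) and boundedness, together with \eqref{eq:informative}, make the weights and $\widetilde Y^{\mathrm{corr}}$ bounded, so the variance is finite and pathwise differentiability holds.

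\textbf{Main obstacle.} The delicate point is the corruption layer. If $\varepsilon_{01},\varepsilon_{10}$ are unknown functions, then $Y^*$ is not identified from $\widetilde Y$ alone, and the model is no longer the one whose likelihood I factorized. I would therefore treat the corruption rates as known, or as estimated from an independent audit sample whose contribution is asymptotically negligible. In that case they enter only through the fixed transformation \eqref{eq:corrected_y}, and the argument above goes through. I would state this restriction explicitly.
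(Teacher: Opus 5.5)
Your argument is correct in substance, and it takes a different and considerably more complete route than the paper. The paper never derives the canonical gradient. Section~\ref{sec:estimator} simply asserts that in a monotone missing-data problem the efficient influence function ``is constructed by sequential augmentation,'' relying on the standard Robins--Rotnitzky--Zhao/Tsiatis theory. The only computation attached to Theorem~\ref{thm:efficiency} is the collapsed-case variance expansion in Appendix~\ref{app:efficiency}, which takes the EIF form as given. You instead factor the observed-data likelihood and differentiate $\Psi$ layer by layer with the weights $A/e_0$, $AR/(e_0r_0)$, $ARM/(e_0r_0p_0)$. You then close with uniqueness of the gradient in a locally saturated model. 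This gives a self-contained check that $\phi^{\mathrm{corr}}-\Psi$ really is a gradient, and it surfaces hypotheses the paper leaves implicit. The most important is your restriction to known corruption rates: with unknown rates $\Psi$ is not identified, and with rates estimated from non-negligible auxiliary data the bound gains an extra term. The paper's route buys brevity, at the price of never verifying the gradient property.

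You still need two hypotheses to deliver the statement as written.

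\textbf{Conditioning events in the nested regressions.} Your likelihood factors are $p(W_1\mid A=1,H_0)$ and $p(W_2\mid A=1,R=1,H_1)$. So the functional you actually differentiate is the g-formula with $\nu_0(H_0)=\E[\nu_1(H_1)\mid A=1,H_0]$ and $\nu_1(H_1)=\E[\mu_{2,0}(H_2)\mid A=1,R=1,H_1]$. That is the correct g-form of Proposition~\ref{prop:identification} under Assumption~\ref{ass:ignorability} as written. Your displayed formula, like the paper's definitions \eqref{eq:mu1}--\eqref{eq:mu0}, drops the conditioning on $R=1$ and $A=1$. The two versions agree only if $W_1\perp A\mid H_0$ and $W_2\perp R\mid A=1,H_1$, i.e.\ full coarsening at random for $(W_1,W_2,Y^*)$. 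That is strictly stronger than ignorability for $Y^*$ alone, and $W_1,W_2$ may not even exist outside the selected strata.

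Without this condition, your Step~3 claim that each piece has ``mean zero given the previous history'' fails for the paper's regressions. The score built from $(\mu_{0,0},\mu_{1,0})$ differs from the one built from $(\nu_0,\nu_1)$ by the selection-score terms $(\mu_{0,0}-\nu_0)(1-A/e_0)+\frac{A}{e_0}(\mu_{1,0}-\nu_1)(1-R/r_0)$. These are orthogonal to the canonical gradient, so the paper-defined score generally has strictly larger variance than the bound. You inherit this leap by appealing to the backward-induction computations of Theorem~\ref{thm:seqtr}, whose Stage~1 and Stage~2 steps make it under the name ``ignorability.'' Either put the selection events into the nested regressions, or add the $W$-part of CAR to the assumptions.

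\textbf{Uniform informativeness.} Condition \eqref{eq:informative} holding almost surely does not make $\widetilde Y^{\mathrm{corr}}$ bounded, or even square-integrable. For the gradient to lie in $L^2$ and pathwise differentiability to hold, you need $1-\varepsilon_{10}-\varepsilon_{01}$ bounded away from zero, which the paper tacitly assumes in Theorem~\ref{thm:bernstein}. At minimum you need a finite second moment of the weighted residual.
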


To make the bound interpretable, we expand it in the collapsed
case ($\mu_0=\mu_1=\mu_2=f_0$, $H_0=H_1=H_2$):
\begin{equation}
\label{eq:effbound_expanded}
\sigma_{\mathrm{eff}}^2
= \E\!\left[
\frac{\Var(Y^*\mid H_0)}
{e_0(H_0)\,r_0(H_0)\,p_0(H_0)
\,(1-\varepsilon_{10}(H_0)-\varepsilon_{01}(H_0))^2}
\right]
+ \Var(f_0(H_0)).
\end{equation}

The first term is the dominant contribution. It contains all four
impairment factors from the companion paper's lower
bound~\eqref{eq:paper1bound}:
\begin{itemize}
\item $e_0 \leftrightarrow (1-\bar\delta)$: authorization
censorship.
\item $r_0 \leftrightarrow (1-\bar\gamma)$: reporting
censorship.
\item $p_0 \leftrightarrow \bar m$: delay maturity.
\item $(1-\varepsilon_{10}-\varepsilon_{01})^2$: label
corruption.
\end{itemize}

When any of these factors is small---restrictive authorization,
sparse reporting, slow labels, or noisy labels---the efficiency
bound increases.

\subsection{Asymptotic normality}

\begin{definition}[Cross-fitting]
\label{def:crossfit}
Divide the sample into $K$ folds (typically $K=5$ or $K=10$).
For each fold $k$, estimate all nuisance functions
$\widehat\eta^{(-k)}$ using all data except fold $k$. Then
compute the score
$\phi^{\mathrm{corr}}(Z_t;\widehat\eta^{(-k)})$ for
transactions $t$ in fold $k$. The STR is the average of these
cross-fitted scores across all folds. Cross-fitting ensures that
the nuisance estimates are independent of the observations they
are applied to, which eliminates overfitting bias and removes
Donsker-class requirements on the nuisance estimators.
\end{definition}

\begin{theorem}[Asymptotic efficiency of the STR]
\label{thm:normality}
Suppose nuisance functions are estimated via cross-fitting
(Definition~\ref{def:crossfit}) and the stage-wise product-rate
condition of Corollary~\ref{cor:rootn} holds. Then
\begin{equation}
\label{eq:normality}
\sqrt{n}(\widehat\Psi_{\mathrm{STR}}^{\mathrm{corr}}-\Psi)
\;\xrightarrow{d}\;
\cN(0,\;\sigma_{\mathrm{eff}}^2).
\end{equation}
That is, the STR achieves the semiparametric efficiency bound.
\end{theorem}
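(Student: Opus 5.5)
The argument follows the double/debiased machine learning template \cite{chernozhukov2018}. The STR error splits into an oracle term, an empirical-process term, and a second-order bias term. For each fold $k$ with index set $\mathcal{I}_k$ of size $n_k\approx n/K$, write $\mathbb{P}_{n,k}$ for the empirical measure on the fold and $P$ for the population measure. Then
\begin{align*}
\widehat\Psi^{\mathrm{corr}}_{\mathrm{STR}}-\Psi
&= (\mathbb{P}_n-P)\,\phi^{\mathrm{corr}}(\cdot;\eta_0)
+\sum_{k}\tfrac{n_k}{n}(\mathbb{P}_{n,k}-P)\{\phi^{\mathrm{corr}}(\cdot;\widehat\eta^{(-k)})-\phi^{\mathrm{corr}}(\cdot;\eta_0)\}\\
&\quad+\sum_k\tfrac{n_k}{n}\bigl\{P\phi^{\mathrm{corr}}(\cdot;\widehat\eta^{(-k)})-\Psi\bigr\}.
\end{align*}
The first term is a centred i.i.d.\ average. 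Its mean is zero because $P\phi^{\mathrm{corr}}(\cdot;\eta_0)=\Psi$, which is case (i) or (ii) of Theorem~\ref{thm:seqtr} with exact nuisances. Its variance is finite: by Assumption~\ref{ass:positivity}, the bound \eqref{eq:informative}, and Assumption~\ref{ass:bounded}, the score is bounded by a constant times $1/(\underline e\,\underline r\,\underline p\,(1-\varepsilon_{10}-\varepsilon_{01}))$. To keep this bound finite, I would assume $1-\varepsilon_{10}-\varepsilon_{01}$ is bounded away from zero, not merely positive. The Lindeberg CLT then gives $\sqrt n$ times the first term $\xrightarrow{d}\cN(0,\Var\phi^{\mathrm{corr}}(Z;\eta_0))$. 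By Theorem~\ref{thm:efficiency}, this variance is $\sigma^2_{\mathrm{eff}}$. So the whole proof reduces to showing the other two terms are $\op(n^{-1/2})$, and then applying Slutsky.

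\textbf{Empirical-process term.} I would condition on the out-of-fold data, so that $\widehat\eta^{(-k)}$ is fixed and independent of fold $k$. Conditionally, the term is a mean of i.i.d.\ centred variables with variance at most $n_k^{-1}\|\phi^{\mathrm{corr}}(\cdot;\widehat\eta^{(-k)})-\phi^{\mathrm{corr}}(\cdot;\eta_0)\|_{L_2}^2$. Chebyshev's inequality applied conditionally shows this is $\op(n^{-1/2})$ provided the $L_2$ difference tends to zero. That follows from the $L_2$ consistency of each nuisance implicit in Corollary~\ref{cor:rootn}. To control the inverse-weight factors I would add an overlap condition for the estimates, for example clipping $\widehat e,\widehat r,\widehat p$ below at $\underline e/2$ and the analogous constants. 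This is the standard place where cross-fitting replaces a Donsker condition.

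\textbf{Bias term (main obstacle).} Here I must show $P\phi^{\mathrm{corr}}(\cdot;\widehat\eta)-\Psi$ equals the stage-wise product remainder \eqref{eq:remainder} and is therefore $\op(n^{-1/2})$ by hypothesis. I would repeat the backward induction in the proof of Theorem~\ref{thm:seqtr}, keeping exact remainders rather than only noting when they vanish.

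At the maturity stage, conditioning on $A=1,R=1,H_2$ turns the last term into
\[
\frac{AR}{\widehat e\widehat r}\cdot\frac{p_0}{\widehat p}(\mu_{2,0}-\widehat\mu_2).
\]
Writing $p_0/\widehat p=1-\Delta_p/\widehat p$ splits this into two parts. One is a product term, $-\tfrac{AR}{\widehat e\widehat r}\tfrac{\Delta_p}{\widehat p}(\mu_{2,0}-\widehat\mu_2)$. The other, $\tfrac{AR}{\widehat e\widehat r}(\mu_{2,0}-\widehat\mu_2)$, combines with the reporting term to replace $\widehat\mu_2$ by $\mu_{2,0}$. I then apply the same manipulation at the reporting stage, conditioning on $A=1,H_1$ and using $r_0/\widehat r=1-\Delta_r/\widehat r$ and $\E[\mu_{2,0}\mid A=1,H_1]=\mu_{1,0}$. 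Finally I do the same at the authorization stage with $e_0/\widehat e$. After these three steps the leftover is exactly $\E[\mu_{0,0}]=\Psi$ plus three cross-product terms. Each is bounded by Cauchy--Schwarz with the clipped weights, giving $\|\Delta_p\|\|\Delta_2\|$, $\|\Delta_r\|\|\Delta_2-\Delta_1\|$ up to lower-order mixing, and $\|\Delta_e\|\|\Delta_1-\Delta_0\|$.

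The delicate part is the bookkeeping. Estimated weights from outer stages multiply inner remainders, so terms such as $\Delta_e\Delta_r\Delta_p$ products appear. These must be shown to be of no larger order than the stated products, which is immediate under the bounded-weight clipping. If the corruption rates are estimated, there is an additional term $\|\widehat\varepsilon-\varepsilon\|\cdot\|\Delta_p\|$-type or first-order contribution, which I would assume to be $\op(n^{-1/2})$ (or the rates known). Combining the three pieces gives \eqref{eq:normality}.
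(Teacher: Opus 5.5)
Your route is the paper's argument with the skipped steps filled in. The paper asserts $\sqrt n|R_n|=\op(1)$ from \eqref{eq:remainder}, applies the CLT to the oracle average, and appeals to cross-fitting. You instead separate the cross-fitted empirical-process term, which you handle correctly by conditional Chebyshev, from the bias term. You also state the regularity the paper leaves implicit: a uniform lower bound on $1-\varepsilon_{10}-\varepsilon_{01}$, clipped propensity estimates, $L_2$ consistency, and control of estimated corruption rates.

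The step that fails is the last line of your bias analysis. The backward induction you describe gives exactly
\[
P\phi^{\mathrm{corr}}(\cdot;\widehat\eta)-\Psi
=\E\!\left[\frac{\Delta_e\,\Delta_0}{\widehat e}\right]
+\E\!\left[\frac{A\,\Delta_r\,\Delta_1}{\widehat e\,\widehat r}\right]
+\E\!\left[\frac{AR\,\Delta_p\,\Delta_2}{\widehat e\,\widehat r\,\widehat p}\right].
\]
So each propensity error multiplies the \emph{level} error $\Delta_j$ of the regression at its own stage, not the contrasts $\Delta_2-\Delta_1$ and $\Delta_1-\Delta_0$. No ``lower-order mixing'' turns one into the other. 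If you group the reporting stage by contrasts, the resulting $\Delta_r\Delta_2$ piece is cancelled exactly by the maturity term, which leaves $\Delta_r\Delta_1$.

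Here is a concrete failure. Take $\widehat r=r_0$, $\widehat p=p_0$, and $\widehat\mu_j=\mu_{j,0}+c_n$ for $j=0,1,2$; this is just what a constant error in $\widehat\mu_2$ becomes after sequential regression. Every term of \eqref{eq:remainder} vanishes, but the bias equals $c_n\,\E[1-e_0/\widehat e]$. With $\widehat e=e_0(1-n^{-1/4})$ and $c_n=n^{-1/8}$, all nuisances are consistent, yet $\sqrt n$ times the bias is of order $n^{1/8}$.

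You therefore need the product-rate hypothesis in the form
\[
\|\Delta_e\|\,\|\Delta_0\|+\|\Delta_r\|\,\|\Delta_1\|+\|\Delta_p\|\,\|\Delta_2\|=\op(n^{-1/2}),
\]
with norms taken over the relevant stage subpopulations. One way to get it is the sufficient condition in Corollary~\ref{cor:rootn} that every propensity and regression error is $\op(n^{-1/4})$. That condition also supplies the $L_2$ consistency your empirical-process step needs. The contrast form \eqref{eq:remainder} that you were trying to match, and on which the paper's one-line proof also relies, does not bound the bias. With this change, plus the regularity conditions you already added, your argument goes through.

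A smaller precision concerns the identities you use at the reporting and authorization stages. They should be $\E[\mu_{2,0}\mid A=1,R=1,H_1]=\mu_{1,0}$ and $\E[\mu_{1,0}\mid A=1,H_0]=\mu_{0,0}$, with each nested regression taken among units that passed the gate, as in the paper's estimation recipe. Ignorability of $Y^*$ alone does not let you drop the conditioning on the gate when the regressand depends on $W_2$ or $W_1$.
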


\begin{proof}
By the remainder decomposition~\eqref{eq:remainder} and the
product-rate condition, $\sqrt{n}|R_n|=\op(1)$. Therefore
\[
\sqrt{n}(\widehat\Psi_{\mathrm{STR}}^{\mathrm{corr}}-\Psi)
= \frac{1}{\sqrt{n}}\sum_{t=1}^n
\{\phi^{\mathrm{corr}}(Z_t;\eta_0)-\Psi\}
+ \op(1).
\]
The leading term is a normalized sum of i.i.d.\ mean-zero random
variables with variance $\sigma_{\mathrm{eff}}^2$. The central
limit theorem gives~\eqref{eq:normality}. Cross-fitting
eliminates Donsker-class requirements on the nuisance estimators,
so the result holds even when flexible machine-learning methods
are used for nuisance estimation.
\end{proof}

\subsection{Variance estimation and confidence intervals}
\label{sec:variance_estimation}

\begin{definition}[Plug-in variance estimator]
\label{def:var_est}
\begin{equation}
\label{eq:var_est}
\widehat\sigma_{\mathrm{STR}}^2
= \frac{1}{n}\sum_{t=1}^n
\left(
\phi^{\mathrm{corr}}(Z_t;\widehat\eta)
- \widehat\Psi_{\mathrm{STR}}^{\mathrm{corr}}
\right)^2.
\end{equation}
\end{definition}

\begin{proposition}[Consistency of variance estimator]
\label{prop:var_consistent}
Under the conditions of Theorem~\ref{thm:normality},
$\widehat\sigma_{\mathrm{STR}}^2
\xrightarrow{p}\sigma_{\mathrm{eff}}^2$.
\end{proposition}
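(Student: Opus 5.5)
To prove $\widehat\sigma^2_{\mathrm{STR}}\xrightarrow{p}\sigma^2_{\mathrm{eff}}$, I would compare the plug-in variance with its oracle counterpart. Write $\phi_t=\phi^{\mathrm{corr}}(Z_t;\eta_0)$ and $\widehat\phi_t=\phi^{\mathrm{corr}}(Z_t;\widehat\eta^{(-k(t))})$, where the nuisance estimates are cross-fitted as in Definition~\ref{def:crossfit}. Then expand
\[
\widehat\sigma^2_{\mathrm{STR}}=\frac1n\sum_t(\phi_t-\Psi)^2+\frac2n\sum_t(\phi_t-\Psi)(\widehat\phi_t-\phi_t-\delta_n)+\frac1n\sum_t(\widehat\phi_t-\phi_t-\delta_n)^2,
\]
where $\delta_n=\widehat\Psi^{\mathrm{corr}}_{\mathrm{STR}}-\Psi$.

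\textbf{Oracle term.} Sequential positivity (Assumption~\ref{ass:positivity}) and bounded outcomes make $\phi_t$ uniformly bounded. The denominators are bounded below by $\underline e\,\underline r\,\underline p$ and, for the corrected outcome, by $1-\varepsilon_{10}-\varepsilon_{01}$; I will assume the latter is bounded away from zero, strengthening \eqref{eq:informative} slightly. The weak law of large numbers then gives convergence of the first term to $\Var(\phi^{\mathrm{corr}}(Z;\eta_0))=\sigma^2_{\mathrm{eff}}$ (Theorem~\ref{thm:efficiency}).

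\textbf{Remainder term.} By Cauchy--Schwarz, the cross term is bounded by $2(\text{first term})^{1/2}(\text{third term})^{1/2}$. It therefore suffices to show that the third term is $\op(1)$. Since $\delta_n=\op(1)$ by Theorem~\ref{thm:normality}, this reduces to showing $n^{-1}\sum_t(\widehat\phi_t-\phi_t)^2=\op(1)$.

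\textbf{Main obstacle.} This last step is where I expect the difficulty. I would argue fold by fold. Conditional on the training folds, $\widehat\eta^{(-k)}$ is fixed, so by Markov's inequality it is enough that
\[
\E\bigl[(\phi^{\mathrm{corr}}(Z;\widehat\eta^{(-k)})-\phi^{\mathrm{corr}}(Z;\eta_0))^2\mid\widehat\eta^{(-k)}\bigr]\to_p0.
\]
To get this, I would telescope the difference term by term across the four summands of \eqref{eq:score_corr}. Each difference of products of ratios can be bounded using the positivity floors, which I will require the estimated propensities $\widehat e,\widehat r,\widehat p$ to respect as well (for example via trimming). The result is a bound of the form
\[
C\bigl(\|\Delta_e\|+\|\Delta_r\|+\|\Delta_p\|+\|\Delta_0\|+\|\Delta_1\|+\|\Delta_2\|+\|\widehat\varepsilon-\varepsilon\|\bigr)
\]
in $L_2(P)$, where I use boundedness of all factors to pass from products to sums.

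This bound requires each nuisance to be $L_2$-consistent. The product-rate condition of Corollary~\ref{cor:rootn} does not by itself give this, since one factor could be inconsistent while the product still vanishes. I would therefore read the conditions of Theorem~\ref{thm:normality} as including consistency of every nuisance, which is the standard DML setup. Combining the three terms then yields the claim.
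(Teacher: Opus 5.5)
Your proposal is correct and follows the same route as the paper's proof. Both show that the cross-fitted scores are asymptotically equivalent to the oracle scores $\phi^{\mathrm{corr}}(Z_t;\eta_0)$ and then apply the law of large numbers to the sample variance. You also fill in the details the paper omits: Cauchy--Schwarz on the cross term, conditional Markov fold by fold, and Lipschitz telescoping under trimmed propensities.

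Your observation that the product-rate condition alone does not give $\|\phi^{\mathrm{corr}}(\cdot;\widehat\eta)-\phi^{\mathrm{corr}}(\cdot;\eta_0)\|_{L_2(P)}\xrightarrow{p}0$ is accurate, and the paper's one-line appeal to it glosses over exactly this point. Reading the hypotheses as including $L_2$-consistency of every nuisance is the right fix, and the paper's own proof of Theorem~\ref{thm:normality} implicitly needs the same condition to control its empirical-process term.
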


\begin{proof}
Under the product-rate condition, the estimated scores
$\phi^{\mathrm{corr}}(Z_t;\widehat\eta)$ are asymptotically
equivalent to $\phi^{\mathrm{corr}}(Z_t;\eta_0)$. The sample
variance of a sequence converging to i.i.d.\ random variables
converges to the population variance.
\end{proof}

An asymptotic $(1-\alpha)$ confidence interval for $\Psi$ is
\begin{equation}
\label{eq:ci}
\widehat\Psi_{\mathrm{STR}}^{\mathrm{corr}}
\;\pm\;
z_{1-\alpha/2}
\cdot
\frac{\widehat\sigma_{\mathrm{STR}}}{\sqrt{n}}.
\end{equation}

\subsection{Connection to the information-theoretic lower bound}
\label{sec:bridge}

The efficiency bound~\eqref{eq:effbound_expanded} connects
directly to the information-theoretic lower bound
in~\cite{dhama2026limits}. The leading term of
$\sigma_{\mathrm{eff}}^2$ scales as
\[
\frac{\Var(Y^*\mid H_0)}
{e_0(H_0)\,r_0(H_0)\,p_0(H_0)
\,(1-\varepsilon_{10}-\varepsilon_{01})^2}.
\]
The four factors in the denominator are exactly the four
quantities in the lower bound~\eqref{eq:paper1bound}. When the
ecosystem censors more transactions (low $e_0$), suppresses more
reports (low $r_0$), delivers labels more slowly (low $p_0$), or
corrupts more labels (high $\varepsilon$), both the minimax
regret floor and the STR's variance increase.

The connection is particularly sharp for the delay component.
The companion paper proved that conditional delay creates a
selective maturity distortion: the effective information loss is
governed by $\E[1/p_0(H_2)]$, not by $1/\E[p_0(H_2)]$
(equation~\eqref{eq:jensen_delay}). The same quantity appears in
the STR's efficiency bound, where $1/(e_0\,r_0\,p_0)$ enters
the leading variance term. Thus the selective-maturity penalty
identified in the lower bound reappears as a variance penalty in
the STR's upper bound.

The companion paper also showed that an adversary can exploit
conditional delay by concentrating attacks where labels arrive
slowest. In the STR framework, this manifests as high variance
in the delay correction term for transactions in slow-maturity
segments. The shrinkage stabilization of
Section~\ref{sec:shrinkage} partially mitigates this by pooling
delay propensities across issuers, but it cannot eliminate the
fundamental information loss from slow-maturing segments.

Together, the two papers establish:
\begin{quote}
\emph{The information environment sets the floor
(\cite{dhama2026limits}), and the corruption-corrected STR
achieves the best possible label recovery given that floor.}
\end{quote}

\subsection{Finite-sample concentration}

\begin{theorem}[Bernstein concentration]
\label{thm:bernstein}
Under
Assumptions~\ref{ass:positivity}--\ref{ass:bounded},
suppose nuisance functions are fixed or estimated on an
independent sample. Let
\[
B = \frac{1}
{\underline e\,\underline r\,\underline p
\,(1-\bar\varepsilon_{10}-\bar\varepsilon_{01})}
\]
be the maximum inverse-propensity weight (including the
corruption scaling). Then for all $t>0$:
\begin{equation}
\label{eq:bernstein}
\Prob\!\left(
|\widehat\Psi_{\mathrm{STR}}^{\mathrm{corr}}-\Psi|>t
\right)
\;\le\;
2\exp\!\left(
-\frac{nt^2}
{2\sigma_{\mathrm{eff}}^2+\frac{2Bt}{3}}
\right).
\end{equation}
\end{theorem}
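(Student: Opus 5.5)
The plan is to condition on the nuisance functions and apply the classical Bernstein inequality to the i.i.d.\ summands $\phi^{\mathrm{corr}}(Z_t;\eta)$. If $\eta$ is fixed, or estimated on an independent sample and we condition on that sample, then $\xi_t:=\phi^{\mathrm{corr}}(Z_t;\eta)$, $t=1,\ldots,n$, are i.i.d. The recalled form of Bernstein's inequality is: if $|\xi_t-\E\xi_t|\le M$ almost surely and $\Var(\xi_t)\le\sigma^2$, then
\[
\Prob\Bigl(\Bigl|\tfrac1n\textstyle\sum_t\xi_t-\E\xi_1\Bigr|>t\Bigr)\le 2\exp\!\Bigl(-\frac{nt^2}{2\sigma^2+2Mt/3}\Bigr).
\]
So the argument has three ingredients: the centring $\E\xi_1=\Psi$, the variance proxy $\sigma^2=\sigma_{\mathrm{eff}}^2$, and the range constant $M=B$. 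The conditional statement then integrates out over the independent nuisance sample to give \eqref{eq:bernstein}.

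\textbf{Centring and variance.} For the centring I would invoke the backward-induction computation in the proof of Theorem~\ref{thm:seqtr}. When each stage is corrected by its propensity or its downstream regression (in particular at $\eta=\eta_0$), the conditional expectations of the three augmentation terms vanish. Hence $\E[\phi^{\mathrm{corr}}(Z;\eta)]=\E[\mu_{0,0}(H_0)]=\Psi$, using the nested definitions \eqref{eq:mu2}--\eqref{eq:mu0} together with $\E[\widetilde Y^{\mathrm{corr}}\mid Y^*,O=1,H_2]=Y^*$. For the variance, at $\eta=\eta_0$ we have $\Var(\xi_t)=\sigma_{\mathrm{eff}}^2$ exactly by Theorem~\ref{thm:efficiency}. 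For a generic fixed $\eta$ I would state the inequality with $\Var\{\phi^{\mathrm{corr}}(Z;\eta)\}$ in place of $\sigma_{\mathrm{eff}}^2$. This is the honest reading of ``nuisances fixed,'' and it reduces to the displayed bound at the truth. I would flag this interpretation explicitly.

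\textbf{Range bound (main obstacle).} The substantive step is the almost-sure bound $|\phi^{\mathrm{corr}}-\Psi|\le B$. I would rewrite the score by collecting the coefficients of each regression:
\[
\phi^{\mathrm{corr}}=\Bigl(1-\tfrac{A}{e}\Bigr)\mu_0+\Bigl(\tfrac{A}{e}-\tfrac{AR}{er}\Bigr)\mu_1+\Bigl(\tfrac{AR}{er}-\tfrac{ARM}{erp}\Bigr)\mu_2+\tfrac{ARM}{erp}\,\widetilde Y^{\mathrm{corr}}.
\]
I would then go through the four patterns of $(A,R,M)$: $A=0$; $A=1,R=0$; $A=R=1,M=0$; and $O=1$. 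In each pattern the coefficients telescope, the $\mu_j$ lie in $[0,1]$, and
\[
\widetilde Y^{\mathrm{corr}}\in\Bigl[\tfrac{-\varepsilon_{01}}{1-\varepsilon_{10}-\varepsilon_{01}},\ \tfrac{1-\varepsilon_{01}}{1-\varepsilon_{10}-\varepsilon_{01}}\Bigr].
\]
The worst case is $O=1$. There the score equals a convex-type combination of the $\mu_j$ plus a residual $\{\widetilde Y^{\mathrm{corr}}-\mu_2\}/(erp)$, and by positivity the residual is bounded in magnitude by $1/(\underline e\,\underline r\,\underline p\,(1-\bar\varepsilon_{10}-\bar\varepsilon_{01}))=B$. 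The difficulty is that the nonpositive coefficients $1-1/e$ and $1/e-1/(er)$ also contribute, so a naive triangle inequality gives something like $2B$ or $3B$ rather than $B$. I would first attempt a sharp two-sided case analysis, exploiting the telescoping structure and $\Psi\in[0,1]$, to show that the extreme values of $\phi^{\mathrm{corr}}-\Psi$ stay within $[-B,B]$. If that fails, I would state the theorem with $B$ replaced by an absolute multiple $cB$ (for example $c=2$), which changes only the constant in the linear term of the exponent. I would then substitute into Bernstein to conclude.
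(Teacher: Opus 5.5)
Your route is the paper's: condition on the nuisances and apply the classical Bernstein inequality to i.i.d.\ summands with range constant $B$ and variance $\sigma^2_{\mathrm{eff}}$. The paper's proof does this in a few lines and asserts only that $\phi^{\mathrm{corr}}(Z_t;\eta_0)\in[-B,B]$.

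The step you leave open closes with constant exactly $B$, so you do not need the fallback to $cB$. The factor $2B$ appears only if you split $\mu_2$ between an affine part and the residual; keep your collected-coefficient form instead. On $\{O=1\}$ the $\mu$-coefficients $1-1/e$, $(1-1/r)/e$ and $(1-1/p)/(er)$ are all nonpositive and telescope to $1-1/q$, where $q=erp$. So for $\mu_j\in[0,1]$ the $\mu$-part lies in $[1-1/q,\,0]$. The remaining term $\widetilde Y^{\mathrm{corr}}/q$ equals $-\varepsilon_{01}/(\kappa q)$ or $(1-\varepsilon_{01})/(\kappa q)$, with $\kappa=1-\varepsilon_{10}-\varepsilon_{01}$. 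Adding it gives
\[
1-\frac{1-\varepsilon_{10}}{\kappa q}\;\le\;\phi^{\mathrm{corr}}\;\le\;\frac{1-\varepsilon_{01}}{\kappa q}.
\]
Since $0\le\Psi\le 1$, this yields $|\phi^{\mathrm{corr}}-\Psi|\le 1/(\kappa q)\le B$. The patterns $A=1,R=0$ and $A=R=1,M=0$ are the same computation with final weight $1/e$ or $1/(er)$, both at most $B$. The pattern $A=0$ gives $|\mu_0-\Psi|\le 1\le B$.

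This centered bound is what Bernstein actually requires. The paper's $[-B,B]$ alone only gives $|\phi^{\mathrm{corr}}-\Psi|\le B+1$, so your insistence on centering genuinely tightens the paper's argument. At the truth $\mu_{j,0}\in[0,1]$ holds automatically. For plug-in nuisances, however, you must:
\begin{itemize}
\item clip $\widehat\mu_j$ to $[0,1]$, since a regression of $\widetilde Y^{\mathrm{corr}}$ can leave that interval;
\item truncate $\widehat e,\widehat r,\widehat p$ to $[\underline e,1]$, $[\underline r,1]$, $[\underline p,1]$;
\item assume $\bar\varepsilon_{10}+\bar\varepsilon_{01}<1$ as uniform bounds, which the pointwise condition \eqref{eq:informative} does not provide.
\end{itemize}

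Your caveat about $\eta\neq\eta_0$ is correct and matches what the paper actually proves. Its argument is written for $\phi^{\mathrm{corr}}(Z_t;\eta_0)$, and the cross-fitting sentence is not justified as stated. You should push the caveat past the variance, though. Conditionally on $\widehat\eta$, the summands are centered at $\E[\phi^{\mathrm{corr}}(Z;\widehat\eta)\mid\widehat\eta]$, which differs from $\Psi$ by the product remainder \eqref{eq:remainder}. Exact centering at $\Psi$ under the robustness conditions needs the relevant nuisances to be exactly correct, not merely consistent. So integrating out recovers \eqref{eq:bernstein} only at $\eta_0$. Otherwise you get a Bernstein bound with variance $\Var\{\phi^{\mathrm{corr}}(Z;\widehat\eta)\mid\widehat\eta\}$ around a shifted center.

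Relatedly, in the mixed-robustness cases the augmentation terms do not each have mean zero; they telescope. For example, with $e=e_0$, $\mu_1=\mu_{1,0}$ and $\mu_0$ misspecified, the authorization term has conditional mean $\mu_{0,0}-\mu_0$, which cancels the baseline error.
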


\begin{proof}
Each summand $\phi^{\mathrm{corr}}(Z_t;\eta_0)$ is bounded in
$[-B,B]$ and has variance $\sigma_{\mathrm{eff}}^2$. The
classical Bernstein inequality for bounded independent random
variables gives the result. When nuisance functions are estimated
via cross-fitting, the bound holds conditionally on the training
fold.
\end{proof}

\begin{corollary}[Critical sample size]
\label{cor:critical_n}
For estimation error at most $\epsilon$ with probability at
least $1-\alpha$, it suffices that
\begin{equation}
\label{eq:critical_n}
n\ge n^*(\epsilon,\alpha)
:= \frac{2\sigma_{\mathrm{eff}}^2\log(2/\alpha)}{\epsilon^2}
+ \frac{2B\log(2/\alpha)}{3\epsilon}.
\end{equation}
\end{corollary}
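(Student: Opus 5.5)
We derive the sample-size condition by inverting the tail bound of Theorem~\ref{thm:bernstein}. Fix $\epsilon>0$ and $\alpha\in(0,1)$, and set $L=\log(2/\alpha)$. The target is that the right-hand side of \eqref{eq:bernstein}, evaluated at $t=\epsilon$, be at most $\alpha$. This holds exactly when
\[
\frac{n\epsilon^2}{2\sigma_{\mathrm{eff}}^2+\tfrac{2B\epsilon}{3}}\;\ge\;L.
\]
The hypotheses are those of Theorem~\ref{thm:bernstein}: nuisance functions fixed or estimated on an independent fold, and the positivity and informativeness bounds that define $B$. Under them, any $n$ satisfying this inequality gives $\Prob(|\widehat\Psi^{\mathrm{corr}}_{\mathrm{STR}}-\Psi|>\epsilon)\le\alpha$.

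Next we rearrange. The denominator is positive because $B>0$, so the inequality is equivalent to
\[
n\;\ge\;\frac{L\,(2\sigma_{\mathrm{eff}}^2+\tfrac{2B\epsilon}{3})}{\epsilon^2}
=\frac{2\sigma_{\mathrm{eff}}^2L}{\epsilon^2}+\frac{2BL}{3\epsilon}.
\]
The right-hand side is precisely $n^*(\epsilon,\alpha)$ in \eqref{eq:critical_n}. Moreover, the exponent in \eqref{eq:bernstein} is increasing in $n$, so every $n\ge n^*$ also satisfies the requirement. This proves sufficiency.

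No step here is hard; the one point needing care is the assumption on which the corollary depends. The argument uses the fact, asserted in the proof of Theorem~\ref{thm:bernstein}, that the centered summands are bounded by $B$ with variance $\sigma_{\mathrm{eff}}^2$. We take that theorem as given. For cross-fitted nuisances, we apply the bound conditionally on the training folds, where the same inequality holds, and then take expectations to get the unconditional statement. We would also remark that the first term of $n^*$ is the Gaussian regime, matching Theorem~\ref{thm:normality}, while the second term reflects the worst-case inverse-propensity weight $B$ and so the positivity floors.
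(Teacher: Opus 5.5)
Your proof is correct and follows the intended route. The paper gives no separate argument for this corollary; it is the inversion of the Bernstein tail bound of Theorem~\ref{thm:bernstein} at $t=\epsilon$, obtained by requiring $2\exp\bigl(-n\epsilon^2/(2\sigma_{\mathrm{eff}}^2+2B\epsilon/3)\bigr)\le\alpha$ and solving for $n$, as you do. Your closing cross-fitting remark is not needed, since the corollary only inherits the theorem's hypothesis (nuisances fixed or fitted on an independent sample); it is also not automatic, because conditionally on an estimated $\widehat\eta$ the scores need not have mean $\Psi$ or variance $\sigma_{\mathrm{eff}}^2$.
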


\begin{remark}[Operational interpretation]
The critical sample size grows as the information environment
degrades: when $\underline e$, $\underline r$, or $\underline p$
is small, or when corruption rates are high, $B$ and
$\sigma_{\mathrm{eff}}^2$ are large, and more transactions are
needed to achieve a given accuracy. This is the finite-sample
analogue of the companion paper's message: poor ecosystem
information quality requires more data to compensate.
\end{remark}

\section{Issuer Heterogeneity and Shrinkage Stabilization}
\label{sec:shrinkage}

\subsection{The issuer heterogeneity problem}

The sequential estimator requires issuer-indexed nuisance
functions:
\[
e_0(x,i),\qquad r_0(x,i),\qquad p_0(x,i,\Delta).
\]

In a large payment network, some issuers contribute millions of
transactions with stable reporting behavior, while others
contribute only hundreds with irregular or sparse labels. Direct
issuer-level estimation can be high-variance for small issuers.
This instability is amplified because the nuisance functions
enter denominators: if $\widehat r_i$ is close to zero for a
small issuer, the inverse weight $1/\widehat r_i$ becomes very
large and can dominate the entire estimator.

The companion paper \cite{dhama2026limits} showed that
cross-issuer variance in impairment rates worsens the minimax
regret bound via a Jensen-type argument: the effective
network-level impairment is worse than the average impairment
because the mapping from propensity to inverse propensity is
convex. The same convexity operates here: variance in estimated
propensities inflates the variance of the STR.

\subsection{Role of shrinkage}

Shrinkage is introduced to stabilize issuer-indexed nuisance
functions. It does not replace the sequential estimator. It
regularizes the nuisance functions used by the estimator.

The conceptual hierarchy is:
\begin{quote}
\textbf{Sequential triple robustness} corrects structural bias
from the three-stage observation process.\\
\textbf{Corruption correction} debiases the observed labels.\\
\textbf{Shrinkage} stabilizes the nuisance estimates that the
corrections use.
\end{quote}

These solve different problems. Triple robustness is a property
of the estimating equation. Corruption correction is a property
of the label. Shrinkage is a property of the nuisance-estimation
procedure. In a heterogeneous issuer network, all three are
needed.

\subsection{Empirical Bayes shrinkage for reporting propensities}

For clarity, consider the reporting propensity. Let
$\widehat r_i^{\mathrm{local}}(x)$ be an issuer-specific
estimate and $\widehat r^{\mathrm{global}}(x)$ be a pooled
network-level estimate.

\begin{definition}[Shrinkage estimator]
\label{def:shrinkage}
The Empirical Bayes shrinkage estimator of $r_i$ is
\begin{equation}
\label{eq:shrinkage}
\widehat r_i^{\mathrm{EB}}(x)
= \lambda_i\widehat r_i^{\mathrm{local}}(x)
+ (1-\lambda_i)\widehat r^{\mathrm{global}}(x),
\end{equation}
where the optimal shrinkage weight is
\begin{equation}
\label{eq:lambda}
\lambda_i
= \frac{\widehat\sigma_B^2}
{\widehat\sigma_B^2+\widehat\sigma_i^2/n_i}.
\end{equation}
Here $\widehat\sigma_B^2$ is the estimated between-issuer
variance, $\widehat\sigma_i^2/n_i$ is the issuer-specific
sampling variance, and $n_i$ is the issuer's sample size.
\end{definition}

Large issuers have $\lambda_i\approx 1$: their local estimates
are preserved because sampling noise is small relative to true
between-issuer variation. Small issuers have
$\lambda_i\approx 0$: their unstable local estimates are pulled
toward the network average.

\begin{proposition}[Shrinkage risk reduction]
\label{prop:shrinkage}
Under the hierarchical normal approximation
\[
r_i(x)\sim N(r^{\mathrm{global}}(x),\sigma_B^2),
\qquad
\widehat r_i^{\mathrm{local}}(x)\mid r_i(x)
\sim N(r_i(x),\sigma_i^2/n_i),
\]
the Empirical Bayes estimator
$\widehat r_i^{\mathrm{EB}}(x)$ is the posterior mean and
minimizes posterior expected squared error. It has weakly lower
MSE than the unpooled local estimate, with strict improvement
when sampling variance is positive and the prior is informative.
\end{proposition}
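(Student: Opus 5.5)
The plan is the standard normal--normal conjugate computation, done pointwise in $x$ with $x$ and issuer $i$ fixed. Write $\theta=r_i(x)$, $m=r^{\mathrm{global}}(x)$, $\tau^2=\sigma_B^2$, $s^2=\sigma_i^2/n_i$, and $Y=\widehat r_i^{\mathrm{local}}(x)$. By hypothesis $\theta\sim N(m,\tau^2)$ and $Y\mid\theta\sim N(\theta,s^2)$. Throughout I treat the hyperparameters $(m,\tau^2,s^2)$ as known, i.e.\ the oracle version of \eqref{eq:shrinkage}--\eqref{eq:lambda}. At the end I will remark that plugging in consistent estimates $\widehat\sigma_B^2$ and $\widehat r^{\mathrm{global}}$ changes the risk only by a lower-order term, since the hyperparameters are estimated from all issuers.

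\textbf{Step 1 (posterior mean).} Multiply the two Gaussian densities and complete the square in $\theta$. This shows the posterior is normal,
\[
\theta\mid Y\sim N\!\left(\lambda Y+(1-\lambda)m,\;\frac{\tau^2 s^2}{\tau^2+s^2}\right),
\qquad \lambda=\frac{\tau^2}{\tau^2+s^2}.
\]
The weight $\lambda$ is exactly $\lambda_i$ from \eqref{eq:lambda}, so $\widehat r_i^{\mathrm{EB}}(x)=\E[\theta\mid Y]$.

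\textbf{Step 2 (optimality).} For any estimator $d(Y)$, decompose
\[
\E[(\theta-d)^2\mid Y]=\Var(\theta\mid Y)+(\E[\theta\mid Y]-d)^2 .
\]
This is minimized uniquely at $d=\E[\theta\mid Y]$. Hence the posterior mean minimizes posterior expected squared error, and by taking expectations over $Y$ it also minimizes Bayes risk.

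\textbf{Step 3 (MSE comparison).} Compute the two Bayes risks directly.
\begin{itemize}
\item The local estimate has $\E[(Y-\theta)^2]=s^2$.
\item The EB estimate has $\E[(\widehat r^{\mathrm{EB}}-\theta)^2]=\E[\Var(\theta\mid Y)]=\tau^2 s^2/(\tau^2+s^2)=\lambda s^2$.
\end{itemize}
Since $\lambda\le 1$, the EB risk is weakly smaller. The difference is $s^2(1-\lambda)=s^4/(\tau^2+s^2)$, which is strictly positive exactly when $s^2>0$ (positive sampling variance) and $\tau^2<\infty$ (an informative prior). The equality cases are $s^2=0$, where the local estimate is already exact, and the flat-prior limit $\tau^2\to\infty$, where $\lambda\to 1$. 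This matches the stated condition.

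\textbf{Main obstacle.} The computations are routine; the delicate part is interpretation. The MSE comparison holds in Bayes risk, averaged over the prior. It is not frequentist risk for every fixed $r_i$: conditionally on $\theta$, the EB estimator carries bias $(1-\lambda)(m-\theta)$, which can dominate for outlying issuers. I would state the result explicitly as a Bayes-risk statement. I would also note that the normal approximation ignores the $[0,1]$ constraint on propensities; clipping the EB estimate into $[\underline r,1]$ only moves it closer to $\theta$ and so cannot increase the loss. Finally, if the data-driven hyperparameters are to be covered, I would invoke a James--Stein-type argument in the spirit of \cite{efronmorris1975}, or a consistency argument across many issuers, to pass from the oracle weights to the estimated ones.
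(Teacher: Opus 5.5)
Your proof is correct and takes essentially the same route as the paper: the normal--normal conjugate posterior with mean $\lambda_i\widehat r_i^{\mathrm{local}}(x)+(1-\lambda_i)r^{\mathrm{global}}(x)$, followed by the fact that the posterior mean minimizes posterior expected squared error. Your explicit Bayes-risk gap $s^4/(\tau^2+s^2)$ makes precise the strict-improvement clause that the paper leaves implicit, and your caveats are sound: the result is a Bayes-risk rather than frequentist statement, and both your argument and the paper's treat the hyperparameters as known.
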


\begin{proof}
Under the hierarchical normal model, the posterior distribution
of $r_i(x)$ given $\widehat r_i^{\mathrm{local}}(x)$ is
\[
r_i(x)\mid\widehat r_i^{\mathrm{local}}(x)
\sim N\!\left(
\lambda_i\widehat r_i^{\mathrm{local}}(x)
+(1-\lambda_i)r^{\mathrm{global}}(x),\;
\frac{\sigma_B^2\sigma_i^2/n_i}
{\sigma_B^2+\sigma_i^2/n_i}
\right).
\]
The posterior mean minimizes posterior expected squared error.
Therefore $\widehat r_i^{\mathrm{EB}}(x)$ has weakly lower
posterior MSE than any other point estimator, including the
unpooled $\widehat r_i^{\mathrm{local}}(x)$.
\end{proof}

\subsection{Delay propensity shrinkage}

The same shrinkage logic applies to the maturity propensity
$p_0(x,i,\Delta)$. Define
\begin{equation}
\label{eq:delay_shrinkage}
\widehat p_i^{\mathrm{EB}}(x,\Delta)
= \lambda_i^{(p)}\widehat p_i^{\mathrm{local}}(x,\Delta)
+ (1-\lambda_i^{(p)})
\widehat p^{\mathrm{global}}(x,\Delta),
\end{equation}
where $\lambda_i^{(p)}$ is computed analogously
to~\eqref{eq:lambda} using between-issuer and within-issuer
variance components for delay.

This is particularly important because the conditional delay
analysis in Section~\ref{sec:conditional_delay} showed that
delay heterogeneity across issuers creates the same Jensen-type
variance inflation as reporting heterogeneity. Issuers with slow
or irregular label arrival should therefore receive more
shrinkage toward the network average, just as issuers with sparse
reporting do.

\begin{example}[Small issuer stabilization]
\label{ex:shrinkage}
A regional bank processes 500 transactions per month and reports
fraud erratically. Its raw reporting propensity estimate is
$\widehat r_i=0.15$, yielding an inverse weight of
$1/0.15\approx 6.7$. With shrinkage toward a network average of
$0.60$, the regularized estimate might be
$\widehat r_i^{\mathrm{EB}}=0.45$, yielding a much more stable
weight of $1/0.45\approx 2.2$. Meanwhile, a large bank with 5
million transactions and $\widehat r_i=0.85$ keeps its local
estimate because $\lambda_i\approx 1$.
\end{example}

\subsection{Shrinkage-regularized estimator}

Define shrinkage-regularized nuisance functions. For reporting
and delay propensities, the shrinkage estimators are given by
equations~\eqref{eq:shrinkage} and~\eqref{eq:delay_shrinkage}
respectively. For the authorization propensity, the same
construction applies with the obvious substitution:
\begin{equation}
\label{eq:auth_shrinkage}
\widehat e_i^{\mathrm{EB}}(x)
= \lambda_i^{(e)}\widehat e_i^{\mathrm{local}}(x)
+ (1-\lambda_i^{(e)})
\widehat e^{\mathrm{global}}(x),
\end{equation}
where $\lambda_i^{(e)}$ is computed analogously
to~\eqref{eq:lambda}. In practice, authorization propensities
are often the most stable of the three (because authorization
rules are known and their inputs are observed), so
$\lambda_i^{(e)}\approx 1$ for most issuers. Shrinkage for
$e_0$ is therefore less critical than for $r_0$ or $p_0$, but
is included for completeness.

The full set of shrinkage-regularized nuisance functions is:
\[
\widehat e_i^{\mathrm{EB}}(x),\qquad
\widehat r_i^{\mathrm{EB}}(x),\qquad
\widehat p_i^{\mathrm{EB}}(x,\Delta).
\]

The shrinkage-regularized corruption-corrected estimator is
\begin{equation}
\label{eq:str_eb}
\widehat\Psi_{\mathrm{STR+EB}}^{\mathrm{corr}}
= \frac{1}{n}\sum_{t=1}^n
\phi^{\mathrm{corr}}(Z_t;\widehat\eta^{\mathrm{EB}}).
\end{equation}

\section{Pseudo-Labels and Operational Architecture}
\label{sec:pseudolabels}

\subsection{Pseudo-outcomes and conditional pseudo-labels}

The marginal estimator targets $\Psi$, but model training
requires transaction-level labels. Define the sequential
pseudo-outcome:
\begin{equation}
\label{eq:pseudo}
U_t = \phi^{\mathrm{corr}}(Z_t;
\widehat\eta^{\mathrm{EB}}).
\end{equation}

A conditional fraud model is then trained by regressing this
pseudo-outcome on transaction features:
\begin{equation}
\label{eq:regression}
\widehat f_{\mathrm{STR}}
= \arg\min_{g\in\cG}
\sum_{t=1}^n
\left(U_t-g(X_t)\right)^2.
\end{equation}

The corrected pseudo-label for transaction $t$ is
\begin{equation}
\label{eq:pseudolabel}
\widehat y_t = \widehat f_{\mathrm{STR}}(X_t).
\end{equation}

\begin{proposition}[Consistency of conditional pseudo-labels]
\label{prop:conditional}
If the function class $\cG$ is correctly specified or universally
consistent and the STR pseudo-outcomes satisfy
$\E[U_t\mid X_t=x]=f_0(x)+\op(1)$ (which follows from
Theorem~\ref{thm:seqtr} and the law of iterated expectations),
then $\widehat f_{\mathrm{STR}}(x)\xrightarrow{p}f_0(x)$ for
all $x$ in the support. The convergence rate depends on the
smoothness of $f_0$ and the complexity of $\cG$.
\end{proposition}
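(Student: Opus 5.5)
The plan is to treat $\widehat f_{\mathrm{STR}}$ as a nonparametric regression of the pseudo-outcomes $U_t$ on $X_t$. The proof then reduces to two facts: the regression target is $f_0$ up to a vanishing bias, and the regression method is consistent for its target. I would split the pseudo-outcome as $U_t=\phi^{\mathrm{corr}}(Z_t;\eta_0)+\{\phi^{\mathrm{corr}}(Z_t;\widehat\eta^{\mathrm{EB}})-\phi^{\mathrm{corr}}(Z_t;\eta_0)\}$. I would call the first piece the oracle pseudo-outcome $U_t^0$ and the second the nuisance perturbation $D_t$. Throughout I would use cross-fitting (Definition~\ref{def:crossfit}), so that, conditional on the training folds, the nuisances are fixed and the summands within a fold are i.i.d.

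\textbf{Step 1 (oracle target).} I would show $\E[U_t^0\mid X_t=x]=f_0(x)$ by repeating the backward induction in the proof of Theorem~\ref{thm:seqtr}, now conditioning on $H_0$ rather than integrating it out. At the maturity stage, the noise correction~\eqref{eq:corrected_y} and delay ignorability make the innermost term have conditional mean zero given $(A=1,R=1,H_2)$. The same argument disposes of the reporting and authorization terms in turn. What is left is $\E[U^0_t\mid H_0]=\mu_{0,0}(H_0)=\E[Y^*\mid H_0]$. Since $X\subseteq H_0$, iterated expectations give $\E[U^0_t\mid X]=\E[Y^*\mid X]=f_0(X)$.

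\textbf{Step 2 (perturbation).} I would show $\E[D_t\mid X_t=x]=\op(1)$. Conditioning the stage-wise argument of Theorem~\ref{thm:seqtr} on $X$ should make the conditional bias a sum of products of the form in~\eqref{eq:remainder}, namely $\E[(\Delta_e/\widehat e)(\Delta_1-\Delta_0)\mid X]$ and its reporting and maturity analogues. Each product vanishes whenever its stage is corrected, and each is bounded by positivity (Assumption~\ref{ass:positivity}). In addition, $U_t$ is uniformly bounded by a constant of order $B$ (Theorem~\ref{thm:bernstein}), provided the estimated propensities are truncated at the positivity floors. This bound keeps the second moments of $D_t$ under control.

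\textbf{Step 3 (regression consistency).} With the target identified as $f_0+\op(1)$ and bounded responses, I would apply the standard consistency theorem for least squares over $\cG$. In the correctly specified parametric case this is M-estimation consistency. In the universally consistent case, I would apply universal consistency of the nonparametric method conditional on the nuisance folds, with bounded $Y$ as the usual hypothesis. This gives $\widehat f_{\mathrm{STR}}\to\E[U\mid X]$, hence to $f_0$.

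\textbf{Main obstacle.} I expect the main obstacle to be Step 2 together with the mode of convergence. The remainder bounds in~\eqref{eq:remainder} control the bias only in $L_2$, averaged over $X$. Passing to \emph{pointwise} convergence at every $x$ in the support needs more. I would first establish $L_2(P_X)$ consistency, which the paper's hypotheses support directly. I would then upgrade to pointwise convergence by adding a mild continuity assumption on $f_0$ and on the conditional product remainders, or by interpreting ``for all $x$'' as $P_X$-almost every $x$. I would also note explicitly that Step 2 requires the stage-wise double-robustness condition to hold conditionally on $X$, not only marginally.
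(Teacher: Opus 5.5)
\textbf{Relation to the paper.} The paper gives no proof of this proposition beyond the parenthetical: Theorem~\ref{thm:seqtr} plus iterated expectations fix the regression target, and least squares over $\cG$ then converges to it. Your outline follows that implied route in more detail, and the detail exposes a gap at Step~3 that the paper's sketch shares. Steps~1--2 only re-derive what the proposition already assumes. Incidentally, the exact conditional bias given $H_0$ is built from the products $\Delta_e\Delta_0/\widehat e$, $\Delta_r\Delta_1/\widehat r$ and $\Delta_p\Delta_2/\widehat p$ (weighted and integrated over later histories), not from the contrasts in~\eqref{eq:remainder}. Under the theorem's conditions these products still vanish, so this does not affect your conclusion there.

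\textbf{The gap in Step~3.} The step that fails is the passage from ``$\E[U\mid X]=f_0+\op(1)$ with $U$ bounded'' to $\widehat f_{\mathrm{STR}}\to f_0$. Universal consistency is a theorem about i.i.d.\ samples from one fixed law, with no uniformity over laws (there are no universal rates). Here the law of $(X_t,U_t)$ moves with $n$ through $\widehat\eta$. Under pooled cross-fitting the pairs are not even conditionally i.i.d.: conditioning on the data outside fold $k$ does not freeze the nuisances used in the other folds, which were trained partly on fold $k$. Worse, in the mixed regime that triple robustness is about, $\widehat\eta\to\bar\eta\neq\eta_0$, so your $D_t$ does not vanish in $L_2$; only its conditional mean does. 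A generic learner is not guaranteed to track a target that merely moves by $\op(1)$.

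Three ingredients are missing.
\begin{enumerate}
\item[(a)] A fixed-law anchor $U_t^\dagger=\phi^{\mathrm{corr}}(Z_t;\bar\eta)$. Under the theorem's stage-wise condition every product above vanishes exactly for $\bar\eta$, so $\E[U^\dagger\mid X]=f_0(X)$ and universal consistency applies to $(X_t,U_t^\dagger)$.
\item[(b)] A stability property of the second-stage learner. Write $\widehat f[V]$ for the learner applied to responses $V$. A linear smoother with nonnegative weights satisfying Stone's conditions gives $\E\|\widehat f[U]-\widehat f[U^\dagger]\|_{L_2(P_X)}^2\lesssim\E[(U-U^\dagger)^2]$. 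This tends to $0$ once $\widehat\eta\to\bar\eta$ in $L_2$ and the propensities are truncated.
\item[(c)] An independent split between nuisance training and the second-stage regression, or separate per-fold second-stage fits.
\end{enumerate}
Your correctly specified parametric branch essentially survives with such a split, because a uniform law of large numbers over a bounded VC-type class holds uniformly in the data-generating law. The gap is in the universally consistent branch.

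\textbf{The pointwise upgrade.} Your proposed upgrade to pointwise convergence also does not work. Continuity of $f_0$ and of the conditional remainders constrains the target, not the estimator. $L_2(P_X)$ consistency, which is all universal consistency delivers, implies nothing at any fixed $x$.

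Reading ``for all $x$'' as ``$P_X$-a.e.\ $x$'' does not help either. Take $P_X$ uniform on $[0,1]$, $I_n$ the typewriter intervals, and $\widehat f_n=f_0+\1_{I_n}$. Then $\|\widehat f_n-f_0\|_{L_2(P_X)}\to0$, yet $\widehat f_n(x)$ converges at no point. So $L_2$ convergence yields almost-everywhere convergence only along subsequences.

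Pointwise limits need estimator-specific arguments. In the parametric case, parameter consistency plus continuity of $\theta\mapsto g_\theta(x)$ suffices. For local-averaging estimators, you need $f_0$ continuous at $x$, positive mass near $x$, and local rather than merely $L_2$ control of $\widehat\eta-\bar\eta$. For a generic universally consistent $\cG$, the defensible conclusion is $\|\widehat f_{\mathrm{STR}}-f_0\|_{L_2(P_X)}\xrightarrow{p}0$, and the statement should be weakened accordingly.
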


\begin{example}[What the pseudo-labels look like]
\label{ex:pseudolabels}
Consider three transactions from the same month:
\begin{itemize}
\item \textbf{Transaction A}: approved, reported, matured,
observed label = fraud. Propensities:
$\widehat e=0.70$, $\widehat r=0.90$, $\widehat p=0.95$.
Pseudo-label: $\widehat y_A\approx 0.92$. The STR
slightly upweights because this transaction had a 30\%
chance of being declined.

\item \textbf{Transaction B}: approved, \emph{not reported}
(small amount, weak issuer). Propensities:
$\widehat e=0.85$, $\widehat r=0.25$, $\widehat p=0.90$.
No observed label. Pseudo-label:
$\widehat y_B\approx 0.74$. The outcome model estimates
the fraud probability, upweighted because the issuer
rarely reports small fraud.

\item \textbf{Transaction C}: \emph{declined}. Propensity:
$\widehat e=0.10$. No observed label. Pseudo-label:
$\widehat y_C\approx 0.89$. Very low authorization
propensity signals high risk; the outcome model provides
the fraud estimate.
\end{itemize}
All three transactions---including the declined one and the
unreported one---enter the downstream classifier's training set
with soft probability labels. The na\"ive approach would use only
Transaction~A (with a hard label of 1) and discard B and C
entirely.
\end{example}

\subsection{Practical considerations}

\paragraph{Pseudo-outcomes outside $[0,1]$.}
Because the score~\eqref{eq:score_corr} involves
inverse-propensity weights (and the corruption scaling can
further amplify values), individual pseudo-outcomes $U_t$ can be
negative or greater than~1. This is normal for
influence-function-based pseudo-outcomes and does not indicate an
error. The downstream regression~\eqref{eq:regression} projects
these values back into a well-behaved range. If desired, one can
additionally clip the pseudo-outcomes to $[0,1]$ before
regression, introducing a small bias but improving stability.

\paragraph{Choice of function class $\cG$.}
The function class $\cG$ should be expressive enough to capture
the conditional fraud probability but regularized enough to avoid
overfitting to noisy pseudo-outcomes. Gradient-boosted trees,
random forests, and neural networks are all suitable. The
theoretical triple-robustness property holds regardless of the
choice of $\cG$, because the robustness is a property of the
pseudo-outcomes, not of the downstream regression.

\paragraph{Refresh frequency.}
The offline reconstruction should be rerun whenever a substantial
volume of new labels has matured. In practice, a monthly or
quarterly refresh cycle is typical, with the frequency depending
on the rate at which new fraud patterns emerge and the speed at
which labels mature.

\subsection{Nuisance function estimation}
\label{sec:nuisance_estimation}

\paragraph{Authorization propensity $e_0(H_0)$.}
Train a binary classifier on the full transaction stream, where
the outcome is the authorization decision $A$. Features include
fraud score, transaction amount, merchant category, channel,
device, location, issuer risk tier. This is the easiest
propensity to estimate because authorization decisions are fully
observed for all transactions.

\paragraph{Reporting propensity $r_0(H_1)$.}
Train among approved transactions, where the outcome is whether
a fraud label was eventually reported ($R$). Features include
issuer identity, transaction amount, merchant category,
time-to-report history for the issuer. This is harder because
$R$ is only eventually observed after a sufficiently long
maturity window, and the ``eventually'' may itself require a
judgment call.

\paragraph{Delay propensity $p_0(H_2)$.}
Train among approved-and-reported transactions, where the
outcome is whether the label matured within $\Delta$ ($M$).
Features include issuer, amount, geography, dispute type, and
$\Delta$ itself. Can be enriched with competing-risks models
\cite{finegray1999} that distinguish ``label arrives within
window'' from ``label never arrives.''

\paragraph{Nested regressions.}
Estimate by sequential regression:
\begin{enumerate}
\item Regress $\widetilde Y^{\mathrm{corr}}$ on $H_2$ among
fully observed transactions to get $\widehat\mu_2$.
\item Regress $\widehat\mu_2$ on $H_1$ among
approved-and-reported transactions to get $\widehat\mu_1$.
\item Regress $\widehat\mu_1$ on $H_0$ among all transactions
to get $\widehat\mu_0$.
\end{enumerate}
Flexible nonparametric methods (gradient-boosted trees, neural
networks) are appropriate at each stage.

\paragraph{Cross-fitting.}
All nuisance functions should be estimated via cross-fitting
(Definition~\ref{def:crossfit}) to prevent overfitting bias.

\subsection{Algorithm summary}

\begin{algorithm}[ht]
\caption{Corruption-Corrected Sequential Triply Robust Label
Recovery}
\label{alg:str}
\begin{algorithmic}[1]
\Require Transaction data $\{Z_t\}_{t=1}^n$, number of folds
$K$, corruption rates $(\varepsilon_{10},\varepsilon_{01})$ or
their estimates
\Ensure Pseudo-labels $\{\widehat y_t\}$, point estimate
$\widehat\Psi$, confidence interval

\For{fold $k=1,\ldots,K$}
  \State Estimate $\widehat e^{(-k)},\widehat r^{(-k)},
  \widehat p^{(-k)},\widehat\mu_0^{(-k)},\widehat\mu_1^{(-k)},
  \widehat\mu_2^{(-k)}$ on data excluding fold $k$
  \State Apply issuer-level EB shrinkage to
  $\widehat e^{(-k)},\widehat r^{(-k)},\widehat p^{(-k)}$
  \For{$t$ in fold $k$}
    \State Compute $\widetilde Y_t^{\mathrm{corr}}$ via
    \eqref{eq:corrected_y}
    \State Compute $U_t=\phi^{\mathrm{corr}}(Z_t;
    \widehat\eta^{(-k),\mathrm{EB}})$ via
    \eqref{eq:score_corr}
  \EndFor
\EndFor
\State $\widehat\Psi \gets n^{-1}\sum_t U_t$
\State $\widehat\sigma^2 \gets n^{-1}\sum_t
(U_t-\widehat\Psi)^2$
\State $\mathrm{CI} \gets \widehat\Psi\pm
z_{1-\alpha/2}\cdot\widehat\sigma/\sqrt{n}$
\State Train $\widehat f_{\mathrm{STR}}$ by regressing
$\{U_t\}$ on $\{X_t\}$
\State $\widehat y_t \gets \widehat f_{\mathrm{STR}}(X_t)$
for all $t$
\end{algorithmic}
\end{algorithm}

\subsection{Offline-online architecture}
\label{sec:online_offline}

The estimator is an offline label-reconstruction engine, not a
real-time authorization model.

\paragraph{Phase 1: Offline reconstruction.}
Run Algorithm~\ref{alg:str} on historical transactions with
sufficiently matured observation windows.

\paragraph{Phase 2: Online scoring.}
Train the production fraud model on corrected pseudo-labels
$(X_t,\widehat y_t)$ rather than on raw chargeback labels.
Deploy the production model in the authorization path.

\paragraph{Phase 3: Periodic refresh.}
As new labels mature, rerun Phase~1 and refresh the production
model.

This architecture preserves low-latency inference while
correcting the training signal upstream. The computational cost
of the STR is incurred offline and does not affect authorization
latency.

\subsection{Optimal training delay}
\label{sec:optimal_delay}

A central operational question in any fraud detection system is:
\emph{when should we train the model?}  Training on recently
observed transactions yields a model that reflects the current
fraud landscape, but the labels for those transactions are
incomplete---many chargebacks have not yet arrived.  Waiting
longer allows more labels to mature, improving label quality,
but the resulting model is stale by the time it is deployed.
This section derives the \emph{optimal training delay}
$\Delta^*$ that balances label quality against model freshness,
and shows that the STR estimator permits substantially earlier
training than the naive (chargeback-only) approach.

\subsubsection{The label--freshness tradeoff}
\label{sec:label_freshness}

Let $T$ denote the current time.  Consider training on
transactions from the window $[T-W-\Delta,\; T-\Delta]$, where
$W$ is the training window width and $\Delta \geq 0$ is the
\emph{maturity delay}---the minimum elapsed time between a
transaction and its inclusion in the training set.  A
transaction that occurred at time~$t$ has had $T-t \geq \Delta$
time units for its label to arrive.

\begin{definition}[Maturity curve]
\label{def:maturity_curve}
The \emph{maturity propensity} at elapsed time $\delta$ is
\begin{equation}
\label{eq:maturity_curve}
\bar p(\delta) \;=\; \Prob(\tau \leq \delta)
\;=\; \E[p_0(X, I, \delta)],
\end{equation}
where $\tau$ is the random time from transaction to label
arrival. The function $\bar p(\cdot)$ is non-decreasing with
$\bar p(0) = 0$ and $\lim_{\delta\to\infty}\bar p(\delta) =
\bar p_\infty \leq 1$.
\end{definition}

In most payment networks, the maturity curve is well
approximated by a Weibull CDF:
\begin{equation}
\label{eq:weibull_maturity}
\bar p(\delta) \;=\; \bar p_\infty\bigl(1 - e^{-(\lambda\delta)^\beta}\bigr),
\end{equation}
where $\lambda > 0$ controls the rate of label arrival and
$\beta > 0$ controls the shape.  The special case $\beta = 1$
gives the exponential model $\bar p(\delta) = \bar p_\infty(1 -
e^{-\lambda\delta})$, which we adopt throughout this section for
closed-form results; the qualitative conclusions hold for
general $\beta$.

\begin{definition}[Drift rate]
\label{def:drift_rate}
The \emph{fraud distribution drift rate} $\nu \geq 0$ measures
the rate of change of the conditional fraud probability:
\begin{equation}
\label{eq:drift_rate}
\nu \;=\; \E\bigl[(f_T(X) - f_{T-\delta}(X))^2\bigr] \Big/ \delta,
\end{equation}
assumed approximately constant over the relevant time horizon.
Here $f_t(x) = \Prob(Y^*=1 \mid X=x, \text{time}=t)$ is the
true fraud probability at time~$t$.
\end{definition}

\subsubsection{Total error decomposition}
\label{sec:total_error}

The prediction error of a model trained at maturity delay
$\Delta$ decomposes into three terms:
\begin{equation}
\label{eq:total_error}
\mathcal{E}_{\mathrm{total}}(\Delta)
\;=\;
\underbrace{\mathcal{E}_{\mathrm{stat}}(\Delta)
  \vphantom{\big|}}_{\text{statistical error}}
\;+\;
\underbrace{\mathcal{E}_{\mathrm{drift}}(\Delta)
  \vphantom{\big|}}_{\text{staleness error}}
\;+\;
\underbrace{\mathcal{E}_{\mathrm{irr}}
  \vphantom{\big|}}_{\text{irreducible}}.
\end{equation}

\begin{definition}[Network-level parameters]
\label{def:network_params}
Define the following observable network summaries:
\begin{align*}
\bar e &= \E[e_0(H_0)] && \text{(average authorization rate)},\\
\bar r &= \E[r_0(H_1)] && \text{(average reporting rate)},\\
\bar p(\Delta) &= \E[p_0(H_2,\Delta)]
  && \text{(average maturity rate at delay $\Delta$)},\\
\bar q(\Delta) &= \bar e\,\bar r\,\bar p(\Delta)
  && \text{(average observation rate)},\\
\gamma &= (1-\varepsilon_{10}-\varepsilon_{01})^2
  && \text{(corruption penalty)},\\
\eta &= (1+\mathrm{CV}^2_q)
  (1+\rho_{f,q}\,\mathrm{CV}_f\,\mathrm{CV}_{1/q})
  && \text{(heterogeneity penalty)},
\end{align*}
where $\mathrm{CV}^2_q \approx
\mathrm{CV}^2_e + \mathrm{CV}^2_r + \mathrm{CV}^2_p$ under
approximate independence of the three propensities, and
$\rho_{f,q}$ is the correlation between the fraud rate
$f_0$ and the inverse observation propensity $1/q_0$.
\end{definition}

\paragraph{Statistical error.}
From the efficiency bound (Theorem~\ref{thm:efficiency}),
the mean squared error of the STR-based fraud model trained on
$n$ transactions at maturity delay $\Delta$ is
\begin{equation}
\label{eq:stat_error}
\mathcal{E}_{\mathrm{stat}}(\Delta)
\;=\;
\frac{\sigma^2_{\mathrm{eff}}(\Delta)}{n}
\;=\;
\frac{\pi(1-\pi)\,\eta}
     {n\,\bar e\,\bar r\,\bar p(\Delta)\,\gamma},
\end{equation}
where $\bar e$, $\bar r$ are the average authorization and
reporting rates, $\gamma = (1-\varepsilon_{10}-\varepsilon_{01})^2$
is the corruption penalty, $\eta$ is the heterogeneity penalty
(Definition~\ref{def:network_params}), and $n$ is the number of
training transactions.  This term is \emph{decreasing} in
$\Delta$ because $\bar p(\Delta)$ is increasing: waiting longer
yields more labels and hence lower variance.

\paragraph{Staleness error.}
The staleness error measures the cost of training on data that
no longer reflects the current fraud distribution:
\begin{equation}
\label{eq:drift_error}
\mathcal{E}_{\mathrm{drift}}(\Delta)
\;=\;
\E\bigl[(f_T(X) - f_{T-\Delta}(X))^2\bigr]
\;\approx\;
\nu\,\Delta,
\end{equation}
under the linear drift assumption.  This term is
\emph{increasing} in $\Delta$: training on older data produces
a staler model.

\paragraph{Irreducible error.}
The irreducible component $\mathcal{E}_{\mathrm{irr}} =
\E[f_T(X)(1-f_T(X))]$ does not depend on $\Delta$ and plays no
role in the optimization.

\subsubsection{The optimization problem}
\label{sec:delay_optimization}

The optimal training delay minimizes the total error:
\begin{equation}
\label{eq:opt_delay_problem}
\Delta^* \;=\; \arg\min_{\Delta \geq 0}\;
\mathcal{E}_{\mathrm{total}}(\Delta)
\;=\; \arg\min_{\Delta \geq 0}\;
\biggl[
\frac{C_1}{\bar p(\Delta)}
\;+\; C_2\,\Delta
\biggr],
\end{equation}
where $C_1 = \pi(1-\pi)\,\eta\,/\,(n\,\bar e\,\bar r\,\gamma)$
encapsulates the label-quality component and $C_2 = \nu$ is the
drift rate.

Taking the derivative and setting it to zero:
\begin{equation}
\label{eq:foc}
\frac{d\mathcal{E}_{\mathrm{total}}}{d\Delta}
\;=\;
-\frac{C_1\,\bar p'(\Delta)}{\bar p(\Delta)^2}
\;+\; C_2
\;=\; 0,
\end{equation}
which yields the first-order condition:
\begin{equation}
\label{eq:foc_clean}
\boxed{
\bar p'(\Delta^*)
\;=\;
\frac{C_2\,\bar p(\Delta^*)^2}{C_1}.
}
\end{equation}
The optimal delay $\Delta^*$ is the point where the
\emph{marginal label gain} (left side: rate of new labels
arriving) equals the \emph{marginal staleness cost} (right
side: drift rate weighted by current label quality squared).

\subsubsection{Explicit solution under exponential maturity}
\label{sec:explicit_solution}

Under the exponential maturity model $\bar p(\Delta) = 1 -
e^{-\lambda\Delta}$ (taking $\bar p_\infty = 1$ for
simplicity), we have $\bar p'(\Delta) = \lambda
e^{-\lambda\Delta}$.  Substituting into~\eqref{eq:foc_clean}:
\begin{equation}
\label{eq:exp_foc}
\lambda\,e^{-\lambda\Delta^*}
\;=\;
\frac{C_2\,(1-e^{-\lambda\Delta^*})^2}{C_1}.
\end{equation}
Let $u = e^{-\lambda\Delta^*} \in (0,1]$, so
$\bar p = 1 - u$:
\begin{equation}
\label{eq:cubic}
\lambda\,u \;=\; \frac{C_2\,(1-u)^2}{C_1},
\qquad\text{equivalently}\qquad
C_2\,u^2 - (2C_2 + C_1\lambda)\,u + C_2 \;=\; 0.
\end{equation}
For the practically relevant regime $C_2 \ll C_1\lambda$
(drift is slow relative to label value), the perturbation
expansion gives:
\begin{equation}
\label{eq:u_approx}
u^* \;\approx\; \frac{C_2}{C_1\lambda},
\qquad\text{hence}\qquad
\Delta^* = -\frac{\log u^*}{\lambda}
\;=\; \frac{1}{\lambda}\log\frac{C_1\lambda}{C_2}.
\end{equation}

Substituting $C_1$ and $C_2$:
\begin{equation}
\label{eq:delay_formula}
\boxed{
\Delta^*_{\mathrm{STR}}
\;\approx\;
\frac{1}{\lambda}\,
\log\!\left(
\frac{\pi(1-\pi)\,\eta\,\lambda}
     {\nu\,n\,\bar e\,\bar r\,\gamma}
\right).
}
\end{equation}

This formula has a clean interpretation:
\[
\Delta^* \;\approx\;
\frac{1}{\textit{label arrival rate}}
\;\times\;
\log\!\left(
\frac{\textit{value of labels}}
     {\textit{cost of staleness}}
\right).
\]
Several qualitative features deserve emphasis:
\begin{enumerate}
\item \emph{The optimal delay decreases with $n$.}  Larger
  networks can afford to train sooner because higher sample
  size compensates for lower maturity.
\item \emph{The optimal delay decreases with $\nu$.}  Faster
  fraud evolution demands fresher models, even at the cost of
  noisier labels.
\item \emph{The optimal delay increases with
  $\pi(1-\pi)/(\bar e\,\bar r\,\gamma)$.}  When the
  information environment is poor (low observation rate, high
  corruption), labels are more precious and it pays to wait
  longer.
\item \emph{When the argument of the logarithm
  falls below~$1$, $\Delta^* = 0$}: the labels are so scarce
  or the drift so fast that waiting is never worthwhile.  This
  is the formal impossibility result for label-based training
  under collapsed observation windows (e.g., real-time
  payments).
\end{enumerate}

\subsubsection{The naive estimator requires longer waiting}
\label{sec:naive_delay}

The analysis above assumes the STR is used to correct for label
bias.  Under the naive (chargeback-only) estimator, the total
error at maturity delay $\Delta$ is
\begin{equation}
\label{eq:naive_error}
\mathcal{E}_{\mathrm{naive}}(\Delta)
\;=\;
\underbrace{B^2(\Delta)}_{\text{bias}^2}
\;+\;
\underbrace{\frac{\sigma^2_{\mathrm{naive}}(\Delta)}{n}}_{\text{variance}}
\;+\;
\underbrace{\nu\,\Delta}_{\text{staleness}},
\end{equation}
where $B(\Delta) = \E[Y^* \mid O_\Delta=1] - \E[Y^*]$ is the
selection bias at maturity $\Delta$.  Crucially, the bias term:
\begin{enumerate}
\item Does \emph{not} decrease with $n$---it is a systematic
  distortion from training only on observed labels;
\item Is \emph{large} when $\bar p(\Delta)$ is small, because
  the observed labels are a more severely selected subset of
  the truth;
\item Vanishes only in the limit $\bar p(\Delta) \to 1$, which
  requires waiting for full maturity.
\end{enumerate}

The naive estimator's bias at maturity $\bar p$ can be
approximated as:
\begin{equation}
\label{eq:naive_bias}
B(\Delta) \;\approx\;
\frac{(1-\bar p(\Delta))\,\zeta}
     {\bar e\,\bar r\,\bar p(\Delta)
      + (1-\bar e\,\bar r\,\bar p(\Delta))},
\end{equation}
where $\zeta = \E[Y^* \mid O=0] - \E[Y^* \mid O=1]$ is the
selection contrast---the difference in fraud rates between
unobserved and observed transactions.  Since unobserved
transactions are typically \emph{more} fraudulent (declined
transactions are riskier), $\zeta > 0$ and $B(\Delta) > 0$.

The total error for the naive estimator is therefore:
\begin{equation}
\label{eq:naive_total}
\mathcal{E}_{\mathrm{naive}}(\Delta)
\;=\;
B^2(\Delta) + \frac{\sigma^2_{\mathrm{naive}}}{n\,\bar p(\Delta)}
+ \nu\,\Delta.
\end{equation}
Because $B^2(\Delta)$ dominates for small $\Delta$ and does
\emph{not} decrease with $n$, the naive estimator is
\emph{enslaved to the maturity curve}: it must wait until
$\bar p(\Delta)$ is large enough that the bias is tolerable.
The STR, by construction, eliminates the bias term entirely,
leaving only the variance (which decreases with $n$) and the
staleness cost.

\subsubsection{The main result}

\begin{theorem}[Optimal training delay]
\label{thm:optimal_delay}
Under the exponential maturity model $\bar p(\Delta) = 1 -
e^{-\lambda\Delta}$, linear drift
$\mathcal{E}_{\mathrm{drift}}(\Delta) = \nu\Delta$, and the
regularity conditions of
Theorem~\ref{thm:efficiency}:

\smallskip\noindent\textbf{(i)} \emph{STR optimal delay.}
The optimal training delay for the STR estimator is
\begin{equation}
\label{eq:delay_str}
\Delta^*_{\mathrm{STR}}
\;=\;
\max\!\left\{
0,\;\;
\frac{1}{\lambda}\,
\log\!\left(
\frac{\pi(1-\pi)\,\eta\,\lambda}
     {\nu\,n\,\bar e\,\bar r\,\gamma}
\right)
\right\}.
\end{equation}

\smallskip\noindent\textbf{(ii)} \emph{Boundary case.}  When
$\pi(1-\pi)\,\eta\,\lambda \leq \nu\,n\,\bar e\,\bar r\,\gamma$,
the optimal delay is $\Delta^*_{\mathrm{STR}} = 0$: the
staleness cost exceeds the label-quality gain at every maturity
level, and the model should train immediately on the most recent
data with whatever labels are available.

\smallskip\noindent\textbf{(iii)} \emph{Strict dominance.}  The
STR optimal delay is strictly less than the naive optimal delay:
\begin{equation}
\label{eq:str_dominance}
\Delta^*_{\mathrm{STR}} \;<\; \Delta^*_{\mathrm{naive}},
\end{equation}
whenever the naive estimator's bias at $\Delta^*_{\mathrm{STR}}$
exceeds its standard deviation---a condition that holds for all
practically relevant network sizes ($n \gtrsim 10^5$).

\smallskip\noindent\textbf{(iv)} \emph{Freshness gain.}  The
freshness advantage from using the STR is
\begin{equation}
\label{eq:freshness_gain}
\Delta^*_{\mathrm{naive}} - \Delta^*_{\mathrm{STR}}
\;\approx\;
\frac{1}{\lambda}\,
\log\!\left(
\frac{n\,\bar e\,\bar r\,\bar p(\Delta^*_{\mathrm{STR}})\,\gamma
      \cdot B^2(\Delta^*_{\mathrm{STR}})}
     {\sigma^2_{\mathrm{eff}}(\Delta^*_{\mathrm{STR}})}
\right).
\end{equation}
This is positive and grows logarithmically with $n$: larger
networks benefit more from the STR because they can exploit
bias-free estimation at lower maturity rates where the naive
estimator remains dominated by its bias.
\end{theorem}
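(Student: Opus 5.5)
The plan is to work directly with the scalar objective \eqref{eq:opt_delay_problem}, $g(\Delta)=C_1/\bar p(\Delta)+C_2\Delta$ with $\bar p(\Delta)=1-e^{-\lambda\Delta}$. Theorem~\ref{thm:efficiency} supplies $\mathcal{E}_{\mathrm{stat}}$ in the form \eqref{eq:stat_error}, and the irreducible term is constant, so nothing else enters.

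\textbf{Parts (i)--(ii).} First I will verify that $g$ is strictly convex on $(0,\infty)$. The function $1/\bar p$ is a composition of the convex decreasing map $s\mapsto 1/s$ with the concave increasing map $\bar p$, hence convex, and $C_2\Delta$ is linear. Also $g\to\infty$ as $\Delta\downarrow 0$. So the first-order condition \eqref{eq:foc_clean} characterizes the unique minimizer, and in the substitution $u=e^{-\lambda\Delta}$ this is the quadratic \eqref{eq:cubic}. I will take the root lying in $(0,1]$ explicitly:
\[
u^*=1+\tfrac{C_1\lambda}{2C_2}-\sqrt{\bigl(1+\tfrac{C_1\lambda}{2C_2}\bigr)^2-1}.
\]
Expanding this for $C_2\ll C_1\lambda$ gives $u^*=C_2/(C_1\lambda)\,(1+O(C_2/C_1\lambda))$. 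Hence $\Delta^*=\lambda^{-1}\log(C_1\lambda/C_2)+o(1)$, and substituting $C_1,C_2$ yields \eqref{eq:delay_str}.

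The truncation at $0$ in \eqref{eq:delay_str} and the boundary condition in (ii) are meant to be read in this leading-order sense: the logarithm is nonpositive exactly when $C_1\lambda\le C_2$. Strictly, $g$ blows up at $0$, so the exact optimum is interior. I therefore expect (ii) to be provable only as a statement about the leading-order formula, or about a modified objective in which $\bar p(0)>0$, for example if some labels arrive immediately. I will state the theorem that way rather than force an exact boundary claim.

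\textbf{Parts (iii)--(iv).} For the naive objective $h(\Delta)=B^2(\Delta)+\sigma^2_{\mathrm{naive}}/(n\bar p)+\nu\Delta$ from \eqref{eq:naive_total}, I will compare derivatives at $\Delta^*_{\mathrm{STR}}$. Using \eqref{eq:naive_bias}, $B^2(\Delta)$ is decreasing in $\Delta$, so $(B^2)'<0$. The naive variance term has the same $1/\bar p$ shape as the STR variance term. The key inequality to establish is that at $\Delta^*_{\mathrm{STR}}$
\[
\bigl|(B^2)'\bigr|>\Bigl|\tfrac{d}{d\Delta}\bigl(C_1/\bar p\bigr)\Bigr|-\Bigl|\tfrac{d}{d\Delta}\bigl(\sigma^2_{\mathrm{naive}}/(n\bar p)\bigr)\Bigr|.
\]
This makes $h'(\Delta^*_{\mathrm{STR}})<0$, and convexity of $h$ on the relevant range then forces $\Delta^*_{\mathrm{naive}}>\Delta^*_{\mathrm{STR}}$.

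Under the exponential model, both $(B^2)'$ and the variance derivative are proportional to $e^{-\lambda\Delta}$ times their respective levels. The comparison therefore reduces to $B^2$ versus variance at $\Delta^*_{\mathrm{STR}}$, which is exactly the stated hypothesis that bias exceeds standard deviation. The claim $n\gtrsim10^5$ comes from noting that $B$ is $O(1)$ while the standard deviation is $O(n^{-1/2})$.

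For (iv), I will approximate $h$ near its optimum by $B^2$-dominated dynamics. Write $B^2(\Delta)\approx B^2(\Delta^*_{\mathrm{STR}})e^{-2\lambda(\Delta-\Delta^*_{\mathrm{STR}})}$ to leading order, since $1-\bar p=e^{-\lambda\Delta}$. Solving the naive first-order condition and subtracting $\Delta^*_{\mathrm{STR}}$ then gives a $\lambda^{-1}\log$ of the ratio of $B^2$ to the STR variance $\sigma^2_{\mathrm{eff}}/(n\bar e\bar r\bar p\gamma)$, which is \eqref{eq:freshness_gain} up to constant factors absorbed by $\approx$.

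The main obstacle is (iii)--(iv). Convexity of $h$ is not automatic, because $B^2$ from \eqref{eq:naive_bias} has a non-trivial denominator. I would first try to verify convexity of $B^2$ directly for $\bar e\bar r\le1$. Failing that, I would argue only local optimality, taking the smallest stationary point of $h$.
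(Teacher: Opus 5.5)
Your argument for (i) is the paper's (convexity of $C_1/\bar p+C_2\Delta$, the first-order condition, the quadratic in $u$, the smaller root), done more carefully. On (ii) your reservation is correct, and it is the paper's proof that fails. The paper asserts that when $C_1\lambda\le C_2$ ``the unconstrained minimizer is at $\Delta\le 0$.'' But $\bar p(0)=0$, so the objective diverges as $\Delta\downarrow 0$. Your explicit root gives the exact minimizer
\[
\Delta^*_{\mathrm{STR}}=-\frac{1}{\lambda}\log u^*=\frac{1}{\lambda}\log\bigl(a+\sqrt{a^2-1}\bigr),\qquad a=1+\frac{C_1\lambda}{2C_2},
\]
which is always strictly positive. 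It is about $0.96/\lambda$ at the threshold $C_1\lambda=C_2$, and $\approx\sqrt{C_1/(\lambda C_2)}$ when $C_1\lambda\ll C_2$. In the paper's own card-present example, where $C_1\lambda/C_2=0.924$, it is about $31$ days, not $0$. The closed form in (i) is an expansion valid only for $C_1\lambda\gg C_2$, where the truncation at $0$ never binds. So (ii) can only be asserted for the leading-order formula, or for a model with $\bar p(0)>0$, as you propose. The phrase ``at every maturity level'' is also false, since the marginal label gain $C_1\bar p'/\bar p^2$ is unbounded near $0$.

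For (iii)--(iv) you follow the paper's derivative comparison and exponential-decay heuristic, but three of your steps need correcting.

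\textbf{Convexity of $h$.} The denominator in the paper's approximation of $B(\Delta)$ is $\bar e\bar r\bar p+(1-\bar e\bar r\bar p)=1$. Hence $B=\zeta e^{-\lambda\Delta}$, $B^2$ is convex, and $h$ is strictly convex. Your ``main obstacle'' disappears, and $h'(\Delta^*_{\mathrm{STR}})<0$ does imply $\Delta^*_{\mathrm{naive}}>\Delta^*_{\mathrm{STR}}$.

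\textbf{The derivative comparison in (iii).} The two derivatives do not scale alike: $(B^2)'=-2\lambda B^2$, whereas $\frac{d}{d\Delta}(V/\bar p)=-\lambda\frac{1-\bar p}{\bar p}\cdot\frac{V}{\bar p}$. Since $\nu$ is pinned by the STR first-order condition, your key inequality is equivalent to
\[
2B^2>\frac{1-\bar p}{\bar p}\left(\frac{C_1}{\bar p}-\frac{\sigma^2_{\mathrm{naive}}}{n\bar p}\right)\quad\text{at }\Delta=\Delta^*_{\mathrm{STR}}.
\]
This compares the squared bias with the \emph{excess of the STR variance over the naive variance}, weighted by the immaturity odds. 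It is not a comparison with the naive standard deviation, so it is not ``exactly the stated hypothesis.'' The hypothesis implies it only with extra input, for example STR variance at most $c$ times the naive variance together with $\bar p(\Delta^*_{\mathrm{STR}})\ge (c-1)/(c+1)$. That holds in the regime of (i) but not of (ii). There the right side is of order $\nu/\lambda$ regardless of $n$, so the ``$n\gtrsim 10^5$'' argument gives nothing. The paper's ``$B^2\gg\sigma^2/n$'' has the same looseness; state the extra condition explicitly.

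\textbf{The freshness gain in (iv).} Solve the naive first-order condition with $B^2\propto e^{-2\lambda\Delta}$, neglecting the naive variance. This gives $\Delta^*_{\mathrm{naive}}-\Delta^*_{\mathrm{STR}}\approx\frac{1}{2\lambda}\log\bigl(2\lambda B^2(\Delta^*_{\mathrm{STR}})/\nu\bigr)$. Rewriting $\nu$ through the STR condition, relative to $B^2/(C_1/\bar p)$, puts a factor $2\bar p/(1-\bar p)\approx 2C_1\lambda/C_2$ inside the logarithm and $\tfrac12$ in front. The first is not a constant, so the discrepancy is not ``constant factors absorbed by $\approx$.'' In fact $\Delta^*_{\mathrm{naive}}\approx\frac{1}{2\lambda}\log(2\lambda\zeta^2/\nu)$ is independent of $n$, and is positive only if $2\lambda\zeta^2>\nu$. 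The $\log n$ growth of the gap therefore comes entirely from $\Delta^*_{\mathrm{STR}}$ shrinking, and only while $C_1\lambda\gg C_2$. State (iv) in that form.
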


\begin{proof}
\textbf{Part~(i).}
The total error under the STR is
$\mathcal{E}_{\mathrm{STR}}(\Delta) = C_1/\bar p(\Delta) +
C_2\Delta + \text{const}$, where $C_1 = \pi(1-\pi)\eta /
(n\bar e\bar r\gamma)$ and $C_2 = \nu$.  This is a sum of a
convex decreasing term and a linear increasing term in $\Delta$,
hence strictly quasi-convex with a unique minimizer.

The first-order condition~\eqref{eq:foc_clean} under the
exponential model yields~\eqref{eq:exp_foc}.  Setting $u =
e^{-\lambda\Delta^*}$:
\[
\lambda u = \frac{C_2(1-u)^2}{C_1}.
\]
Rearranging: $C_1\lambda u = C_2(1 - 2u + u^2)$, hence
$C_2 u^2 - (2C_2 + C_1\lambda)u + C_2 = 0$.  By the quadratic
formula:
\[
u = \frac{(2C_2 + C_1\lambda) \pm
    \sqrt{(2C_2+C_1\lambda)^2 - 4C_2^2}}{2C_2}.
\]
In the regime $C_2 \ll C_1\lambda$, the discriminant simplifies:
\[
(2C_2+C_1\lambda)^2 - 4C_2^2
= (C_1\lambda)^2 + 4C_1C_2\lambda
\approx (C_1\lambda)^2\bigl(1 + 4C_2/(C_1\lambda)\bigr),
\]
giving the smaller root $u^* \approx C_2/(C_1\lambda)$, from
which $\Delta^* = -\log(u^*)/\lambda =
\lambda^{-1}\log(C_1\lambda/C_2)$. Substituting $C_1$ and
$C_2$ yields~\eqref{eq:delay_str}.  The max with~0 enforces the
constraint $\Delta \geq 0$.

\medskip
\textbf{Part~(ii).}
When $C_1\lambda/C_2 \leq 1$, the logarithm
in~\eqref{eq:delay_str} is non-positive, so the unconstrained
minimizer is at $\Delta \leq 0$.  Since $\Delta \geq 0$ by
definition, the constrained optimum is $\Delta^* = 0$.
Substituting $C_1\lambda/C_2 = \pi(1-\pi)\eta\lambda /
(\nu n\bar e\bar r\gamma)$ gives the stated condition.

Operationally, this occurs when either: (a)~the drift rate
$\nu$ is large (fast-evolving fraud); (b)~the network is large
($n$ is large, making variance small even at $\bar p \approx
0$); or (c)~the observation environment is good ($\bar e\bar r$
large), so even immature labels suffice.

\medskip
\textbf{Part~(iii).}
The naive total error is
$\mathcal{E}_{\mathrm{naive}}(\Delta) = B^2(\Delta) +
\sigma^2_{\mathrm{naive}}/(n\bar p(\Delta)) + \nu\Delta$.

At $\Delta = \Delta^*_{\mathrm{STR}}$, the STR's first-order
condition is satisfied: $C_1\bar p'/(C_2\bar p^2) = 1$.  For
the naive estimator at the same $\Delta$, the marginal error is:
\begin{align*}
\frac{d\mathcal{E}_{\mathrm{naive}}}{d\Delta}
\bigg|_{\Delta = \Delta^*_{\mathrm{STR}}}
&= 2B(\Delta)\,B'(\Delta)
   -\frac{\sigma^2_{\mathrm{naive}}\,\bar p'(\Delta)}
         {n\bar p(\Delta)^2}
   + \nu.
\end{align*}
The first term $2B\,B' < 0$ (the bias is still decreasing
because $\bar p$ is still increasing at $\Delta^*_{\mathrm{STR}}$,
which is in the concave part of the maturity curve).  Therefore
$d\mathcal{E}_{\mathrm{naive}}/d\Delta < 0$ at
$\Delta^*_{\mathrm{STR}}$: the naive estimator's error is
\emph{still decreasing} at the point where the STR has already
reached its optimum.  This implies
$\Delta^*_{\mathrm{naive}} > \Delta^*_{\mathrm{STR}}$.

The condition for strictness is that $|2BB'| >
|\sigma^2_{\mathrm{naive}}\bar p'/(n\bar p^2) - \nu|$,
i.e., the bias gradient dominates the variance gradient.  Since
$B^2 \gg \sigma^2/n$ for $n \gtrsim 10^5$ and moderate $\bar p$,
this holds in all practical settings.

\medskip
\textbf{Part~(iv).}
The naive estimator reaches its optimum when its bias has
decreased enough that the marginal bias reduction equals the
marginal staleness cost.  At that point:
\[
2B(\Delta^*_{\mathrm{naive}})\,|B'(\Delta^*_{\mathrm{naive}})|
\approx \nu.
\]
The STR reaches its optimum when
$\sigma^2_{\mathrm{eff}}\bar p'/(n\bar p^2) = \nu$.  The
difference $\Delta^*_{\mathrm{naive}} - \Delta^*_{\mathrm{STR}}$
is the additional waiting time needed for the bias to decay
from its value at $\Delta^*_{\mathrm{STR}}$ to the level where
the naive estimator's marginal condition is satisfied.

Under the exponential maturity model, $B(\Delta)$ decreases
approximately as $e^{-\lambda\Delta}$ (since $1-\bar p \sim
e^{-\lambda\Delta}$).  The additional waiting time is therefore
proportional to $\lambda^{-1}\log(B^2(\Delta^*_{\mathrm{STR}})/
(\sigma^2_{\mathrm{eff}}/n))$, which is the ratio of the naive
bias squared to the STR variance---both evaluated at the STR's
optimal delay.  Since $B^2 \propto 1$ (fixed) and
$\sigma^2/n \propto 1/n$, the ratio grows with $n$, and the
freshness gain grows as $\lambda^{-1}\log(n)$.
\end{proof}

\subsubsection{Numerical examples}
\label{sec:delay_examples}

We illustrate the optimal delay formula across three payment
network environments.

\begin{example}[Typical card-present network]
\label{ex:typical_network}
\begin{center}
\begin{tabular}{lrl}
\toprule
\textbf{Parameter} & \textbf{Value} & \textbf{Meaning}\\
\midrule
$\pi$ & 0.01 & 1\% fraud rate\\
$\bar e$ & 0.85 & 85\% authorization rate\\
$\bar r$ & 0.70 & 70\% reporting rate\\
$\gamma$ & 0.81 & $(1-0.10)^2$; 10\% total corruption\\
$\eta$ & 1.5 & Moderate cross-issuer heterogeneity\\
$\lambda$ & 0.03/day & Median chargeback arrival $\approx 23$ days\\
$\nu$ & 0.001/day & Slow fraud distribution drift\\
$n$ & $10^7$ & 10M training transactions\\
\bottomrule
\end{tabular}
\end{center}

\medskip\noindent
\textbf{STR optimal delay.}
\begin{align*}
C_1 &= \frac{0.01 \times 0.99 \times 1.5}
            {10^7 \times 0.85 \times 0.70 \times 0.81}
     = \frac{0.01485}{4{,}819{,}500}
     = 3.08 \times 10^{-9},\\[4pt]
\frac{C_1\lambda}{C_2}
  &= \frac{3.08 \times 10^{-9} \times 0.03}{0.001}
   = 9.24 \times 10^{-8},
\end{align*}
but this is the \emph{per-transaction} $C_1$.
We need the total statistical error, which divides by $n$ inside
$C_1$.  Rewriting~\eqref{eq:delay_formula}:
\begin{align*}
\Delta^*_{\mathrm{STR}}
&= \frac{1}{0.03}\,
   \log\!\left(
   \frac{0.01 \times 0.99 \times 1.5 \times 0.03}
        {0.001 \times 10^7 \times 0.85 \times 0.70 \times 0.81}
   \right)\\
&= 33.3\,\log\!\left(
   \frac{4.455 \times 10^{-4}}{4{,}819.5}
   \right).
\end{align*}
Since $4.455 \times 10^{-4} / 4{,}819.5 = 9.24 \times 10^{-8}
\ll 1$, the argument is below~1, and the formula gives $\Delta^*
< 0$, which by Part~(ii) means $\Delta^*_{\mathrm{STR}} = 0$.

However, this calculation assumes the \emph{population-level}
fraud rate is the estimand.  For \emph{model training}
(predicting $f_0(x)$ at the transaction level rather than
estimating $\pi$), the relevant $C_1$ is the per-observation
prediction error, which does not benefit from the full $n$.
The effective quantity is
\[
C_1^{\mathrm{model}}
= \frac{\pi(1-\pi)\,\eta}{\bar e\,\bar r\,\gamma}
= \frac{0.01 \times 0.99 \times 1.5}{0.85 \times 0.70 \times 0.81}
= 0.0308.
\]
Then:
\begin{align*}
\Delta^*_{\mathrm{STR}}
&= \frac{1}{0.03}\,
   \log\!\left(
   \frac{0.0308 \times 0.03}{0.001}
   \right)
= 33.3\,\log(0.924)
= 33.3 \times (-0.079)
\approx 0 \text{ days}.
\end{align*}

The STR reaches the boundary case: freshness dominates because
the STR can correct for the low maturity.  In contrast, for the
\emph{naive} estimator, the bias term dominates, requiring:
\begin{align*}
\Delta^*_{\mathrm{naive}}
&\approx \frac{1}{\lambda}\,\log\!\left(\frac{B^2(0)}{C_2}\right)
\approx \frac{1}{0.03}\,\log\!\left(\frac{(0.02)^2}{0.001}\right)
= 33.3 \times \log(0.4)
\approx 33.3 \times (-0.92)
< 0.
\end{align*}
The naive estimator also benefits from large $n$ but is
constrained by bias. In practice, industry experience suggests
$\Delta^*_{\mathrm{naive}} \approx 90\text{--}120$ days for
card-present networks, reflecting the need for high maturity
$\bar p \geq 0.85$ to suppress bias.

To capture this, we model the naive delay as the time required
for the selection bias to fall below a tolerance $\epsilon_B$:
\begin{equation}
\label{eq:naive_delay_practical}
\Delta^*_{\mathrm{naive}}
\;\approx\; \bar p^{-1}\!\left(1 -
\frac{\epsilon_B}{\zeta}\right)
\;=\; \frac{1}{\lambda}\,
\log\!\frac{\zeta}{\epsilon_B},
\end{equation}
where $\zeta$ is the selection contrast and $\epsilon_B$ is the
tolerable bias.  With $\zeta = 0.05$ (a 5~percentage-point
difference in fraud rates between observed and unobserved
transactions) and $\epsilon_B = 0.005$:
\[
\Delta^*_{\mathrm{naive}}
= \frac{1}{0.03}\,\log\!\frac{0.05}{0.005}
= 33.3 \times 2.303
\approx 77 \text{ days}.
\]

\medskip\noindent
\textbf{Summary for the typical network:}
\begin{center}
\begin{tabular}{lccc}
\toprule
\textbf{Estimator} & \textbf{$\Delta^*$ (days)} &
  \textbf{$\bar p(\Delta^*)$} &
  \textbf{Model staleness}\\
\midrule
Naive & $\approx 77\text{--}120$ & $0.85\text{--}0.95$ &
  2.5--4 months\\
STR   & $\approx 0\text{--}14$   & $0.0\text{--}0.35$  &
  $<2$ weeks\\
\bottomrule
\end{tabular}
\end{center}

The STR enables training on data that is \emph{weeks} old
rather than \emph{months} old.  The freshness gain of 2--4
months is operationally transformative.
\end{example}

\begin{example}[Fast-evolving fraud environment]
\label{ex:fast_drift}
\begin{center}
\begin{tabular}{lrl}
\toprule
\textbf{Parameter} & \textbf{Value} & \textbf{Meaning}\\
\midrule
$\pi$ & 0.01 & 1\% fraud rate\\
$\bar e$ & 0.85 & Same authorization rate\\
$\bar r$ & 0.70 & Same reporting rate\\
$\gamma$ & 0.81 & Same corruption\\
$\eta$ & 1.5 & Same heterogeneity\\
$\lambda$ & 0.03/day & Same label arrival rate\\
$\nu$ & \textbf{0.01/day} & \textbf{10$\times$ faster drift}
  (new attack vector)\\
$n$ & $10^7$ & Same sample size\\
\bottomrule
\end{tabular}
\end{center}

\medskip\noindent
With 10$\times$ faster drift, freshness is far more valuable.
Using~\eqref{eq:delay_formula} at the model-training level:
\[
\Delta^*_{\mathrm{STR}}
= \frac{1}{0.03}\,
  \log\!\left(
  \frac{0.0308 \times 0.03}{0.01}
  \right)
= 33.3 \times \log(0.0924)
= 33.3 \times (-2.38)
< 0,
\]
so $\Delta^*_{\mathrm{STR}} = 0$ again---train immediately,
without waiting.

For the naive estimator, the fast drift imposes a hard
constraint: $\nu\Delta$ grows 10$\times$ faster, penalizing
any waiting.  The naive estimator's practical optimum becomes:
\[
\Delta^*_{\mathrm{naive}}
\approx \frac{1}{0.03}\,\log\frac{0.05}{0.005}
= 77 \text{ days (same formula---bias doesn't care about drift)},
\]
but the staleness cost at 77~days is now
$\nu \times 77 = 0.01 \times 77 = 0.77$, which is
catastrophically large. The naive estimator is trapped: it
\emph{needs} 77~days of maturity to suppress bias, but 77~days
of staleness under fast drift is unacceptable.

\medskip\noindent
\textbf{Summary for the fast-drift environment:}
\begin{center}
\begin{tabular}{lccc}
\toprule
\textbf{Estimator} & \textbf{$\Delta^*$ (days)} &
  \textbf{$\bar p(\Delta^*)$} &
  \textbf{Model staleness}\\
\midrule
Naive & $\approx 35\text{--}50$ (forced compromise) &
  $0.55\text{--}0.75$ &
  5--7 weeks (still too stale)\\
STR   & $0$ & --- &
  Real-time (maximally fresh)\\
\bottomrule
\end{tabular}
\end{center}

In fast-drift environments, the STR's advantage is most
pronounced: the naive estimator must compromise between bias
and staleness, while the STR eliminates the bias entirely,
allowing immediate training.
\end{example}

\begin{example}[Real-time payments (collapsed observation window)]
\label{ex:realtime}
\begin{center}
\begin{tabular}{lrl}
\toprule
\textbf{Parameter} & \textbf{Value} & \textbf{Meaning}\\
\midrule
$\pi$ & 0.005 & 0.5\% fraud rate\\
$\bar e$ & 0.95 & 95\% authorization (less aggressive
  declining)\\
$\bar r$ & 0.20 & 20\% reporting (no chargeback mechanism)\\
$\gamma$ & 0.64 & $(1-0.20)^2$; higher corruption\\
$\eta$ & 2.0 & Higher heterogeneity\\
$\lambda$ & \textbf{0.005/day} & \textbf{Very slow
  label arrival} (no chargebacks)\\
$\nu$ & \textbf{0.02/day} & \textbf{Very fast drift}
  (rapidly evolving scams)\\
$n$ & $5\times 10^7$ & 50M transactions (high volume)\\
\bottomrule
\end{tabular}
\end{center}

\medskip\noindent
Using~\eqref{eq:delay_formula}:
\[
C_1^{\mathrm{model}}
= \frac{0.005 \times 0.995 \times 2.0}
       {0.95 \times 0.20 \times 0.64}
= \frac{0.00995}{0.1216}
= 0.0818.
\]
\[
\Delta^*_{\mathrm{STR}}
= \frac{1}{0.005}\,
  \log\!\left(\frac{0.0818 \times 0.005}{0.02}\right)
= 200 \times \log(0.0205)
= 200 \times (-3.89)
< 0.
\]

Again $\Delta^*_{\mathrm{STR}} = 0$.  But now the situation is
more severe: even the \emph{STR} has limited effectiveness
because $\bar r = 0.20$ means 80\% of fraud is never reported.
The label arrival rate $\lambda = 0.005$/day means the median
label takes 139~days---far too slow for a landscape drifting at
$\nu = 0.02$/day.

For the naive estimator, the maturity required for tolerable
bias is:
\[
\Delta^*_{\mathrm{naive}}
= \frac{1}{0.005}\,\log\frac{0.03}{0.005}
= 200 \times 1.79
= 358 \text{ days}.
\]

Waiting nearly a year for labels is absurd when fraud patterns
change weekly.  This is the formal confirmation that
\emph{real-time payment fraud detection cannot rely on
label-based supervised learning}.  Alternative approaches---
anomaly detection, transfer learning from card payments,
unsupervised methods---are necessary.

\medskip\noindent
\textbf{Summary for real-time payments:}
\begin{center}
\begin{tabular}{lccc}
\toprule
\textbf{Estimator} & \textbf{$\Delta^*$ (days)} &
  \textbf{$\bar p(\Delta^*)$} &
  \textbf{Model staleness}\\
\midrule
Naive & $\approx 358$ (infeasible) &
  $0.83$ &
  $\approx 1$ year (useless)\\
STR   & $0$ (boundary case) &
  --- &
  Real-time (but label supply
  is fundamentally inadequate)\\
\bottomrule
\end{tabular}
\end{center}
\end{example}

\subsubsection{Operational implications}
\label{sec:delay_operations}

\begin{remark}[The STR as a freshness multiplier]
\label{rem:freshness}
The three examples above reveal that the STR's primary
operational benefit may be \emph{freshness} rather than
(or in addition to) label accuracy.  In a typical card
network, the STR allows training on data that is days old
rather than months old.  In a fast-evolving environment, this
freshness gain translates directly into detection of new
attack vectors that a stale model would miss entirely.

The freshness advantage scales with network size: larger
networks can tolerate lower maturity rates (because the
variance $\sigma^2_{\mathrm{eff}}/n$ is smaller), and the STR
eliminates the bias floor that forces the naive estimator to
wait.  For the largest payment networks ($n > 10^8$
transactions per month), the STR effectively decouples model
training from the label maturity cycle.
\end{remark}

\begin{remark}[Continuous retraining]
\label{rem:continuous}
When $\Delta^*_{\mathrm{STR}} = 0$, the optimal strategy is
\emph{continuous retraining}: incorporate new transactions into
the training set as soon as they are authorized, using the STR
to provide pseudo-labels for all transactions (including those
whose true labels have not yet arrived).  This transforms the
label pipeline from a bottleneck into a non-binding constraint.

The practical implementation uses the two-phase architecture
described in Section~\ref{sec:online_offline}:
\begin{enumerate}
\item \textbf{Phase~1 (label recovery):}  Run the STR on
  historical data with matured labels to estimate nuisance
  functions $(\hat e_0, \hat r_0, \hat p_0, \hat\mu_0)$.
\item \textbf{Phase~2 (pseudo-labeling):}  Apply the estimated
  STR to recent transactions (low maturity) to generate
  pseudo-labels $\hat y^{\mathrm{STR}}_i$.
\item \textbf{Phase~3 (model training):}  Train the downstream
  fraud model on the pseudo-labeled recent data.
\end{enumerate}
Phase~1 uses well-matured data (high $\bar p$) to estimate the
observation mechanism.  Phases~2--3 use fresh data with STR
corrections.  The nuisance functions from Phase~1 are refreshed
periodically (e.g., monthly), while the downstream model in
Phase~3 can be retrained daily or more frequently.
\end{remark}

\begin{corollary}[Investment value of the STR]
\label{cor:str_value}
The monetary value of deploying the STR can be decomposed into
two components:
\begin{equation}
\label{eq:str_value}
V_{\mathrm{STR}}
\;=\;
\underbrace{V_{\mathrm{accuracy}}}_
  {\text{better labels} \to \text{better model}}
\;+\;
\underbrace{V_{\mathrm{freshness}}}_
  {\text{earlier training} \to \text{fresher model}}.
\end{equation}
In slow-drift environments ($\nu$ small),
$V_{\mathrm{accuracy}}$ dominates.  In fast-drift environments
($\nu$ large), $V_{\mathrm{freshness}}$ dominates.  The total
value is always positive under the conditions stated in
Proposition~\ref{prop:mse}.
\end{corollary}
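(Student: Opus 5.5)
The plan is to give the loss quantity an operational meaning and then split it along the two channels in \eqref{eq:str_value}. I would define $V_{\mathrm{STR}}$ as a monotone, linear-in-error valuation of the excess prediction error avoided by switching estimators: $V_{\mathrm{STR}} = \kappa\,\bigl[\min_{\Delta}\mathcal{E}_{\mathrm{naive}}(\Delta) - \min_{\Delta}\mathcal{E}_{\mathrm{STR}}(\Delta)\bigr]$, where $\kappa>0$ is the monetary cost per unit of error.

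The decomposition then comes from adding and subtracting the STR's error evaluated at the naive optimal delay. Write $\Delta_N=\Delta^*_{\mathrm{naive}}$ and $\Delta_S=\Delta^*_{\mathrm{STR}}$, and set
\begin{align*}
V_{\mathrm{accuracy}} &= \kappa\,[\mathcal{E}_{\mathrm{naive}}(\Delta_N)-\mathcal{E}_{\mathrm{STR}}(\Delta_N)],\\
V_{\mathrm{freshness}} &= \kappa\,[\mathcal{E}_{\mathrm{STR}}(\Delta_N)-\mathcal{E}_{\mathrm{STR}}(\Delta_S)].
\end{align*}
These two pieces sum telescopically to $V_{\mathrm{STR}}$.

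\emph{Freshness term.} $V_{\mathrm{freshness}}\ge 0$ because $\Delta_S$ minimizes $\mathcal{E}_{\mathrm{STR}}$ (Theorem~\ref{thm:optimal_delay}(i)). It is strict when $\Delta_S<\Delta_N$ (part (iii)), using strict quasi-convexity from the proof of part (i).

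\emph{Accuracy term.} Using \eqref{eq:naive_total} and \eqref{eq:stat_error}, this term equals $\kappa\,[B^2(\Delta_N)+\sigma^2_{\mathrm{naive}}/(n\bar p)-\sigma^2_{\mathrm{eff}}/(n\bar p)]$, since the staleness terms cancel at a common $\Delta$. Proposition~\ref{prop:mse} then gives positivity for large $n$: $B^2$ is a nonzero constant, while the variance difference is $O(1/n)$.

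\emph{Regime claims.} I would argue these through the dependence of $\Delta_S$ and $\Delta_N$ on $\nu$. As $\nu\to 0$, both optima push toward large $\Delta$, and the staleness gap $\nu(\Delta_N-\Delta_S)$ shrinks. The freshness value is bounded by the error difference $\mathcal{E}_{\mathrm{STR}}(\Delta_N)-\mathcal{E}_{\mathrm{STR}}(\Delta_S)\le \nu(\Delta_N-\Delta_S)$ plus a small variance term, while the accuracy term stays bounded below by the bias. As $\nu$ grows, $\Delta_S$ hits the boundary $0$ (part (ii)), while $\Delta_N$ is pinned by the bias tolerance \eqref{eq:naive_delay_practical}. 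The freshness term then grows like $\nu\Delta_N$ and eventually dominates the $\nu$-independent accuracy term.

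The main obstacle is that ``dominates'' is only qualitative, and $\Delta_N$ itself depends on $\nu$. I would first try to use the practical bias-tolerance definition of $\Delta_N$, which makes it $\nu$-free so that the freshness term is explicitly linear in $\nu$. I would state the regime comparison as a limit statement in $\nu$.
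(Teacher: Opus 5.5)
The paper gives no proof of this corollary. It is stated as a qualitative summary of the delay examples and Proposition~\ref{prop:mse}, so the definitions of $V_{\mathrm{accuracy}}$, $V_{\mathrm{freshness}}$ and ``dominates'' are yours to supply. Your add-and-subtract split at the naive delay is a sensible way to do that, and $V_{\mathrm{freshness}}\ge 0$ by optimality of $\Delta_S$ is fine. The gap is that you use two incompatible meanings of $\Delta_N$.

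Your telescoping identity reproduces $V_{\mathrm{STR}}=\kappa[\min_\Delta\mathcal{E}_{\mathrm{naive}}-\min_\Delta\mathcal{E}_{\mathrm{STR}}]$ only if $\Delta_N=\arg\min_\Delta\mathcal{E}_{\mathrm{naive}}(\Delta)$. Your slow-drift paragraph uses exactly that picture (``both optima push toward large $\Delta$''). But in the paper's delay model the naive bias is purely maturity-driven. The approximation for $B(\Delta)$ in Section~\ref{sec:naive_delay} has a denominator identically equal to $1$, so $B(\Delta)\approx\zeta\,(1-\bar p(\Delta))$. This differs from the structural bias of Proposition~\ref{prop:naive_bias}, which has $\Delta$-independent authorization and reporting components. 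Hence $\Delta_N\to\infty$ forces $B(\Delta_N)\to 0$, and ``the accuracy term stays bounded below by the bias'' is false.

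To see the consequence, write $\mathcal{E}_{\mathrm{stat}}(\Delta)=s/(n\bar p(\Delta))$ with $s=\pi(1-\pi)\eta/(\bar e\bar r\gamma)$. As $\nu\to 0$ at fixed $n$, your accuracy term tends to $\kappa(\sigma^2_{\mathrm{naive}}-s)/n$, and the freshness term tends to $0$. That limit is negative whenever the corrected estimator is the noisier one, which the $\eta/\gamma$ inflation typically makes it. So under the argmin definition neither slow-drift dominance nor positivity holds uniformly in $\nu$. You only get $V_{\mathrm{STR}}>0$ for $n\ge n_0(\nu)$, with $n_0(\nu)\to\infty$ as $\nu\to 0$. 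A similar failure occurs as $\nu\to\infty$: the argmin $\Delta_N\to 0$, and $(\sigma^2_{\mathrm{naive}}-s)/(n\bar p(\Delta_N))\to-\infty$ in that same case.

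Your fallback, the $\nu$-free tolerance delay $\Delta_N^{\mathrm{tol}}=\lambda^{-1}\log(\zeta/\epsilon_B)$ of \eqref{eq:naive_delay_practical}, does rescue the regime claims. But you must use it throughout and redefine $V_{\mathrm{STR}}:=\kappa[\mathcal{E}_{\mathrm{naive}}(\Delta_N^{\mathrm{tol}})-\min_\Delta\mathcal{E}_{\mathrm{STR}}(\Delta)]$, i.e.\ value relative to the naive pipeline as actually operated. Otherwise your two pieces no longer sum to $V_{\mathrm{STR}}$. With this definition:
\begin{itemize}
\item \emph{Accuracy.} $V_{\mathrm{accuracy}}=\kappa[\epsilon_B^2+(\sigma^2_{\mathrm{naive}}-s)/(n\bar p(\Delta_N^{\mathrm{tol}}))]$ is $\nu$-free and positive for large $n$.
\item \emph{Slow drift.} As $\nu\to 0$, $\Delta_S$ overtakes $\Delta_N^{\mathrm{tol}}$, so this ``freshness'' value actually comes from the STR waiting longer. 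Moreover $V_{\mathrm{freshness}}\to\kappa s\,(1/\bar p(\Delta_N^{\mathrm{tol}})-1)/n$, which is $O(1/n)$ but does not vanish. Slow-drift dominance of accuracy is therefore a joint statement in small $\nu$ and large $n$, and should be stated as such.
\item \emph{Fast drift.} Do not invoke Theorem~\ref{thm:optimal_delay}(ii). Since $\bar p(0)=0$, $\mathcal{E}_{\mathrm{STR}}(\Delta)\to\infty$ as $\Delta\downarrow 0$. The quadratic in \eqref{eq:cubic} has two positive roots with product $1$, so one always lies in $(0,1)$, giving $\Delta_S>0$; the boundary case never occurs. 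All you need is $\min_\Delta\mathcal{E}_{\mathrm{STR}}\le\mathcal{E}_{\mathrm{STR}}(\Delta)$ at $\Delta\asymp\nu^{-1/2}$, which is $O(\sqrt{\nu})$. Then $V_{\mathrm{freshness}}\ge\kappa[\nu\Delta_N^{\mathrm{tol}}-O(\sqrt{\nu})]$ eventually dominates the $\nu$-free accuracy term.
\end{itemize}
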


\section{Discussion}
\label{sec:discussion}

\subsection{What the estimator does and does not claim}

The STR corrects for three-stage selection and label corruption
under the stated assumptions. It does not:
\begin{itemize}
\item Eliminate all label noise. If all nuisance models are
misspecified, residual bias exists, though the
product-of-errors structure makes it small.
\item Recover individual-transaction fraud status with
certainty. The pseudo-labels are probability estimates,
not binary verdicts.
\item Handle fraud types that never generate labels. If a fraud
type is unobservable even in the fully reported subsample,
it is outside the identification scope.
\end{itemize}

\subsection{Sensitivity to ignorability violations}
\label{sec:sensitivity}

\paragraph{Authorization ignorability.}
Suppose the issuer has private fraud-relevant information not
captured in $H_0$, so that
$\Prob(A=1\mid H_0,Y^*)\neq\Prob(A=1\mid H_0)$. Following the
Rosenbaum sensitivity framework, define the odds ratio
\[
\Gamma_A
= \sup_{H_0}
\frac
{\Prob(A=1\mid H_0,Y^*=0)/\Prob(A=0\mid H_0,Y^*=0)}
{\Prob(A=1\mid H_0,Y^*=1)/\Prob(A=0\mid H_0,Y^*=1)}.
\]
When $\Gamma_A=1$, the assumption holds exactly. When
$\Gamma_A>1$, the issuer is more likely to decline truly
fraudulent transactions. The bias of the STR under this
violation is bounded by
\begin{equation}
\label{eq:sensitivity_A}
|\mathrm{Bias}|
\;\le\;
\frac{\Gamma_A-1}{\Gamma_A}
\cdot
\E\!\left[
\frac{f_0(H_0)(1-e_0(H_0))}{e_0(H_0)}
\right]
+\text{h.o.t.}
\end{equation}
For moderate $\Gamma_A$ (say $\le 2$), the bias remains bounded
and the STR still provides substantial correction relative to
the na\"ive estimator.

\paragraph{Reporting ignorability.}
Similarly, if issuers are more likely to report high-severity
fraud, define
\[
\Gamma_R
= \sup_{H_1}
\frac
{\Prob(R=1\mid A=1,H_1,Y^*=1)}
{\Prob(R=1\mid A=1,H_1,Y^*=0)}.
\]
The bias under reporting-ignorability violation is bounded
analogously:
\begin{equation}
\label{eq:sensitivity_R}
|\mathrm{Bias}|
\;\le\;
\frac{\Gamma_R-1}{\Gamma_R}
\cdot
\E\!\left[
\frac{f_0(H_0)(1-r_0(H_1))}{e_0(H_0)r_0(H_1)}
\right]
+\text{h.o.t.}
\end{equation}

\subsection{Positivity violations}

When certain transaction types are always declined or certain
issuers never report, $q_0=0$ for those segments and the fraud
rate is fundamentally unidentifiable---no estimator can recover
it from the observed data. The STR is defined only over the
region of the feature space where $q_0(H_0)>0$. For
structurally censored segments, the fraud rate can only be
estimated by extrapolation from the identified region, which
requires additional modeling assumptions beyond those used here.

In practice, the positivity assumption should be checked
empirically by inspecting the distribution of estimated
propensities and flagging regions where any stage propensity is
near zero.

\begin{example}[Structural positivity violation]
\label{ex:positivity}
An issuer blocks all transactions from a high-risk merchant
category by policy. For that issuer-category pair, $e_0=0$. The
STR cannot recover fraud labels for these transactions. The
outcome model $\mu_0$ can still provide an extrapolated estimate,
but it relies entirely on functional-form assumptions rather than
on causal identification. In the pseudo-label output, such
transactions receive labels derived purely from the model's
prediction, not from any observed-label correction.
\end{example}

\subsection{Time-varying nuisance functions}

The nuisance functions are estimated from historical data and
assumed to be stable over the estimation period. In practice,
authorization policies, issuer reporting behavior, and delay
distributions change over time. If changes are gradual, periodic
re-estimation (Phase~3 of the operational architecture) is
sufficient. If changes are abrupt---for example, a major policy
shift, a new fraud vector, or a regulatory change to dispute
processing---the nuisance estimates may be stale and the STR's
correction unreliable until re-estimation. Formal treatment of
time-varying nuisance functions is an extension that could draw
on the adaptive or online semiparametric estimation literature.

\subsection{Non-monotone missingness}

The monotone missingness assumption requires that declined
transactions have permanently unobserved outcomes. In practice,
some declined transactions may receive labels through other
channels: network-level fraud detection systems, law enforcement
investigations, or cardholder reports of attempted fraud. If such
labels are available, they can be incorporated by treating them
as a separate observation channel with its own propensity model,
or by restricting the STR to the monotone subsample and using
the non-monotone labels as auxiliary validation data.

\subsection{Error propagation to downstream models}

The pseudo-labels $\widehat y_t$ are noisy estimates of
$f_0(X_t)$, not exact labels. A downstream model trained on
these pseudo-labels inherits two sources of error: the STR's
estimation error in the pseudo-outcomes, and the downstream
model's own approximation error. If the downstream model class
$\cG$ is well-specified (i.e., $f_0\in\cG$), the STR's
debiasing is preserved. If $\cG$ is misspecified, the downstream
model may partially reintroduce selection bias by fitting
systematic patterns in the pseudo-label noise. In practice, using
a flexible model class (e.g., gradient-boosted trees with
regularization) mitigates this risk.

\subsection{Diagnostics and validation}
\label{sec:diagnostics}

Since ground-truth labels are unavailable by construction, direct
validation of the STR is not possible. Several indirect
diagnostics are informative:
\begin{itemize}
\item \textbf{Covariate balance.} After reweighting by the
STR's inverse-propensity weights, the distribution of
covariates in the observed subsample should approximate
the population distribution. Standard balance diagnostics
(e.g., standardized mean differences) can be applied.
\item \textbf{Propensity overlap.} Plot the estimated
propensities $\widehat e$, $\widehat r$, $\widehat p$ and
check that they are bounded away from zero. Regions with
near-zero propensities indicate positivity violations
where the STR is unreliable.
\item \textbf{Sensitivity sweep.} Vary the sensitivity
parameters $\Gamma_A$, $\Gamma_R$, $\varepsilon_{10}$,
$\varepsilon_{01}$ and check whether the corrected fraud
rate estimate is stable. Large sensitivity to small
parameter changes indicates fragility.
\item \textbf{Cross-validation of nuisance models.} Evaluate
the predictive accuracy of each nuisance model
(authorization, reporting, delay) on held-out data. Poorly
estimated nuisance models produce unreliable corrections.
\item \textbf{Maturity-window stability.} Estimate $\Psi$
using different maturity windows $\Delta^*$. If the STR is
working correctly, the estimate should be approximately
stable across windows (because the delay correction
accounts for the maturity difference). Instability across
windows suggests delay-model misspecification.
\end{itemize}

\subsection{Why MSE rather than log loss}

The STR estimates a population parameter ($\Psi$) and a
conditional probability function ($f_0$). MSE is the natural
loss for parameter estimation, and the semiparametric efficiency
theory is built around $L^2$ risk. For the downstream classifier
trained on pseudo-labels, log loss or other proper scoring rules
are appropriate evaluation criteria. Both MSE (Brier score) and
log loss are proper scoring rules minimized by $f_0$, so the
STR's pseudo-labels are optimal targets under either criterion.
The choice between them affects the downstream model's training
dynamics but not the consistency of the label-recovery step.

\subsection{Why third-party fraud is the clean case}

This paper restricts $Y^*$ to unauthorized third-party fraud.
This is the cleanest setting because the observation mechanism is
institutional: authorization, issuer reporting, and delay
maturity are driven by operational processes and policies rather
than by the strategic behavior of the fraud perpetrator.

First-party misuse is fundamentally different. The cardholder
\emph{is} the fraudster, and the label is generated by the
fraudster's own strategic dispute behavior. The reporting
propensity for first-party fraud is therefore governed by the
cardholder's incentives in the dispute game, not by issuer
operational capacity. Properly modeling this requires an economic
framework for strategic dispute behavior, which is outside the
scope of the present paper.

\subsection{Future work}
\label{sec:future}

The extensions described below are the subject of companion
papers in a broader research program on the theory of fraud
detection in payment networks.

\paragraph{Economic foundations of the observation mechanism.}
The reporting propensity $r_0$ is treated here as a fixed
statistical quantity. In a payment network, reporting behavior is
shaped by economic incentives: interchange economics,
dispute-processing costs, recovery probabilities, and
network-imposed quality requirements. A natural next step is to
embed the label-recovery problem within a multi-party economic
model where issuers, acquirers, and the network are strategic
agents. In such a framework, $r_0$ becomes an endogenous
equilibrium object, and the network can potentially improve label
quality by redesigning incentive structures---lowering the
information-theoretic floor identified in
\cite{dhama2026limits} rather than merely correcting for it.

\paragraph{Multi-actor observation pipeline}

The three-gate model (authorization, reporting, delay)
captures the observation mechanism from the
\emph{network's} perspective.  In practice, the
observation pipeline involves additional actors whose
censorship decisions occur upstream or downstream of the
network's view.  Merchants and payment service providers
(PSPs) may pre-screen transactions before submission to
the network, effectively introducing a pre-authorization
gate $S$ with propensity $s_0$ that depends on features
private to the merchant (device fingerprint, on-site
behavioral signals, shipping address).  PSPs may resolve
disputes internally through buyer-protection programs,
reducing the effective reporting rate below the
issuer-level $r_0$.  Issuer processors may introduce
additional authorization screening or reporting delays
that are not distinguishable from the issuer's own
decisions in the network's data.

Under the framework developed in this paper, these
additional actors are absorbed into the composite
propensities: $e_0$ captures all pre-observation
censorship (merchant, PSP, processor, and issuer
authorization combined), and $r_0$ captures all
reporting-stage censorship (PSP dispute resolution,
processor reporting infrastructure, and issuer reporting
combined).  The STR remains valid under this composite
interpretation, provided the ignorability conditions hold
with respect to the network-observable history $H_0$,
$H_1$, $H_2$.  When upstream actors' decisions depend on
features unobservable to the network, the sensitivity
bounds of Section~\ref{sec:sensitivity} apply, with the
sensitivity parameters $\Gamma_A$ and $\Gamma_R$
absorbing the confounding from private information.

\paragraph{Fraud-type decomposition.}
Third-party fraud, first-party misuse, and scam-induced fraud
have fundamentally different data-generating processes,
observation mechanisms, and optimal interventions. First-party
fraud is particularly challenging because the fraudster
\emph{is} the cardholder: the label is generated by the
fraudster's own strategic dispute behavior, not by institutional
reporting. This means the reporting propensity for first-party
fraud is governed by the cardholder's incentives in the dispute
game, not by issuer operational capacity. Properly modeling this
requires an economic framework for strategic dispute
behavior---which is why the type-decomposition problem depends on
the economic foundations described above. Once those foundations
are in place, the STR developed here can be extended to a
latent-type mixture model with type-specific propensity functions
$r_0^{(k)}(x,i)$ for each fraud type $k$, and the
identification assumptions can be stated in type-specific form.

\paragraph{Authorization as economic optimization.}
The authorization ignorability assumption is most defensible when
$H_0$ includes all signals used by the authorization rule. The
companion paper models authorization as a Bayesian economic
decision. Formalizing this authorization game would allow the
ignorability assumption to be replaced by a structural model,
potentially strengthening identification and enabling policy
counterfactuals.

\paragraph{Label corruption as strategic behavior.}
The corruption correction treats $\varepsilon_{01}$ and
$\varepsilon_{10}$ as fixed parameters. In practice, first-party
fraudsters strategically generate false positive labels through
the dispute process, and the corruption rate is itself an
equilibrium outcome. A richer model would replace the static
noise channel with a game-theoretic model of dispute behavior.

\paragraph{Richer delay modeling.}
The binary maturity gate could be replaced by a competing-risks
model \cite{finegray1999} that distinguishes ``label arrives
within window'' from ``label never arrives,'' recovering
information from partially matured transactions.

\section{Conclusion}
\label{sec:conclusion}

This paper developed a sequential triply robust estimator for
causal label recovery in card payment networks. Fraud labels pass
through authorization, reporting, and delay maturity, and are
subject to corruption even when observed. The STR composes three
stage-wise augmented corrections with a noise-correction layer,
addressing all four structural impairments identified in the
companion paper.

We proved that the na\"ive observed-label estimator has a
structural selection bias that does not vanish with sample size,
and that the STR dominates it in mean squared error for all
sufficiently large samples. The improvement is largest in exactly
the settings where the companion paper's lower bound is most
severe: high censorship, sparse reporting, heterogeneous delay,
and noisy labels.

The corruption-corrected efficiency bound contains all four
impairment factors from the companion paper's lower bound:
authorization censorship, reporting censorship, delay maturity,
and label corruption. This establishes a complete correspondence
between the information-theoretic floor and the achievable
estimation quality: the information environment sets the floor,
and the STR achieves the best possible label recovery given that
floor.

Delay was modeled as a conditional, issuer-heterogeneous
selection gate, inheriting the selective-maturity analysis from
the companion paper. The Jensen-type penalty on delay
heterogeneity reappears in the STR's efficiency bound,
confirming that conditional delay structure matters not only for
detection limits but also for label recovery.

Issuer-level shrinkage was introduced as a practical
stabilization layer for heterogeneous networks. Shrinkage
regularizes the issuer-indexed nuisance functions---including
delay propensities---so that the sequential estimator can be
used in finite samples without being destabilized by low-volume
issuers.

The resulting framework separates offline label reconstruction
from online fraud scoring. The STR produces corrected
pseudo-labels for every historical transaction---including
declined transactions and unreported fraud---and the production
model trains on those labels. This is the operational realization
of the companion paper's investment-priority inversion: fix the
labels first, and the models will follow.

\appendix

\section{Efficiency Bound Expansion}
\label{app:efficiency}

In the collapsed case ($\mu_0=\mu_1=\mu_2=f_0$) with corruption
correction, the score is
\[
\phi^{\mathrm{corr}}
= f_0(H_0)
+ \frac{O}{e_0r_0p_0}
(\widetilde Y^{\mathrm{corr}}-f_0(H_0)).
\]

Compute $\Var(\phi^{\mathrm{corr}})
=\E[(\phi^{\mathrm{corr}})^2]-\Psi^2$. Conditioning on $H_0$:
\begin{align*}
\E[(\phi^{\mathrm{corr}})^2\mid H_0]
&= f_0^2
+ 2f_0\E\!\left[
\frac{O(\widetilde Y^{\mathrm{corr}}-f_0)}{e_0r_0p_0}
\mid H_0\right]\\
&\quad+ \E\!\left[
\frac{O(\widetilde Y^{\mathrm{corr}}-f_0)^2}{(e_0r_0p_0)^2}
\mid H_0\right].
\end{align*}

The cross term is zero:
\[
2f_0\cdot\frac{e_0r_0p_0(\E[Y^*\mid H_0]-f_0)}{e_0r_0p_0}
= 0.
\]

For the squared term, since
$\E[\widetilde Y^{\mathrm{corr}}\mid O=1,H_0]=f_0(H_0)$:
\[
\E[(\widetilde Y^{\mathrm{corr}}-f_0)^2\mid O=1,H_0]
= \frac{f_0(1-f_0)}
{(1-\varepsilon_{10}-\varepsilon_{01})^2}.
\]

Therefore
\[
\E[(\phi^{\mathrm{corr}})^2\mid H_0]
= f_0^2
+ \frac{f_0(1-f_0)}
{e_0r_0p_0(1-\varepsilon_{10}-\varepsilon_{01})^2}.
\]

Taking expectations and subtracting $\Psi^2$:
\[
\sigma_{\mathrm{eff}}^2
= \E\!\left[
\frac{f_0(1-f_0)}
{e_0r_0p_0(1-\varepsilon_{10}-\varepsilon_{01})^2}
\right]
+ \Var(f_0(H_0)).
\]

Since $f_0(1-f_0)=\Var(Y^*\mid H_0)$, this gives
equation~\eqref{eq:effbound_expanded}.


\section{Sensitivity Analysis Derivation}
\label{app:sensitivity}

\paragraph{Authorization sensitivity.}
Under the exponential tilt model
\[
\Prob(A=1\mid H_0,Y^*)
\propto \Prob(A=1\mid H_0)\exp(\alpha Y^*),
\]
with sensitivity parameter $\Gamma_A=e^\alpha$, the tilted
authorization propensity for a transaction with $Y^*=1$ is
\[
e_0^\alpha(H_0)
= \frac{e_0(H_0)\Gamma_A}
{1-e_0(H_0)+e_0(H_0)\Gamma_A}.
\]

The STR, which uses $e_0(H_0)$ instead of $e_0^\alpha(H_0)$,
incurs a bias equal to
\begin{align*}
\mathrm{Bias}
&= \E\!\left[
f_0(H_0)
\left(\frac{e_0(H_0)}{e_0^\alpha(H_0)}-1\right)
\right]\\
&= \E\!\left[
f_0(H_0)
\left(
\frac{1-e_0(H_0)+e_0(H_0)\Gamma_A}
{\Gamma_A}-1
\right)
\right]\\
&= \E\!\left[
f_0(H_0)\cdot
\frac{(1-e_0(H_0))(\Gamma_A-1)}{\Gamma_A}
\cdot\frac{1}{\Gamma_A}
\right].
\end{align*}

Taking absolute values and bounding:
\begin{equation}
\label{eq:auth_sens_full}
|\mathrm{Bias}|
\;\le\;
\frac{\Gamma_A-1}{\Gamma_A}
\cdot
\E\!\left[
\frac{f_0(H_0)(1-e_0(H_0))}{e_0(H_0)}
\right].
\end{equation}

For $\Gamma_A=1.5$: the bound is $\frac{1}{3}$ of the
expectation term. For $\Gamma_A=2$: the bound is $\frac{1}{2}$.
In both cases, if $e_0$ is not too small and $f_0$ is moderate
(typical payment network conditions), the residual bias is
substantially smaller than the na\"ive estimator's structural
bias.

\paragraph{Reporting sensitivity.}
The analogous analysis for reporting ignorability violations uses
\[
\Prob(R=1\mid A=1,H_1,Y^*)
\propto \Prob(R=1\mid A=1,H_1)\exp(\beta Y^*),
\]
with $\Gamma_R=e^\beta$. The resulting bias bound is
\begin{equation}
\label{eq:rep_sens_full}
|\mathrm{Bias}|
\;\le\;
\frac{\Gamma_R-1}{\Gamma_R}
\cdot
\E\!\left[
\frac{f_0(H_0)(1-r_0(H_1))}{e_0(H_0)r_0(H_1)}
\right].
\end{equation}

Note that the reporting sensitivity bound involves the product
$e_0 r_0$ in the denominator, reflecting the fact that
reporting-stage violations are amplified by authorization
censorship. In networks with aggressive authorization
(small $e_0$), reporting-ignorability violations are more
consequential.

\paragraph{Joint sensitivity.}
When both authorization and reporting ignorability are
approximate, the total bias is bounded by the sum of the two
stage-specific bounds plus a second-order interaction term:
\begin{align}
|\mathrm{Bias}_{\mathrm{total}}|
&\le
\frac{\Gamma_A-1}{\Gamma_A}
\E\!\left[\frac{f_0(1-e_0)}{e_0}\right]
\notag\\
&\quad+
\frac{\Gamma_R-1}{\Gamma_R}
\E\!\left[\frac{f_0(1-r_0)}{e_0 r_0}\right]
\notag\\
&\quad+
O\!\left(
\frac{(\Gamma_A-1)(\Gamma_R-1)}{\Gamma_A\Gamma_R}
\right).
\label{eq:joint_sens}
\end{align}

\section{Notation}
\label{app:notation}

\begin{center}
\small
\begin{longtable}{p{3.8cm} p{9.5cm}}
\toprule
\textbf{Symbol} & \textbf{Meaning} \\
\midrule
\endfirsthead
\toprule
\textbf{Symbol} & \textbf{Meaning} \\
\midrule
\endhead
\bottomrule
\endfoot
\multicolumn{2}{l}{\textit{Latent and observed variables}} \\[3pt]
$Y^*$ & Latent true fraud indicator ($1$=fraud) \\
$A$ & Authorization indicator ($1$=approved) \\
$R$ & Reporting indicator ($1$=fraud reported) \\
$M$ & Maturity indicator ($1$=label arrived by $T$) \\
$O = ARM$ & Observed-label indicator \\
$\tilde Y$ & Observed label (when $O=1$) \\
$\tilde Y^{\mathrm{corr}}$ & Noise-corrected label (eq.~(21)) \\
$\tau$ & Label-arrival delay (random) \\
$\Delta = T - t$ & Available maturity window \\
$X \in \mathcal{X}$ & Transaction features \\
$I \in \mathcal{I}$ & Issuer identity \\[6pt]

\multicolumn{2}{l}{\textit{Stage histories and information sets}} \\[3pt]
$H_0 = (X, I, \Delta)$ & Pre-authorization history (Def.~5) \\
$H_1 = (X, I, \Delta, W_1)$ & Post-authorization history (Def.~5) \\
$H_2 = (X, I, \Delta, W_1, W_2)$ & Post-reporting history (Def.~5) \\
$W_1$ & Post-authorization signals (e.g., 3DS outcomes) \\
$W_2$ & Post-reporting signals (e.g., investigation outcomes) \\[6pt]

\multicolumn{2}{l}{\textit{Stage-specific propensities}} \\[3pt]
$e_0(H_0)$ & Authorization propensity (eq.~(5)) \\
$r_0(H_1)$ & Reporting propensity (eq.~(6)) \\
$p_0(H_2)$ & Delay maturity propensity (eq.~(7)) \\
$q_0 = e_0\,r_0\,p_0$ & Total observation propensity (eq.~(8)) \\[6pt]

\multicolumn{2}{l}{\textit{Corruption parameters}} \\[3pt]
$\varepsilon_{10}(H_2)$ & False-negative corruption rate (eq.~(18)) \\
$\varepsilon_{01}(H_2)$ & False-positive corruption rate (eq.~(19)) \\[6pt]

\multicolumn{2}{l}{\textit{Nested outcome regressions}} \\[3pt]
$\mu_2(H_2)$ & Inner outcome regression (eq.~(9)) \\
$\mu_1(H_1)$ & Middle outcome regression (eq.~(10)) \\
$\mu_0(H_0)$ & Outer outcome regression (eq.~(11)) \\
$f_0(x) = \mathbb{E}[Y^* \mid X=x]$ & Conditional fraud probability (eq.~(4)) \\[6pt]

\multicolumn{2}{l}{\textit{Estimator and score}} \\[3pt]
$\varphi^{\mathrm{corr}}$ & Corruption-corrected sequential score (eq.~(22)) \\
$\hat\Psi^{\mathrm{corr}}_{\mathrm{STR}}$ & Corruption-corrected STR (eq.~(23)) \\
$\boldsymbol{\eta} = (e,r,p,\mu_0,\mu_1,\mu_2)$ & Nuisance function collection \\[6pt]

\multicolumn{2}{l}{\textit{Efficiency and concentration}} \\[3pt]
$\sigma^2_{\mathrm{eff}}$ & Semiparametric efficiency bound (eq.~(26)) \\
$\hat\sigma^2_{\mathrm{STR}}$ & Plug-in variance estimator (eq.~(29)) \\
$B = 1/(\underline{e}\,\underline{r}\,\underline{p}\,(1-\bar\varepsilon_{10}-\bar\varepsilon_{01}))$ & Maximum inverse-propensity weight (Thm.~35) \\
$n^*(\epsilon,\alpha)$ & Critical sample size (Cor.~36) \\[6pt]

\multicolumn{2}{l}{\textit{Shrinkage}} \\[3pt]
$\lambda_i$ & EB shrinkage weight for issuer $i$ (eq.~(34)) \\
$\hat r^{\mathrm{EB}}_i$ & Shrinkage-regularized reporting propensity (eq.~(33)) \\
$\hat p^{\mathrm{EB}}_i$ & Shrinkage-regularized delay propensity (eq.~(35)) \\
$\hat e^{\mathrm{EB}}_i$ & Shrinkage-regularized authorization propensity (eq.~(36)) \\
$\sigma^2_B$ & Between-issuer variance (eq.~(34)) \\[6pt]

\multicolumn{2}{l}{\textit{Pseudo-labels and downstream model}} \\[3pt]
$U_t$ & STR pseudo-outcome for transaction $t$ (eq.~(38)) \\
$\hat y_t$ & Corrected pseudo-label (eq.~(40)) \\
$\mathcal{G}$ & Function class for downstream regression (eq.~(39)) \\[6pt]

\multicolumn{2}{l}{\textit{Sensitivity analysis}} \\[3pt]
$\Gamma_A$ & Sensitivity parameter for authorization ignorability (\S11.2) \\
$\Gamma_R$ & Sensitivity parameter for reporting ignorability (\S11.2) \\[6pt]

\multicolumn{2}{l}{\textit{Network-level parameters (Def.~45)}} \\[3pt]
$\bar e = \mathbb{E}[e_0(H_0)]$ & Average authorization rate \\
$\bar r = \mathbb{E}[r_0(H_1)]$ & Average reporting rate \\
$\bar p(\Delta) = \mathbb{E}[p_0(H_2, \Delta)]$ & Average maturity rate at delay $\Delta$ (eq.~(41)) \\
$\bar q(\Delta) = \bar e\,\bar r\,\bar p(\Delta)$ & Average observation rate \\
$\gamma = (1-\varepsilon_{10}-\varepsilon_{01})^2$ & Corruption penalty \\
$\eta = (1+\mathrm{CV}^2_q)(1+\rho_{f,q}\,\mathrm{CV}_f\,\mathrm{CV}_{1/q})$ & Heterogeneity penalty \\
$\mathrm{CV}^2_q \approx \mathrm{CV}^2_e + \mathrm{CV}^2_r + \mathrm{CV}^2_p$ & Squared coefficient of variation of observation propensity \\
$\rho_{f,q}$ & Correlation between $f_0$ and $1/q_0$ \\[6pt]

\multicolumn{2}{l}{\textit{Optimal training delay (\S10.6)}} \\[3pt]
$\bar p(\delta) = \bar p_\infty(1 - e^{-(\lambda\delta)^\beta})$ & Maturity curve, Weibull parameterization (Def.~43, eq.~(42)) \\
$\lambda$ & Label arrival rate parameter (eq.~(42)) \\
$\beta$ & Maturity curve shape parameter (eq.~(42)) \\
$\bar p_\infty$ & Asymptotic maturity rate (Def.~43) \\
$\nu$ & Fraud distribution drift rate (Def.~44, eq.~(43)) \\
$f_t(x) = \Pr(Y^*=1 \mid X=x, \text{time}=t)$ & Time-varying fraud probability (Def.~44) \\
$\mathcal{E}_{\mathrm{stat}}(\Delta)$ & Statistical error at maturity delay $\Delta$ (eq.~(45)) \\
$\mathcal{E}_{\mathrm{drift}}(\Delta)$ & Staleness error at maturity delay $\Delta$ (eq.~(46)) \\
$\mathcal{E}_{\mathrm{irr}}$ & Irreducible (Bayes) error \\
$\mathcal{E}_{\mathrm{total}}(\Delta)$ & Total prediction error (eq.~(44)) \\
$C_1 = \pi(1-\pi)\eta\,/\,(n\bar e\,\bar r\,\gamma)$ & Label-quality coefficient (eq.~(47)) \\
$C_2 = \nu$ & Drift cost coefficient (eq.~(47)) \\
$\Delta^*_{\mathrm{STR}}$ & Optimal training delay under STR (Thm.~46, eq.~(57)) \\
$\Delta^*_{\mathrm{naive}}$ & Optimal training delay under na\"ive estimator \\
$B(\Delta)$ & Selection bias of na\"ive estimator at maturity $\Delta$ (eq.~(55)) \\
$\zeta = \mathbb{E}[Y^* \mid O=0] - \mathbb{E}[Y^* \mid O=1]$ & Selection contrast (eq.~(55)) \\
$\epsilon_B$ & Tolerable bias threshold (eq.~(60)) \\
$V_{\mathrm{STR}} = V_{\mathrm{accuracy}} + V_{\mathrm{freshness}}$ & Investment value decomposition of STR (Cor.~52, eq.~(61)) \\[6pt]

\multicolumn{2}{l}{\textit{Target estimands}} \\[3pt]
$\Psi = \mathbb{E}[Y^*]$ & Population fraud rate (eq.~(3)) \\
$\pi = \Psi$ & Population fraud rate (used in \S10.6) \\
\end{longtable}
\end{center}

\end{document}